\title{Learning Optimal Representations with the Decodable Information Bottleneck}
\author{%
  Yann Dubois \\
  Facebook AI Research\\
  \texttt{yannd@fb.com} \\
  \And
  Douwe Kiela \\
  Facebook AI Research\\
  \texttt{dkiela@fb.com} \\
  \And
  David J. Schwab \\
  Facebook AI Research\\
  CUNY Graduate Center\\
  \texttt{dschwab@fb.com} \\
  \And
  Ramakrishna Vedantam \\
  Facebook AI Research\\
  \texttt{ramav@fb.com} \\
}
\begin{document}

\maketitle


\begin{abstract}
We address the question of characterizing and finding optimal representations for supervised learning. 
%
%
%
Traditionally, this question has been tackled using the Information Bottleneck, which compresses the inputs while retaining information about the targets, in a decoder-agnostic fashion. In machine learning, however, our goal is not compression but rather generalization, which is intimately linked to the predictive family or decoder of interest (e.g. linear classifier). 
We propose the Decodable Information Bottleneck (DIB) that considers information retention and compression from the perspective of the desired predictive family.
As a result, DIB gives rise to representations that are optimal in terms of expected test performance and can be estimated with guarantees.  
Empirically, we show that the framework can be used to enforce a small generalization gap on downstream classifiers and to predict the generalization ability of neural networks.
\end{abstract}

\ydnote{review the abstract to be more specific on contributions ?}
\section{Introduction}
\ydnote{Review all appendices so that there are references in order}
\label{sec:introduction}
A fundamental choice in supervised machine learning (ML) centers around the data representation from which to perform predictions.
While classical ML uses predefined encodings of the data~\citep{salton1986bow,cortes1995svm,leung2001texton,Lowe2004sift,RahimiR07}
recent progress~\citep{BengioCV13,ZhongWD16} has been driven by learning such representations.
A natural question, then, is what characterizes an ``optimal'' representation --- in terms of generalization --- and how to learn it. 

The standard framework for studying generalization, statistical learning theory~\cite{shalevShwartz2014uml}, usually assumes a fixed dataset/representation, and aims to restrict the predictive functional family $\V$ (e.g. linear classifiers) such that empirical risk minimizers (ERMs) generalize.\footnote{Rather than defining learning in terms of deterministic hypotheses $h \in \mathcal{H}$, we consider the more general \cite{gressmann2018probabilistic} case of probabilistic predictors $\V$, in order to make a link with information theory.} 
Here, we turn the problem on its head: we ask whether it is possible to enforce generalization by
changing the representation of the inputs
such that ERMs in $\V$ perform well, irrespective of the complexity of $\V$.

A common approach to representation learning consists of jointly training the classifier and representation by minimizing the empirical risk (which we call J-ERM).
By only considering empirical risk, J-ERM is optimal in the infinite data limit (consistent; \cite{duchi2018multiclass}), but the resulting representations do not favor classifiers that will generalize from finite samples.
In contrast, the information bottleneck (IB) method \cite{tishby2000information} aims for representations that have \textit{minimal} information about the inputs to avoid over-fitting, while having \textit{sufficient} information about the labels \cite{shamir2010learning}. 
While conceptually appealing and used in a range of applications \citep{slonim2000document,shwartz2017opening,farajiparvar2018information,goyal2019infobot}, IB is based on Shannon's mutual information, which was developed for communication theory \cite{shannon1948mathematical} and does not take into account the predictive family $\V$ of interest. 
As a result, IB's sufficiency requirement does not ensure the existence of a predictor $f \in \V$ that can perform well using the learned representation; \footnote{As an illustration, IB invariance to bijections suggests that a non-linearly entangled representation is as good as a linearly separable one if there is a bijection between them, even when classifying using a logistic regression.} while its minimality term is difficult to estimate, making IB impractical without resorting to approximations \cite{ChalkMT16,alemi2016deep,AchilleS18dropout,kolchinsky2019nonlinear}.

We resolve these issues by introducing the \textit{decodable information bottleneck} (DIB) objective, which recovers minimal sufficient representations relative to a predictive family $\V$. 
Intuitively, it ensures that classifiers in $\V$ can predict labels (\vsufficiency) but cannot distinguish examples with the same label (\vminimality), as illustrated in 
\cref{fig:schema_rep_med}.
\begin{figure}
\centering
\begin{subfigure}{0.28\textwidth}
 \frame{\includegraphics[width=\textwidth]{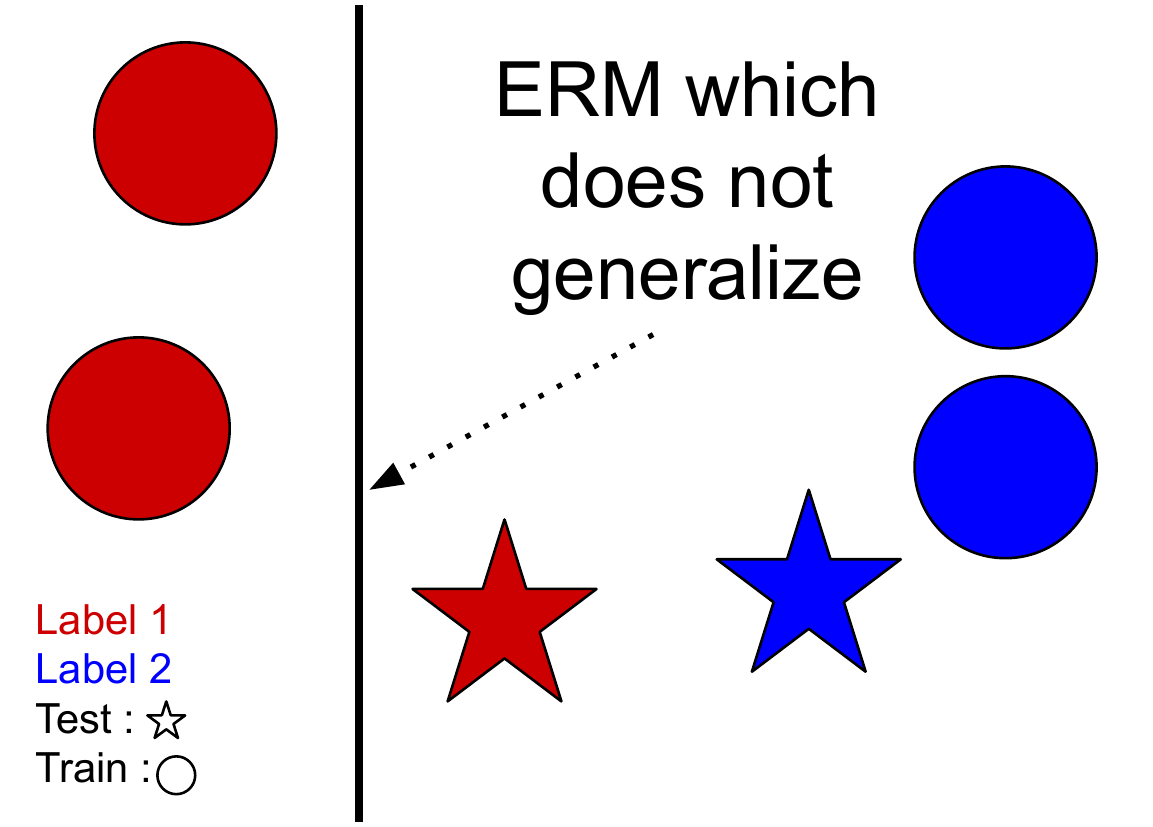}} 
  \caption{Schematic J-ERM}
 \label{fig:joint_erm_med}
 \end{subfigure}
 \begin{subfigure}{0.28\textwidth}
 \frame{\includegraphics[width=\textwidth]{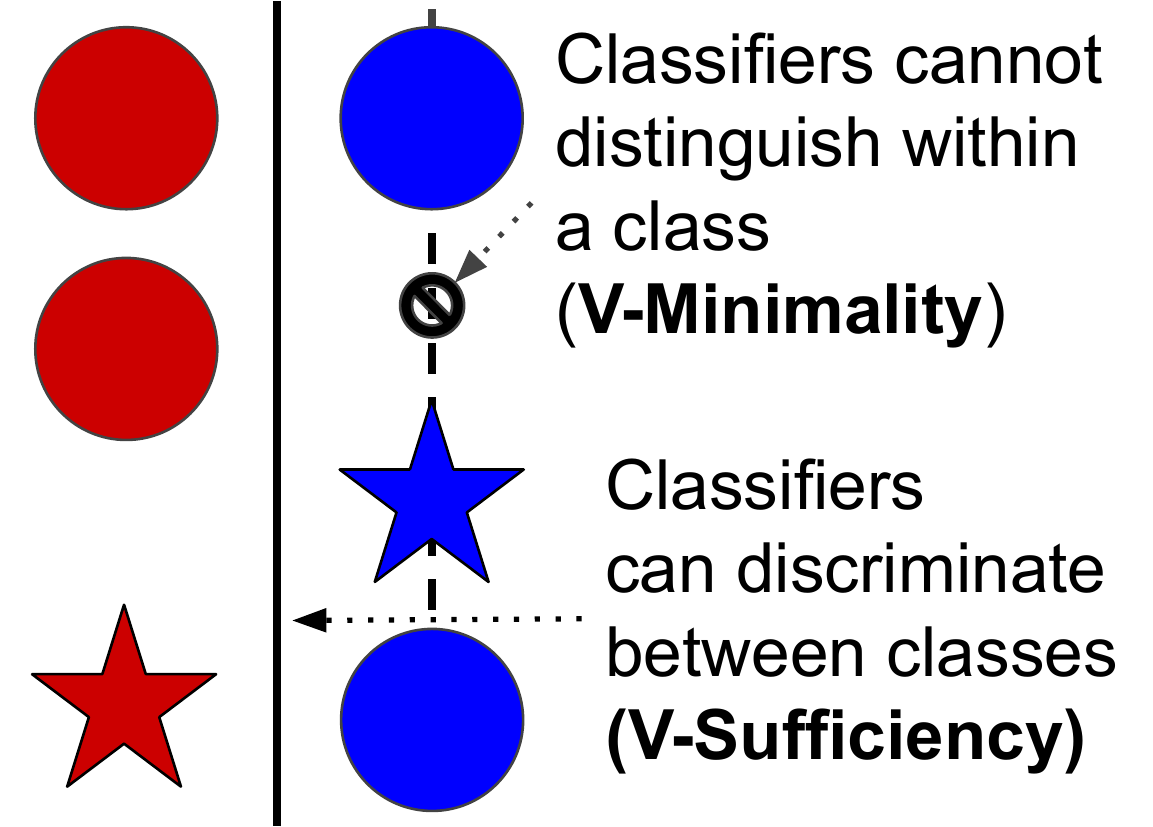}}
  \caption{Schematic DIB}
 \label{fig:Qmin_Qsuff_rep_med}
 \end{subfigure}
 \hfill
  \begin{subfigure}{0.20\textwidth}
\includegraphics[width=\textwidth]{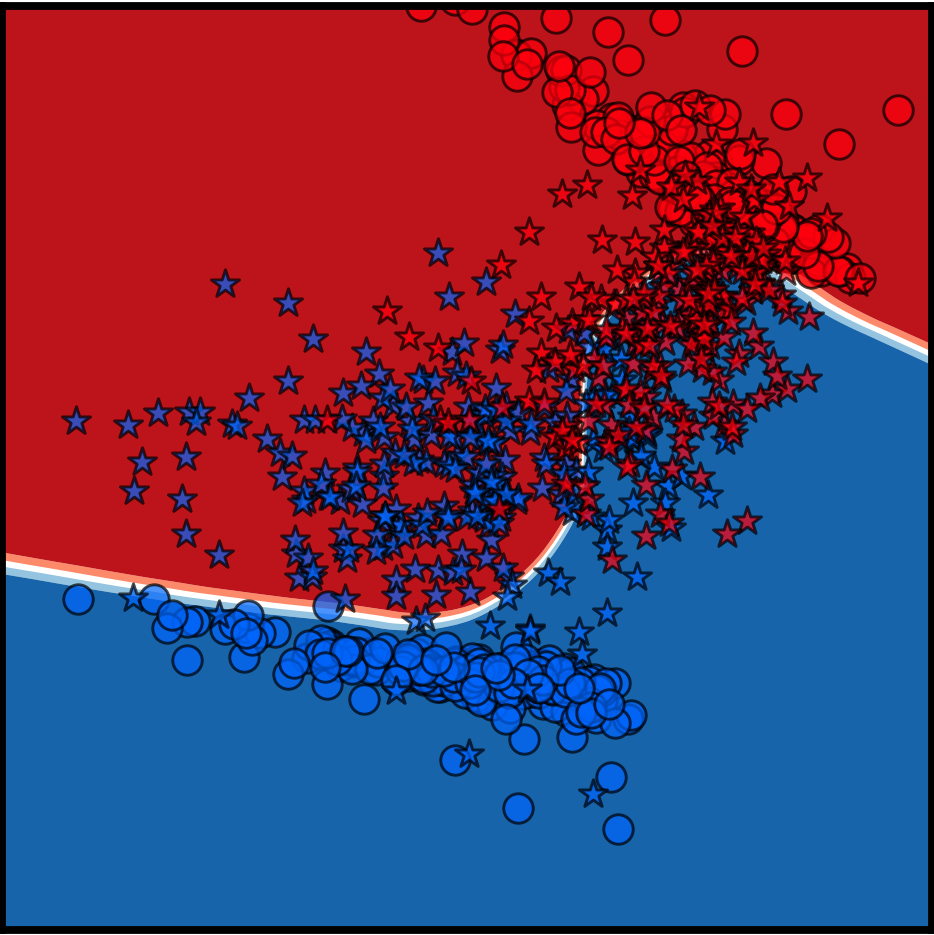} 
  \caption{Empirical J-ERM}
 \label{fig:qmin2d_suff}
 \end{subfigure}
   \begin{subfigure}{0.20\textwidth}
\includegraphics[width=\textwidth]{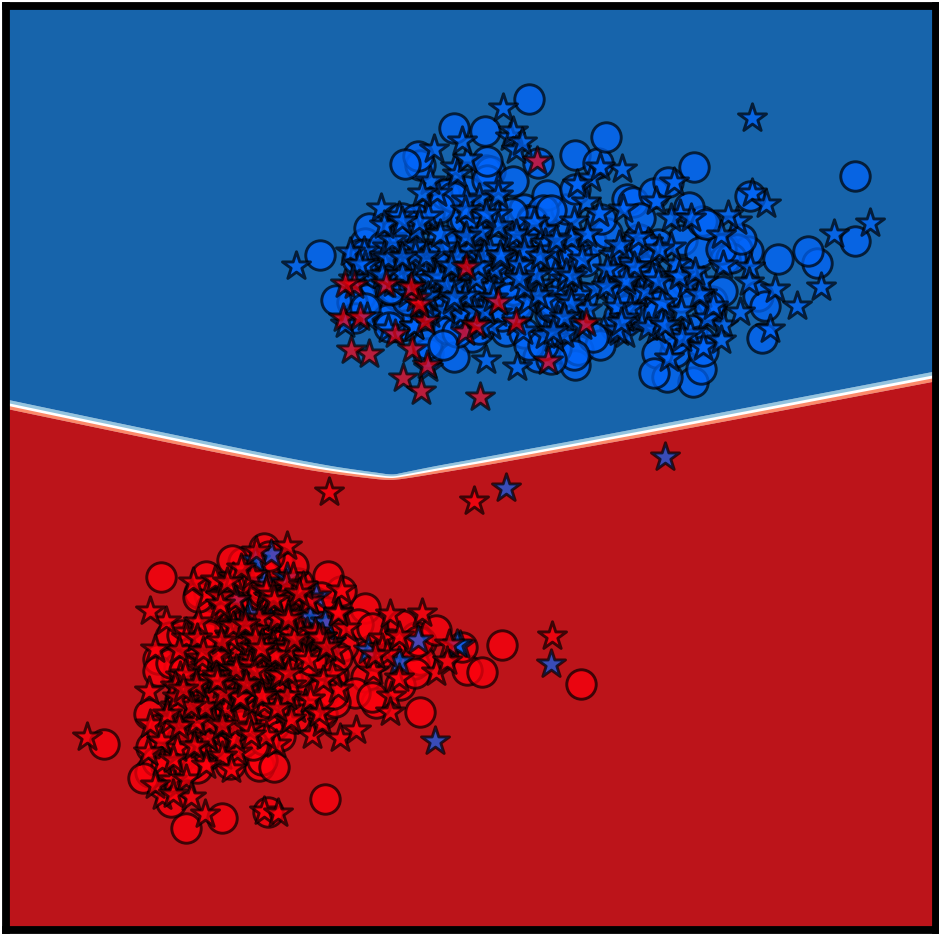} 
  \caption{Empirical DIB}
 \label{fig:qmin2d_min}
 \end{subfigure}
\caption{
\textbf{Left two plots}: illustration of representations learned by joint empirical risk minimization (J-ERM) and our decodable information bottleneck (DIB), for classifiers $\V$ with linear vertical decision boundaries. 
(a) For representations learned by J-ERM, there may exist an ERM that does not generalize;
(b) Representations learned by DIB ensure that any ERM will generalize to the test set (\vminimality)
.
\textbf{Right two plots}: 2D representations encoded by an multi-layer perceptron (MLP) for odd-even classification of 200 MNIST \cite{DBLP:journals/neco/LeCunBDHHHJ89} examples.
The white decision boundary corresponds to a classifer which was trained to perform well on train but bad on test (see \cref{sec:qmin}).
(c) J-ERM allows such classifiers that cannot generalize;
(d) DIB ensures that there are no such classifiers in $\V$.
%
}
\label{fig:schema_rep_med}
\end{figure}
Our main contributions can be summarized as follows:
\begin{itemize}
\item We generalize notions of minimality and sufficiency to consider predictors $\V$ of interest.
\item We prove that such representations are optimal --- every downstream ERM in $\V$ reaches the best achievable test performance --- and can be learned with guarantees using DIB.
\item We experimentally demonstrate that using our representations can increase the performance and robustness of downstream classifiers in average and worst case scenarios.
\item We show that the generalization ability of a neural network is highly correlated with the degree of $\V$-minimality of its hidden representations in a wide range of settings (562 models).
\end{itemize}
\section{Problem Statement and Background}
\label{sec:background}
Throughout this paper, we provide a more informal presentation in the main body, and refer the reader to the appendices for more precise statements. For details about our notation, see \cref{appx:notation}.

\subsection{Problem Set-Up: Representation Learning as a Two-Player Game}
\label{sec:formal_prob_state}
Consider a game between Alice, who selects a classifier $f \in \V$, and Bob, who provides her with a representation to improve her performance. We are interested in Bob's optimal choice. Specifically: 
\begin{enumerate}
\item Alice decides \textit{a priori} on: a predictive family $\family{}$, a task of interest that consists of classifying labels $\rv Y$ given inputs $\rv X$, and a score/loss function $S$ measuring the quality of her predictions.
\item Given Alice's selections, Bob trains an encoder $P_{\rv Z \cond \rv X}$ to map inputs $\rv X$ to representations $\rv Z$.
\item Using Bob's encoder and a dataset $\sampleD{}$ of $M$ input-output pairs $(x,y)$, Alice selects a classifier $\hpred{}$ from all ERMs $\hat{\V}(\mathcal{D}) \defeq  \arg \min_{f \in \V}  \eRisk{}$, where $\eRisk{} \defeq \frac{1}{M} \sum_{y,x \in \mathcal{D}} \E{z \sim P_{\rv Z \cond x}}{S(y,f[z])}$ is an estimate of the risk $\Risk{}=\E{\mathcal{D}}{\eRisk{}}$.
\end{enumerate}

Our goals are to: 
\begin{inlinelist}
\item characterize \textit{optimal} representations $\rv Z^*$ that minimize Alice's expected loss  $\mathrm{R}(\hpred{},\rv Z)$;
\item derive an objective $\mathcal{L}$ that can be optimized to approximate the optimal encoder $P_{\rv Z^* \cond \rv X }$.
\end{inlinelist}

We assume that: 
\begin{inlinelist}
\item sample spaces $\mathcal{Y},\mathcal{Z},\mathcal{X}$ are finite;
\item $S(y,f[z])$ is the log loss $-\log f[z](y)$, where $f[z](y)$ approximates $P_{\rv  Y | \rv Z}(y|z)$ as in \citep{xu2020theory};
\item the family $\V$ satisfies mild constraints that hold for practical classifiers such as neural networks.
See \cref{sec:appx_assumptions} for all assumptions.
\end{inlinelist}
\subsection{Sufficiency, Minimality, and the Information Bottleneck (IB)}
\label{sec:min_sif_ib}

We review IB, an information theoretic method for supervised representation learning. 
IB is built upon the intuition that a representation $\rv Z$ should be maximally informative about $\rv Y$ (sufficient), but contain no additional information about $\rv X$ (minimal) to avoid possible over-fitting. 
Specifically, the set of \textbf{sufficient} representations $\suff{}$ and \textbf{minimal sufficient} $\smin{}$ representations 
are defined as:\footnote{As shown in \cref{sec:proof_minsuffstat}, this common \cite{shamir2010learning,achille2018emergence} definition is a generalization of minimal sufficient \textit{statistics}~\cite{fisher1922mathematical} to stochastic statistics $\rv Z = T(\rv X, \epsilon)$ where $\epsilon$ is a source of noise independent of $\rv X$.
}
\begin{equation}
\suff{} \defeq \arg \max_{\rv z'} \MI{ Y}{ z'} 
\quad\text{and}\quad 
\smin{} \coloneqq \arg\min_{\rv Z' \in \suff{}} \MI{ X}{z'} \label{eqn:minsuff}
\end{equation}
%
The IB criterion (to minimize) can then be interpreted \cite{shamir2010learning} as the Lagrangian relaxation of \cref{eqn:minsuff}: 
\begin{equation}\label{eqn:IB}
\ib{}  \defeq  - \MI{Y}{Z} + \beta * \MI{X}{Z}
\end{equation}
Despite its intuitive appeal, IB suffers from the following theoretical and practical issues: 
\begin{inlinelist}
\item \textbf{Lack of optimality} guarantees for $\rv Z \in \smin{}$. Generalization bounds based on $\MI{Z}{X}$ are a step towards such guarantees \cite{shamir2010learning,vera2018role} but current bounds are still vacuous \cite{rodriguez2019information}.
The strong performance of invertible neural networks \cite{ChangMHRBH18,jacobsen2018revnet} also shows that a small $\MI{X}{Z}$ is not required for generalization;
\item $\ib{}$ is \textbf{hard to estimate} with finite samples
\cite{McAllester,shamir2010learning}.
One has to either restrict the considered setting  \cite{tishby2000information,chechik2005information,rey2012meta}, or optimize variational \cite{ChalkMT16,alemi2016deep,AchilleS18dropout} or non-parametric  \cite{kolchinsky2019nonlinear} bounds;
\item $\ib{}$ is \textbf{invariant to bijections} and thus does not favor simple decision boundaries \cite{amjad2019learning} that can be achieved by a $f \in \V$.
\end{inlinelist}

We stipulate that these known issues stem from a common cause: IB uses mutual information, which is agnostic to the predictive family $\V$ of interest. 
To remedy this, we leverage the recently proposed $\V$-information \cite{xu2020theory} to formalize the notion of $\V$-minimal $\V$-sufficient representations.

\subsection{$\V$-information}
\label{sec:v_information}

From a predictive perspective, mutual information $\MI{Y}{Z}$ corresponds to the difference in expected log loss when predicting $\rv Y$ with or without $\rv Z$ using the best possible probabilistic classifier. 
\begin{align}
\MI{Y}{Z} \defeq \op{H}{\rv Y} - \op{H}{\rv Y | \rv Z} &= \op{H}{\rv Y} - \E{z,y \sim P_{\rv Z, \rv Y}}{- \log P_{\rv Y  \cond \rv Z}} \\
&= \op{H}{\rv Y} - \inf_{f \in {\color{burntorange} \mathcal{U}}} \E{z,y \sim P_{\rv Z, \rv Y}}{- \log f[z](y) } & \text{Strict Properness \cite{gneiting2007strictly}}  \label{eqn:properness}
\end{align}
where $\mathcal{U}$ is the collection of all predictors from $\mathcal{Z}$ to distributions over $\mathcal{Y}$, which we call \textit{universal}.
As the optimization in \cref{eqn:properness} is over $\mathcal{U}$, $\MI{Y}{Z}$ measures information that might not be ``usable'' by $f \in \V \subset \universal{}$.
\citepos{xu2020theory} resolve this issue by introducing $\V$-information $\DIF{Y}{Z}$ to only consider the information that can be decoded by a predictors of interest $f \in {\color{blue} \V}$ instead of $f \in {\color{burntorange} \universal{}}$.\footnote{\cite{xu2020theory} also replace $\op{H}{\rv Y}$ with $\CHF{Y}{\varnothing}$.
This requires that any $f \in \V$ can be conditioned on the empty set~$\varnothing$.
We keep $\op{H}{\rv Y}$ for simplicity and show that both are equivalent in our setting (\cref{sec:proof_emptyset}).
}
\begin{equation}\label{eqn:Vinformation}
\DIF{Y}{Z} \defeq \op{H}{\rv Y} - \op{H_{\color{blue} \V}}{\rv Y \cond \rv Z} = \op{H}{\rv Y} - \inf_{f \in {\color{blue} \V}} \E{z,y \sim P_{\rv Z, \rv Y}}{- \log f[z](y) }
\end{equation}
$\DIF{Y}{Z}$ has useful properties, it: recovers $\MI{Y}{Z}$ for $\V=\universal{}$, is non-negative, and is zero when $\rv Z$ is independent of $\rv Y$.
Importantly, $\V$-information is easier to estimate than Shannon's information; 
indeed, it corresponds to estimating the risk ($\CHF{Y}{Z}$) and thus inherits \cite{xu2020theory} probably approximately correct (PAC; \cite{valiant1984theory}) estimation bounds that depend on the (Rademacher \cite{bartlett2002rademacher}) complexity of $\V$.
\section{Methods}
\label{sec:methods}
In this section, we define $\V$-minimal $\V$-sufficient representations, prove that they are optimal in the two-player representation learning game, and discuss how to approximately learn them in practice.
\subsection{$\V$-Sufficiency and Best Achievable Performance}
\label{sec:theory_qsuf}
Let us study Alice's best risk $\min_{f \in \V} \mathrm{R}(f,\rv Z)$ using a representation $\rv Z$. 
This tight lower bound on her performance looks strikingly similar to $\CHF{Y}{Z}$, which is controlled by $\DIF{Y}{z}$ (see \cref{eqn:Vinformation}).
As a result, if Bob maximizes $\DIF{Y}{z}$, he will ensure that Alice can achieve the lowest loss.

\begin{restatable}{definition}{vsuff}\label{def:vsuff} A representation $\rv Z$ is said to be \textbf{$\V$-sufficient} if it
maximizes $\V$-information with the labels.
We denote all such representations as $\Qsuff{} \defeq \arg \max_{\rv z'} \DIF{Y}{z'}$.
\end{restatable}
\begin{restatable}{proposition}{optimalQsuf}\label{proposition:opt_qsuf}
$\rv Z$ is $\V$-sufficient $\iff$  there exists $f^* \in \V$  whose test loss when predicting from $\rv Z$ is the best achievable risk, i.e., $\mathrm{R}(f^*,\rv Z) = \min_{\rv Z} \min_{f \in \V} \Risk{}$. 
\end{restatable}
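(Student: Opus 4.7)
The plan is to directly unfold the definitions and observe that the two statements are essentially a single assertion about $\inf_{f \in \V} \mathrm{R}(f, \rv Z)$, packaged differently. The key identity is obtained by rewriting $\V$-information in terms of risk: since $S$ is the log loss, $\mathrm{R}(f, \rv Z) = \E{z,y \sim P_{\rv Z, \rv Y}}{-\log f[z](y)}$, so comparing with \cref{eqn:Vinformation} gives
\begin{equation*}
\inf_{f \in \V} \mathrm{R}(f, \rv Z) \;=\; \op{H}{\rv Y} - \DIF{Y}{Z}.
\end{equation*}
Because $\op{H}{\rv Y}$ does not depend on $\rv Z$, the set $\Qsuff{} = \arg\max_{\rv Z'} \DIF{Y}{Z'}$ coincides with $\arg\min_{\rv Z'} \inf_{f \in \V} \mathrm{R}(f, \rv Z')$; that is, $\V$-sufficiency is nothing more than ``$\rv Z$ lets some (limit of) classifier in $\V$ attain the best achievable risk over all representations''.

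Next I would promote this ``infimum'' statement to the ``$\exists f^* \in \V$'' statement of the proposition. For the ($\Rightarrow$) direction, I would invoke the assumptions in \cref{sec:appx_assumptions} (finiteness of $\mathcal{Y}, \mathcal{Z}, \mathcal{X}$ together with the mild closure/compactness conditions on $\V$) to conclude that the inner $\inf_{f \in \V} \mathrm{R}(f, \rv Z)$ is attained; so picking $f^* \in \arg\min_{f \in \V} \mathrm{R}(f, \rv Z)$ and using the chain of equalities above yields $\mathrm{R}(f^*, \rv Z) = \min_{\rv Z'} \min_{f \in \V} \mathrm{R}(f, \rv Z')$. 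The ($\Leftarrow$) direction is immediate: if some $f^* \in \V$ realizes the joint minimum when paired with $\rv Z$, then $\inf_{f \in \V} \mathrm{R}(f, \rv Z) \le \mathrm{R}(f^*, \rv Z) = \min_{\rv Z'} \min_{f \in \V} \mathrm{R}(f, \rv Z')$, which (combined with the trivial reverse inequality) forces $\rv Z \in \Qsuff{}$ via the identity above.

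The algebraic content of the proof is essentially one line (strict properness applied to the log loss), so the main obstacle is not technical manipulation but book-keeping around \emph{attainment}: we need both the outer minimum over encoders and the inner minimum over $f \in \V$ to be achieved rather than merely approached. This is where the ``mild constraints'' on $\V$ earn their keep; I would cite the relevant assumption in \cref{sec:appx_assumptions} rather than re-derive it, and flag that without it the proposition would only hold in an approximate, $\varepsilon$-optimal form.
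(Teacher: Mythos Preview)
Your proposal is correct and follows essentially the same route as the paper: the paper packages your key identity $\inf_{f\in\V}\mathrm{R}(f,\rv Z)=\op{H}{\rv Y}-\DIF{Y}{Z}$ as a separate lemma (\cref{lemma:Vent_risk}, proved via the Markov chain $\rv Y - \rv X - \rv Z$), then runs the same one-line $\arg\max \leftrightarrow \arg\min$ conversion, invoking the finite-sample-space assumption for attainment exactly where you flag it. Your treatment of the two $\iff$ directions and of the inf-versus-min issue is, if anything, more explicit than the paper's terse chain of equalities.
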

 Although the previous proposition may seem trivial, it bears important implications, namely that contrary to the sufficiency term of IB one should maximize $\DIF{Y}{Z}$ rather than $\MI{Y}{Z}$ when predictors live in a constrained family $\V$.
Indeed, ensuring sufficient $\MI{Y}{Z}$ does not mean that there is a classifier $f \in \V$ that can ``decode'' that information.
\footnote{Notice that $\DIF{Y}{Z}$ corresponds to the variational lower bound on $\MI{Y}{Z}$ used by \citet{alemi2016deep}.
We view $\DIF{Y}{Z}$ as the correct criterion rather than an estimate of $\MI{Y}{Z}$.}
We prove our claims in \cref{sec:proof_sufficiency}.
%
\subsection{$\V$-minimality and Generalization}
\label{sec:theory_opt_rep}
We have seen that $\V$-sufficiency ensures that Alice \textit{could} achieve the best loss.
In this section, we study what representations Bob should chose to guarantee that Alice's ERMs \textit{will} perform optimally by ensuring that any ERM generalizes beyond the training set.

IB suggests minimizing the information $\MI{Z}{X}$ between the representation $\rv Z$ and inputs $\rv X$ to avoid over-fitting.
We, instead, argue that only the information that can be decoded by $\mathcal{V}$ matters, and would thus like to minimize $\DIF{X}{Z}$.
However, the latter is not defined as $\rv X$ does not generally take value in (t.v.i.) $\mathcal{Y}$, the sample space of $\V$'s co-domain.
For example, in a $32\times 32$ image binary classification, $\mathfam{Y}=\{0,1\}$ but $\mathfam{X}=[0,\dots,256]^{1024}$ so classifiers $f \in \V$ cannot predict $x \in \mathfam{X}$.
To circumvent this, we decompose $\rv X$ into a collection of r.v.s $\rv N$ that t.v.i. $\mathcal{Y}$, so that $\DIF{N}{Z}$ is well defined.
Specifically, let $\rv X_y$,$\rv Z_y$ be ``conditional r.v.'' s.t. $P_{\rv X_y} = P_{\rv X | y}$, $P_{\rv Z_y} = P_{\rv Z | y}$, $P_{\rv X_y,\rv Z_y} = P_{\rv X,\rv Z | y}$. 
We define the $y$ decomposition of $\rv X$ as r.v.s that arise by all possible labelings of $\rv X_y$:\footnote{Such (deterministic) labelings are also called ``random labelings'' \cite{zhang2016understanding} as they are semantically meaningless.}
\begin{equation}\label{eqn:dec}
\decxy{} \defeq \{ \rv N \cond \exists t' : \mathcal{X} \to \mathcal{Y} \ s.t. \ \rv N = t'(\rv X_y)\}
\end{equation}
We can now define the average $\V$-information between $\rv Z$ and the $y$ decompositions of $\rv X$ as:
%
\begin{equation}\label{eqn:DIFxzCy}
  \DIFxzCy{} \defeq \frac{1}{|\mathcal{Y}|} \sum_{y \in \mathcal{Y}} \frac{1}{|\decxy{}|} \sum_{\rv N \in \decxy{}} \DIF{N}{Z_y}
\end{equation}
$\DIFxzCy{}$ essentially measures how well predictors in $\V$ can predict arbitrary labeling
$\rv N \in \decxy{}$
of examples with the same underlying label $y$. 
Replacing the minimality term $\MI{X}{Z}$ by $\DIFxzCy{}$ we get our notion of $\V$-minimal $\V$-sufficient representations.
\begin{restatable}{definition}{vmin}\label{def:vmin}
$\rv Z$ is \textbf{$\V$-minimal} $\V$-sufficient if it is $\V$-sufficient \textit{and} has minimal average $\V$-information with $y$ decompositions of $\rv X$.
We denote all such $\rv Z$ as $\Qsmin{} \defeq \arg \min_{\rv Z \in \Qsuff{}} \DIFxzCy{}$.
\end{restatable}
Intuitively, a representation is $\V$-minimal if no predictor in $\V$ can assign different predictions to examples with the same label.
Consequently, predictors will not be able to distinguish train and test examples and must thus perfectly generalize.
In this case, predictors will perform optimally as there is at least one which does ($\V$-sufficiency; \cref{proposition:opt_qsuf}).
We formalize this intuition in \cref{appx:proof_optimal}:
\begin{restatable}{theorem}{optimalQmin}\label{theo:opt_qmin}
(Informal) Let $\V$ be a predictive family, $\sampleD{}$ a dataset, and assume labels $\rv Y$ are a deterministic function of the inputs $t(\rv x)$.
If $\rv Z \in \Qsmin{}$ be $\V$-minimal $\V$-sufficient, then the expected test loss of any ERM $\hpred{} \in \hat{\V}(\mathcal{D})$ is the best achievable risk, i.e., $\mathrm{R}(\hpred{},\rv Z)= \min_{\rv Z} \min_{f \in \V} \Risk{}$.
\end{restatable}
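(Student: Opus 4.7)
The plan is to combine three ingredients: (i) by \cref{proposition:opt_qsuf}, $\V$-sufficiency already supplies an $f^\star \in \V$ with $\mathrm{R}(f^\star, \rv Z) = \min_{\rv Z'}\min_{f \in \V} \mathrm{R}(f,\rv Z')$; (ii) the minimum of $\DIFxzCy{}$ over $\rv Z' \in \Qsuff{}$ is exactly zero, so $\V$-minimality forces $\DIF{N}{Z_y} = 0$ for every $y$ and every $\rv N \in \decxy{}$; (iii) this zero-decodability collapses the gap between $\eRisk{}$ and $\mathrm{R}$ uniformly over $\V$, so any ERM $\hpred{}$ inherits $f^\star$'s optimality.

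To establish (ii) I would exhibit one $\V$-sufficient representation realizing value zero. Since $\rv Y = t(\rv X)$ is deterministic, a representation $\rv Z^\dagger$ that retains only the label (an embedding of $\rv Y$ into $\mathcal{Z}$, which lies in $\Qsuff{}$ by the closure assumptions of \cref{sec:appx_assumptions}) has a \emph{constant} conditional r.v.\ $\rv Z^\dagger_y$ for each class $y$. Every $f \in \V$ then outputs a single fixed distribution, so by strict properness the infimum of the expected log loss for predicting any $\rv N \in \decxy{}$ equals $\op{H}{\rv N}$, yielding $\DIF{N}{Z^\dagger_y} = 0$ across all $y, \rv N$ and hence value zero for $\DIFxzCy{}$ at $\rv Z^\dagger$. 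Together with non-negativity of $\V$-information, the fact that $\rv Z \in \Qsmin{}$ then forces each per-pair term $\DIF{N}{Z_y}$ to vanish.

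For (iii) I would specialize the quantifier over $t' : \mathcal{X} \to \mathcal{Y}$ in the definition of $\decxy{}$ to the indicator of ``belongs to $\sampleD{}$'' (a valid labeling since $|\mathcal{Y}|\ge 2$ and $\sampleD{}$ is a deterministic subset of $\mathcal{X}$). The vanishing $\DIF{N}{Z_y} = 0$ then forbids any $f \in \V$ from distinguishing, through $\rv Z$, a training example of class $y$ from a test example of class $y$; equivalently, the conditional law of $f[\rv Z]$ given $\rv Y = y$ is the same on $\sampleD{}$ as on the full population. Consequently the per-class empirical and population log losses of every $f \in \V$ agree, so $\eRisk{f, \rv Z} = \mathrm{R}(f, \rv Z)$ (up to the realized class proportions in $\sampleD{}$). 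Applying this to both $f^\star$ and $\hpred{}$ and using $\eRisk{\hpred{}, \rv Z} \le \eRisk{f^\star, \rv Z}$ from the ERM property gives $\mathrm{R}(\hpred{}, \rv Z) \le \mathrm{R}(f^\star, \rv Z)$; the reverse inequality is immediate from the definition of $f^\star$, and taking expectation over $\sampleD{}$ completes the argument.

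\textbf{Main obstacle.} The delicate step is the \emph{uniform}-in-$f$ bridge in (iii): pointwise concentration would only match $\eRisk{f,\rv Z}$ to $\mathrm{R}(f,\rv Z)$ for each fixed $f$, but $\hpred{} \in \hat{\V}(\sampleD{})$ is random and adapted to $\sampleD{}$, so uniform control over all of $\V$ is needed. Moreover, $\DIF{N}{Z_y} = 0$ is a statement about \emph{decoders of $\rv N$}, whereas what we really need is distributional invariance of $f[\rv Z_y]$ itself when $f$ is used as a $\rv Y$-predictor; bridging these two phrasings quietly relies on closure properties of $\V$ (turning any $f \in \V$ into an $\rv N$-predictor via post-processing) and on the finiteness of $\mathcal{X}, \mathcal{Z}, \mathcal{Y}$. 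The elegance of the definition of $\decxy{}$ is that it already quantifies over \emph{all} deterministic relabelings at once, absorbing in particular the training-set indicator; making this substitution rigorous and handling the benign mismatch between empirical and population class frequencies is where the real care goes, and is, I expect, why the statement quietly requires $\rv Y$ to be a deterministic function of $\rv X$.
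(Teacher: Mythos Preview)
Your plan is close to the paper's: part (ii) is essentially its \cref{lemma:zeroVminimal}, and specializing $\rv N$ to the training-set indicator $\tilde{\rv N}=\mathds{1}[\rv X_y\in\mathcal D]$ is exactly the move the paper makes. The gap is in (iii). From $\DIF{\tilde{\rv N}}{Z_y}=0$ you conclude that ``the conditional law of $f[\rv Z]$ given $\rv Y=y$ is the same on $\mathcal D$ as on the full population'' for \emph{every} $f\in\V$. That does not follow for a restricted $\V$: zero $\V$-information only says that no predictor in $\V$, \emph{when used to predict $\tilde{\rv N}$}, beats the marginal; it does not force $\rv Z_y\perp\tilde{\rv N}$, nor does it force an arbitrary $\rv Y$-predictor $f$ to have identical output laws on train and test. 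Your remark about ``closure properties of $\V$ (turning any $f$ into an $\rv N$-predictor via post-processing)'' names the issue but does not resolve it---the paper's assumptions do not include closure under arbitrary post-processing.

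The paper closes this gap by arguing contrapositively and exploiting the specific structure of an ERM. Since the best achievable risk is $0$ here (\cref{proposition:zero_risk}), any ERM $f'$ has zero empirical risk; if it had positive population risk there would be, within some class $y$, representations $z^{(c)}$ with $f'[z^{(c)}](y)=1$ (all training points land here) and $z^{(w)}$ with $f'[z^{(w)}](y)<1$ (only test points land here). The paper then uses the \emph{monotonic biasing} assumption on $\V$ to turn $f'$ into a $g\in\V$ that predicts $\tilde{\rv N}$ strictly better than $P_{\tilde{\rv N}}$ on the $z^{(w)}$ region and exactly as well on $z^{(c)}$, giving $\op{I_{\V}}{\rv Z_y\to\tilde{\rv N}}>0$ and contradicting $\V$-minimality. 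That construction is the missing piece in your argument: you need to produce, from a would-be bad ERM, a concrete $\tilde{\rv N}$-predictor in $\V$, and monotonic biasing (not generic post-processing) is what makes that possible. As a side benefit, working with ``zero train risk $\Rightarrow$ zero test risk'' also removes the class-proportion mismatch you flagged.
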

As all ERMs reach the best risk, so does their expectation, i.e., any $\rv Z \in \Qsmin{}$ is optimal.
We also show in \cref{appx:proof_prop} that $\V$-minimality and $\V$-sufficiency satisfy the following properties:
\begin{restatable}{proposition}{Vminsuff}\label{proposition:Vminsuff}
Let $\mathfam{V} \subseteq \mathfam{V}^+$ be two families and $\universal{}$ the universal one. If labels are deterministic:
\begin{itemize}
\item \textbf{Recoverability} 
The set of $\universal{}$-minimal $\universal{}$-sufficient representations corresponds to the minimal sufficient representations that t.v.i. in the domain of $\universal{}$, i.e., $\Usmin = \smin \cap \mathcal{Z}$.
\item \textbf{Monotonicity} $\mathfam{V}^+$-minimal $\V$-sufficient representations are  $\V$-minimal $\V$-sufficient, i.e., $\arg \max_{\rv Z \in \Qsuff{}} \op{I_{\mathfam{V}^+}}{\rv Z \rightarrow \decxY{} } \subseteq \Qsmin{}$.
\item \textbf{Characterization} $\rv Z \in \Qsmin{} \iff \rv Z \in \Qsuff{} $ and $\DIFxzCy{} = 0$.
\item \textbf{Existence} At least one $\universal{}$-minimal $\universal{}$-sufficient representation always exists, i.e.,  $|\Qsmin{}| > 0$.
\end{itemize}
\end{restatable}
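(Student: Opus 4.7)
The plan is to prove the four properties in the order \emph{Characterization}, \emph{Existence}, \emph{Monotonicity}, \emph{Recoverability}, since each later claim leverages the preceding ones. The conceptual workhorse is the characterization, which reduces $\V$-minimality to the vanishing of $\DIFxzCy{}$; once that is established, the remaining parts reduce to short sandwich arguments and a translation between $\V$-information (for $\V = \universal{}$) and Shannon mutual information.

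For \emph{Characterization}, I first note that $\DIF{N}{Z_y} \geq 0$ for every $y$ and every $\rv N$ by nonnegativity of $\V$-information, hence $\DIFxzCy{} \geq 0$. It therefore suffices to exhibit some $\V$-sufficient $\rv Z$ achieving $\DIFxzCy{} = 0$: then the minimum over $\Qsuff{}$ is zero, and $\rv Z \in \Qsmin{}$ holds iff $\rv Z \in \Qsuff{}$ together with $\DIFxzCy{} = 0$. The natural witness is $\rv Z$ given by an injective embedding of $\rv Y$ into $\mathcal{Z}$, which is admissible by the mild assumptions on $\V$ in \cref{sec:appx_assumptions}. Because $\rv Y = t(\rv X)$ is deterministic, $\rv Z_y$ is a point mass, so every predictor in $\V$ outputs the same distribution regardless of input and hence matches the unconditional label distribution, giving $\DIF{N}{Z_y} = 0$ for every labeling $\rv N \in \decxy{}$. \emph{Existence} is then immediate: taking the same $\rv Z$ with $\V = \universal{}$ lies in $\mathcal{Z}$, is $\universal{}$-sufficient (it determines $\rv Y$), and has $\DIFxzCy[\universal]{} = 0$, so Characterization places it in $\Usmin{}$.

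For \emph{Monotonicity}, the containment $\V \subseteq \V^+$ yields the pointwise inequality $\DIFxzCy{} \leq \DIFxzCy[\V^+]{}$ by expanding the definition of $\V$-information as an infimum of losses. Applying Characterization to $\V^+$ shows that the minimum of $\DIFxzCy[\V^+]{}$ over $\Qsuff{}$ is zero, so any $\rv Z \in \arg\min_{\rv Z' \in \Qsuff{}} \DIFxzCy[\V^+]{}$ satisfies $\DIFxzCy[\V^+]{} = 0$; by the sandwich this forces $\DIFxzCy{} = 0$, and Characterization again delivers $\rv Z \in \Qsmin{}$.

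For \emph{Recoverability}, specializing to $\V = \universal{}$ turns $\V$-information into Shannon mutual information, so $\Qsuff{\universal{}} = \suff{} \cap \mathcal{Z}$. It remains to show $\DIFxzCy[\universal]{} = 0 \iff \MI{X_y}{Z_y} = 0$ for every $y$, because then (using $\rv Y = t(\rv X)$) the chain rule gives $\MI{X}{Z} = \MI{Y}{Z} + \sum_y P(y) \MI{X_y}{Z_y}$, and minimizing $\MI{X}{Z}$ over sufficient representations coincides with forcing $\DIFxzCy[\universal]{} = 0$ over $\Qsuff{\universal{}}$. The direction $(\Leftarrow)$ is trivial since independence of $\rv X_y$ and $\rv Z_y$ transfers to any function $\rv N = t'(\rv X_y)$. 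For $(\Rightarrow)$, the vanishing of $\MI{t'(\rv X_y)}{\rv Z_y}$ for every function $t' : \mathcal{X} \to \mathcal{Y}$ is turned into full independence by letting $t'$ range over two-valued indicators of singletons (available because $|\mathcal{Y}| \geq 2$ and $\mathcal{X}$ is finite), which separate any pair of points and yield $\rv X_y \perp \rv Z_y$. The main obstacle is precisely this $(\Rightarrow)$ step: lifting a family of pointwise zero-information constraints for $\mathcal{Y}$-valued relabelings to joint independence; everything else is either an algebraic manipulation or a direct application of the witness $\rv Z = g(\rv Y)$ constructed for Characterization.
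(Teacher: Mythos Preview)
Your proposal is correct and mirrors the paper's argument closely: both exhibit a witness making each $\rv Z_y$ independent of $\rv X_y$ (you embed $\rv Y$ injectively into $\mathcal{Z}$ via the non-empty-preimage assumption; the paper uses the construction $\rv Z^*$ of \cref{eqn:z_star}), reduce Characterization to nonnegativity of $\V$-information plus this witness, obtain Monotonicity from the termwise inequality between $\V$- and $\V^+$-information, and handle Recoverability via the chain rule together with singleton indicators (exactly the paper's $\mathds{1}[\rv X_{\tilde y}=\tilde x]$). One minor wrinkle in your Monotonicity step: ``applying Characterization to $\V^+$'' does not literally give the minimum over $\Qsuff{}$ (it would concern the $\V^+$-sufficient set), but what you actually need---and already have---is that your witness $g(\rv Y)$ lies in $\Qsuff{}$ and has zero $\V^+$-information with every labeling because $\rv Z_y$ is a point mass.
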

The recoverability shows that our notion of $\V$-minimal $\V$-sufficiency is a generalization of minimal sufficiency.
As a corollary, IB's representations are optimal when Alice is unconstrained in her choice of predictors $\V=\universal{}$. 
The monotonicity implies that minimality with respect to (w.r.t.) a larger $\mathcal{V}^+$ is also optimal.
Finally, the characterizations property gives a simple way of testing for $\V$-minimality.
\subsection{Practical Optimization and Estimation}
\label{sec:practical_optim}
%
In the previous section we characterized optimal representations $\rv Z^* \in \Qsmin{}$.
Unfortunately, Bob cannot learn these $\rv Z^*$ as it requires the underlying distribution $P_{\rv X,\rv Y}$.
We will now show that he
can nevertheless approximate $\rv Z^*$ in a sample- and computationally- efficient manner.

\textbf{Optimization}. Learning $\rv Z \in \Qsmin{}$ requires solving a constrained optimization problem.
Similarly to IB, we minimize the \textit{decodable information bottleneck} (DIB), a Lagrangian relaxation of \cref{def:vmin}:
\begin{equation}
\label{eqn:dib}
\dib{} \defeq  - \DIF{Y}{Z} + \beta *  \DIFxzCy{}
\end{equation}
Notice that each $\DIF{\cdot}{Z}$ has an internal optimization.
In particular $\DIFxzCy{}$ turns the problem into a min (over $\rv Z$) - max (over $f \in \V$) optimization, which can be hard to optimize \cite{goodfellow2014distinguishability,metz2016unrolled}.
We empirically compare methods for optimizing $\dib{}$ in \cref{sec:appx_minimax} and show that joint gradient descent ascent performs well if we ensure that the norm of the learned representation cannot  diverge.
\newcommand{\vspacing}{$\vphantom{\sum_{i}^T}$}
\begin{figure}
\centering
\makebox[0.45\linewidth][l]{\hspace{12pt}\begin{subfigure}{0.45\linewidth}
        \small
        \SetInd{0.3em}{0.6em}
        \begin{algorithm}[H]
            \DontPrintSemicolon
            \SetKwFunction{Hv}{$\mathrm{H}$}
  \SetKwFunction{FMain}{$\hat{\mathcal{L}}_{\mathrm{DIB}}$}
  \SetKwProg{Fn}{Function}{:}{}
  \Fn{\Hv{$\V$, $\mathbf{x},\mathbf{y}$, $P_{\rv Z|\rv X}$}}{
        \vspacing$\mathbf{z} \gets \text{sample once from each } P_{\rv Z| \mathbf{x}}$\\
        \vspacing\KwRet {$\inf_{f \in \V}  \sum_{z,y \in \mathbf{z},\mathbf{y}} \frac{-\log f[z](y)}{|\mathcal{D}|}$}
  }
  \vspace{0.5em}
  \SetKwProg{Pn}{Function}{:}{\KwRet}
  \Pn{\FMain{$\V$, $\mathcal{D}$, $P_{\rv Z|\rv X}$, $K$, $\beta$}}{
        \vspacing$(\mathbf{x},\mathbf{y}),\mathcal{L}_{\V \text{min}},\mathcal{Y} \gets \mathcal{D},0,\mathrm{unique}(\mathbf{y})$ \\
        \vspacing$\mathcal{L}_{\V \text{suff}} \gets $ \Hv{$\V$, $\mathbf{x}$, $\mathbf{y}$,  $P_{\rv Z|\rv X}$} \\
        \For{$y$ in $\mathcal{Y}$}{
        \vspacing$\mathbf{x}_y \gets \mathbf{x}[\mathbf{y}==y]$ \\
        \vspacing\For{$k\gets1$ \KwTo $K$}{
            \vspacing$\mathbf{n} \gets $ random\_choice($\mathcal{Y}$, size=$|\mathbf{x}_y|$)\\
            \vspacing$\mathcal{L}_{\V \text{min}}$ += $ \frac{ \mathrm{H}(\V,\mathbf{x}_y,\mathbf{n},P_{\rv Z | \rv X}) }{|\mathcal{Y}|*K} $
        }
        }
        \vspacing\KwRet {(const) $+ \mathcal{L}_{\V \text{suff}} - \beta* \mathcal{L}_{\V \text{min}}$}\\
  }
        \end{algorithm}
        \vspace{-0.7em}
        \caption{Pseudo-code for $\edib{}$}
        \label{alg:pseudo_dib}
    \end{subfigure}}
\begin{subfigure}{0.47\linewidth}
\includegraphics[width=\linewidth]{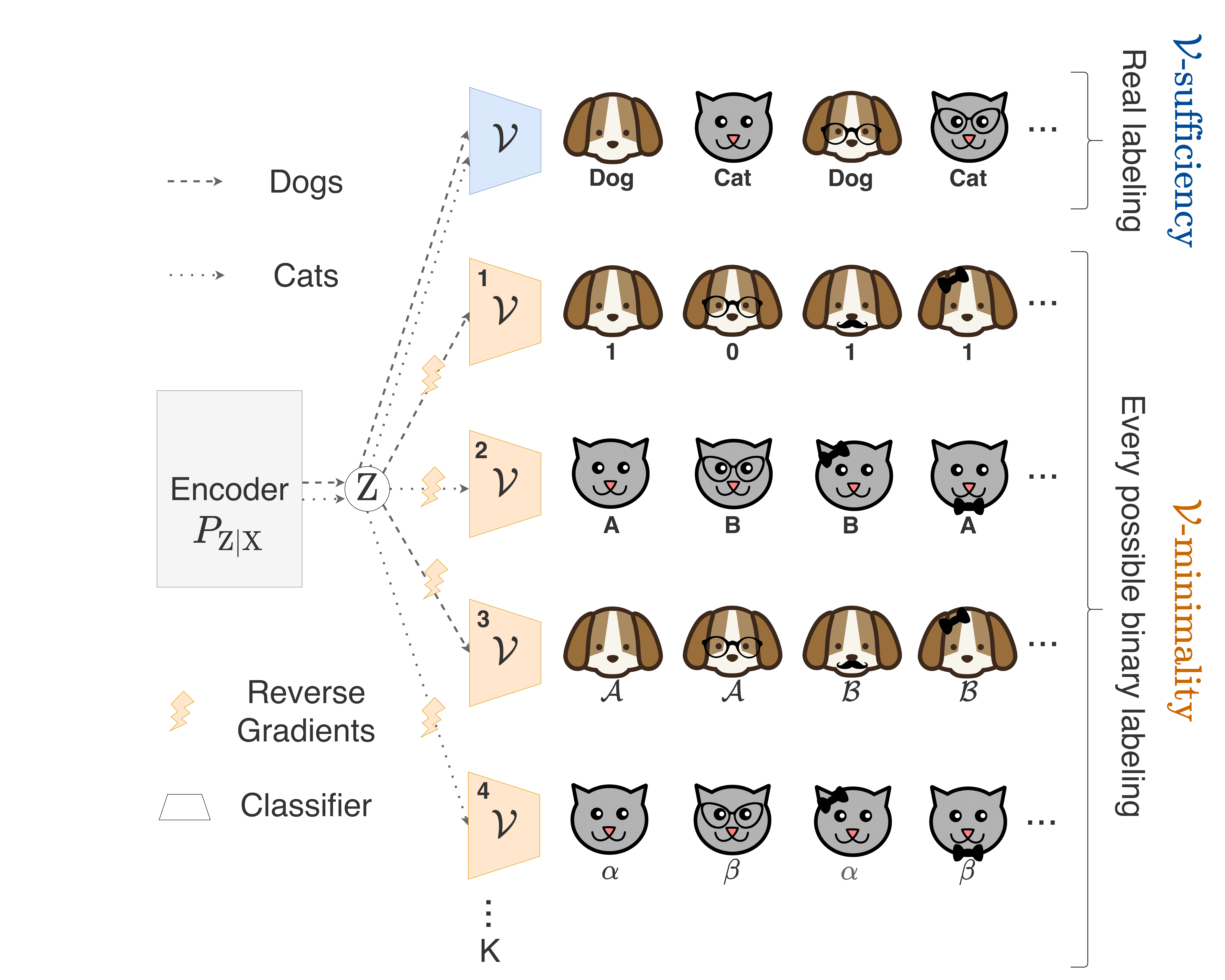}
\caption{DIB with neural networks}
 \label{fig:dib_neural_net}
\end{subfigure}
\caption{
\textbf{Practical DIB}
(a) Pseudo-code to compute the $\edib{}$; (b) Illustration of DIB to train a neural encoder.
$\V$-sufficiency corresponds to the standard log loss. 
$\V$-minimality heads are trained to classify $K$ arbitrary labeling within each class but the gradients w.r.t. the encoder are reversed so that $\rv Z$ cannot be used for that task.
Each head has different parameters but the same architecture $\V$.
}\label{fig:dib_practice}
\end{figure}

\textbf{Estimation}. A major benefit of $\dib{}$ over $\ib{}$ is that it can be estimated with guarantees using finite samples.
Namely, if Bob has access to a training set $\sampleD{}$, he can estimate $\dib{}$ reasonably well.
In practice, we: 
\begin{inlinelist}
\item use $\mathcal{D}$ to estimate all expectations over $P_{\rv X, \rv Y}$;
\item use samples from Bob's encoder $z \sim P_{\rv Z \cond x}$;
\item estimate the average over $\rv N \in \decxy{}$ in \cref{eqn:DIFxzCy} using $K$ samples. 
\end{inlinelist}
\Cref{alg:pseudo_dib} shows a (naive) algorithm to compute the resulting estimate $\edib{}$.
Despite these approximations, we show in \cref{appx:practical_estimation} that $\dib{}$ inherits $\V$-information's estimation bounds \cite{xu2020theory}.
\begin{restatable}{proposition}{pac}\label{proposition:pac}
(Informal) Let $\mathfrak{R}_{|\mathcal{D}|}$ denote the $|\mathcal{D}|$ samples Rademacher complexity. 
Assuming the loss is always bounded $|\log f[x](y)| < C$ then with probability at least $1-\delta$, $\edib{}$ described in \cref{alg:pseudo_dib} approximates $\dib{}$ with error less than
$2 \mathfrak{R}_{|\mathcal{D}|}(\log \circ \V) + \beta \log |\mathcal{Y}| + C \sqrt{ \frac{2\log \frac{1}{\delta}}{M}}$.
\end{restatable}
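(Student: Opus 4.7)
The plan is to decompose the error $|\dib{} - \edib{}|$ into a $\V$-sufficiency contribution and a $\beta$-weighted $\V$-minimality contribution, and bound each via standard PAC analysis of $\V$-information combined with a Monte-Carlo bias argument. Writing $\hat{H}_{\V}$ for the empirical counterpart of the conditional $\V$-entropy, and noting that $\op{H}{\rv Y}$ (and its analogues in $\DIFxzCy{}$) can be computed exactly from $\mathcal{D}$ and hence cancel in the difference, the task reduces to controlling $|\hat H_{\V}(\rv Y | \rv Z) - H_{\V}(\rv Y | \rv Z)|$ and the discrepancy between the estimated and true averages over the $y$-decompositions.

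For the sufficiency part, I would invoke the PAC bound for $\V$-information of \citet{xu2020theory} (a McDiarmid and Rademacher symmetrization argument applied to the bounded function class $\log \circ \V$). Since $|\log f[z](y)| < C$ by assumption, this gives with probability at least $1-\delta/2$ a deviation of at most $2 \mathfrak{R}_{M}(\log \circ \V) + C\sqrt{2\log(2/\delta)/M}$, accounting for the first and third summands of the advertised bound (constants absorbed into $\delta$). Samples $z \sim P_{\rv Z | x}$ drawn by the stochastic encoder are handled by treating $(x,y,z)$ triples as the underlying observation and pushing the encoder noise into the Rademacher complexity of $\log \circ \V$ composed with the sampling.

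For the minimality part, I would split further into (i) a Monte-Carlo bias from replacing the full average over $\decxy{}$ by the $K$-sample average of \cref{alg:pseudo_dib}, and (ii) an empirical estimation error for each sampled $\DIF{N}{Z_y}$. Since $\rv N$ takes values in $\mathcal{Y}$, non-negativity of $\V$-information and $H(\rv N) \leq \log|\mathcal{Y}|$ give $\DIF{N}{Z_y} \in [0, \log|\mathcal{Y}|]$, so the Monte-Carlo replacement introduces a worst-case additive error of at most $\log|\mathcal{Y}|$, which upon multiplication by $\beta$ produces the $\beta \log|\mathcal{Y}|$ term. For (ii), the $\V$-information PAC bound applies again to each sub-sample $\{x_i : y_i = y\}$; a union bound over the $K \cdot |\mathcal{Y}|$ sub-problems can be folded into the already-present $2\mathfrak{R}_{|\mathcal{D}|}(\log \circ \V)$ term (using monotonicity of Rademacher complexity in sample size and the fact that $\mathcal{D}$ is shared across sub-problems).

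The main obstacle I anticipate is the careful bookkeeping: (a) showing that the single-sample-per-point approximation $z \sim P_{\rv Z|x}$ in \cref{alg:pseudo_dib} does not introduce a separate concentration term, which requires pushing the encoder randomness into the complexity of the function class; and (b) controlling the PAC error uniformly over the $K \cdot |\mathcal{Y}|$ class-conditional sub-problems without letting a $\sqrt{\log K}$ factor leak out of the conclusion. The $\log|\mathcal{Y}|$ bias from Monte-Carlo averaging is crude but unavoidable without additional assumptions on the variance among labelings in $\decxy{}$; a sharper $O(\log|\mathcal{Y}|/\sqrt{K})$ concentration bound would require such structure, which the informal statement bypasses by taking the worst case.
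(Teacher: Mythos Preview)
Your decomposition into a sufficiency contribution and a $\beta$-weighted minimality contribution, and your treatment of the sufficiency term via the PAC bound of \citet{xu2020theory} (with the encoder noise absorbed by regarding $(x,y,z)$ as the sampled triple), match the paper exactly.

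Where you diverge from the paper is in the minimality term. You propose to split it further into (i) Monte-Carlo bias from subsampling $K$ labelings and (ii) per-labeling empirical estimation error, then PAC-bound (ii) and ``fold'' it back into the $2\mathfrak{R}_{|\mathcal{D}|}(\log\circ\V)$ term. The paper does something much cruder and simpler: it does \emph{not} split at all, and instead observes that both the true quantity $\frac{1}{|\decxy{}|}\sum_t \CHFt{}$ and its empirical-$K$-sample counterpart are each averages of conditional $\V$-entropies, every one of which lies in $[0,\log|\mathcal{Y}|]$ (non-negativity plus the constant predictor $f[\cdot]=P_{\rv N}$ giving $H(\rv N)\le\log|\mathcal{Y}|$). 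Hence their difference is at most $\log|\mathcal{Y}|$ \emph{deterministically}, with probability~$1$. Multiplying by $\beta$ gives the $\beta\log|\mathcal{Y}|$ term with no probabilistic or Rademacher component whatsoever, and the union bound in the final step is just $\delta+0$.

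This comparison is instructive. Your route is more ambitious---it tries to genuinely control the minimality estimation error---but your step (ii) has a real gap: folding $K\cdot|\mathcal{Y}|$ class-conditional PAC bounds back into a single $2\mathfrak{R}_{|\mathcal{D}|}$ term does not work as stated, since the sub-samples $\mathcal{D}_y$ are smaller than $\mathcal{D}$ and a union bound would introduce at least a $\sqrt{\log(K|\mathcal{Y}|)}$ factor (exactly the leak you worry about in your concern (b)). The paper sidesteps this entirely by accepting the crude bound; the paper itself remarks after its lemma that a tighter sub-Gaussian analysis is possible but unnecessary for the informal statement. So your concern (b) is a genuine obstacle for \emph{your} argument, but it simply does not arise in the paper's proof.
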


The fact that the estimation error in \cref{proposition:pac} grows with the (Rademacher) complexity of $\V$, shows that the error is largest for $\V = \universal{}$ corresponding to $\ib{}$.
We also see a trade-off in Alice's choice of $\V$.
A more complex $\V$ means the estimation of $\edib{}$ is harder for Bob (\cref{proposition:pac}), but Alice's prediction will improve (smaller $\min_{\rv Z} \min_{f \in \V} \Risk{}$; \cref{theo:opt_qmin}).

\textbf{Case study: neural networks.} Suppose that $\V$ is a specific neural architecture, the encoder $P_{\rv Z \cond \rv X}$ is parametrized by a neural network $q_\theta$, and we are interested in cat-dog classification.
As shown in \cref{fig:dib_neural_net}, training $q_\theta$ with DIB corresponds to fitting $q_\theta$ with multiple classification heads, each having exactly the same architecture $\V$ but different parameters.
The $\V$-sufficiency head (in blue) tries to classify cats and dogs.
Each of the $K$ (typically 3-4, see \cref{sec:appx_reindexing}) $\V$-minimality heads (in orange) ensure that the representation cannot be used to classify an arbitrary (fixed) labeling of cats or dogs.
In practice, the encoder and heads are trained jointly but gradients from $\V$-minimality heads are reversed. 
The $\V$-minimality losses are also multiplied by a hyper-parameter $\beta$. 
\section{Experiments}
We evaluate our framework in practical settings, 
focusing on:
\begin{inlinelist}
\item the relation between $\V$-sufficiency and Alice's best achievable performance; 
\item the relation between $\V$-minimality and generalization; 
\item the consequence of a mismatch between  $\V_{Alice}$ and the functional family $\V_{Bob}$ w.r.t. which $\rv Z$ is sufficient or minimal --- especially in IB's setting $\V_{Bob}=\universal{}$;
\item the use of our framework to predict generalization of trained networks.
\end{inlinelist}
Many of our experiments involve sweeping over the complexity of families $\V^- \subseteq \V \subseteq \V^+$, we do this by varying widths of MLPs --- with $\V \to \universal{}$ in the infinite width limit \cite{cybenko_approximation_1989,Hornik91}. 
Alternative ways of sweeping over $\V$ are evaluated in \cref{sec:sweep}.
%
\subsection{\vsufficiency: Optimal Representations When the Data Distribution is Known}
\label{sec:qsuf}

\begin{figure}
\centering
 \begin{subfigure}{0.37\linewidth}
\includegraphics[width=\linewidth]{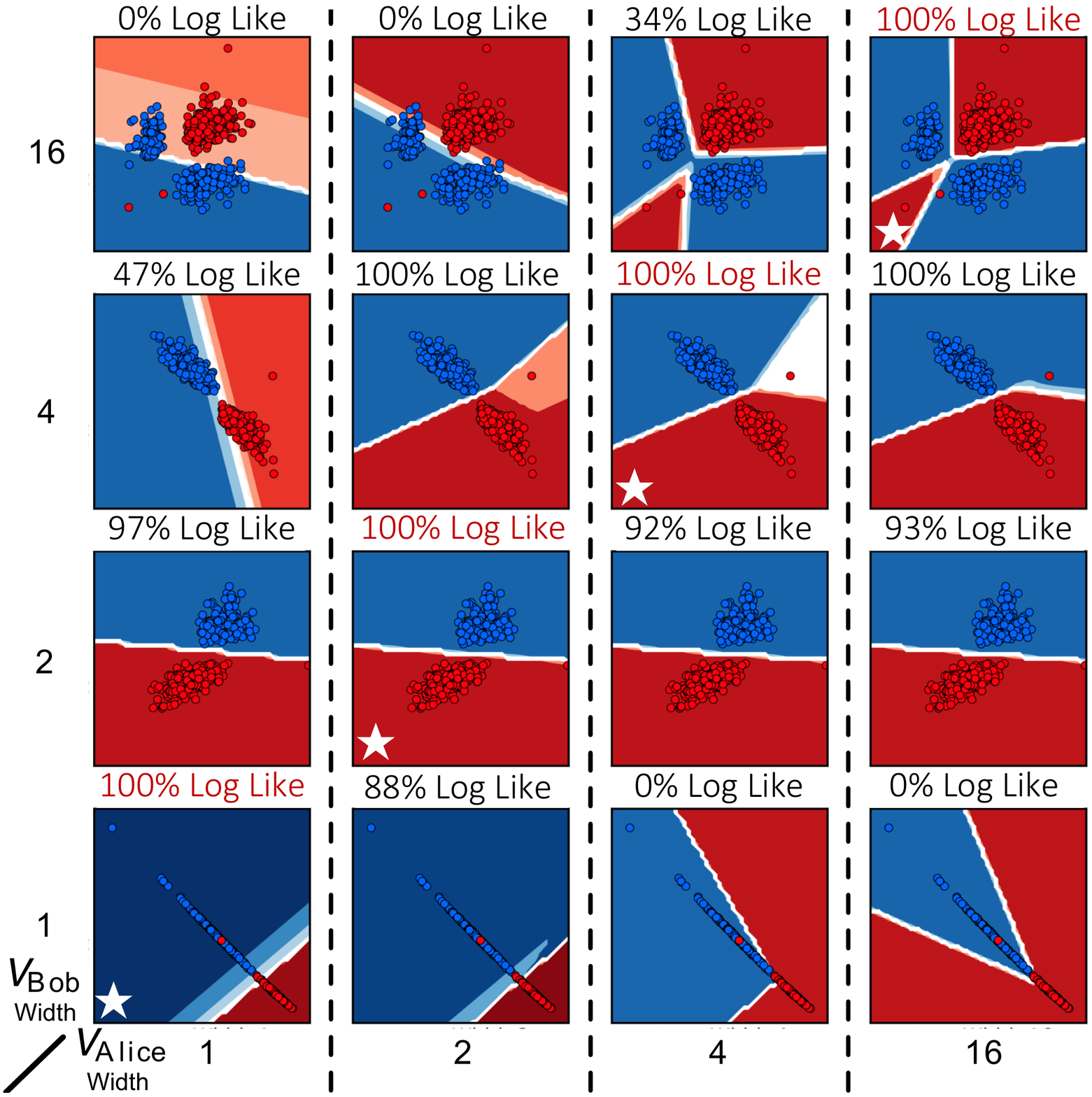} 
  \caption{2D Visualization}
 \label{fig:2d_qsuf}
 \end{subfigure}
 \quad\quad
  \begin{subfigure}{0.51\linewidth}
\includegraphics[width=\linewidth]{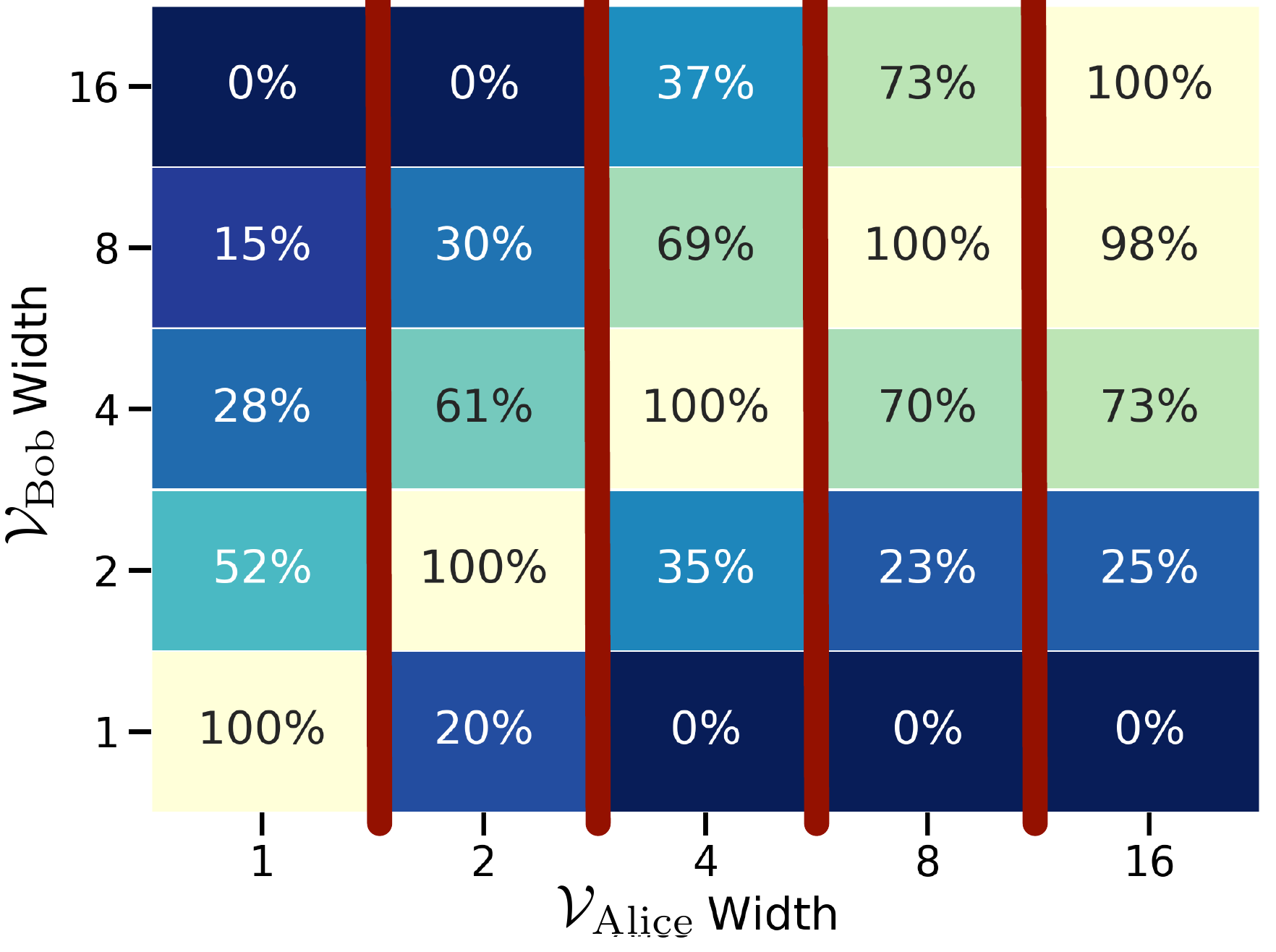} 
  \caption{Scaling Up}
 \label{fig:scaled_qsuf}
 \end{subfigure}
\caption{\textbf{Optimality of \vsufficiency.}
Plots of Alice's best possible performance with different $\V_{Bob}$-sufficient representations. The log likelihood is column-wise scaled from $0$ to $100$, and vertical separators are present to discourage between-column comparison.
The predictive families are MLPs with varying widths.
(a) Samples of Bob's 2D representations along with Alice's decision boundaries for an odd-even CIFAR100 binary classification;
(b) Same scaled log likelihood but using 8D representations,  the standard CIFAR100 dataset, and averaging over 5 runs.}
\label{fig:qsuf}
\end{figure}
We study optimal representations when Alice has access to the data distribution $P_{\rv Z \times \rv Y}$.
Alice's risk $\Risk{}$ in such setting is important as it is a tight lower bound on her performance in practical settings (see \cref{sec:theory_qsuf}).
We consider the following setting: Bob trains a ResNet18 encoder \cite{HeZRS16} by maximizing $\DIQbob{Y}{Z}$, Alice freezes it and trains her own classifier $f\in \V_{Alice}$ using the underlying $P_{\rv Z \times \rv Y}$, i.e., $f$ is trained and evaluated on the \textit{same} dataset.
See \cref{sec:hyperparam_qsmin} for experimental details.

\textbf{Which $\V$-sufficiency should Bob chose?}
\Cref{proposition:opt_qsuf} tells us that 
Bob's optimal choice is $\V_{Bob} = \V_{Alice}$. 
If he opts for a larger family $\V_{Alice} \subseteq \V_{Bob}$, the representation $\rv Z$ is unlikely to be decodable by Alice.
If $\V_{Bob} \subseteq \V_{Alice}$, he will unnecessarily constrain $\rv Z$. 
%
We first consider a setting that can be visualized: classifying the parity of CIFAR100 class index \cite{krizhevsky2009learning} using 2D representations.
\Cref{fig:2d_qsuf} shows samples from $\rv Z$ and Alice's decision boundaries.
To highlight the optimal $\V_{Bob}$ for a given $\V_{Alice}$, we scale the performance of each column from $0$ to $100$ in the figure.
As predicted by \cref{proposition:opt_qsuf}, the best performance is achieved at $\V_{Alice}=\V_{Bob}$. 
The worst predictions arise when $\V_{Alice} \subseteq \V_{Bob}$, as the representations cannot be separated by Alice's classifier (e.g. $\V_{Bob}$ width 16 and $\V_{Alice}$ width 1). 
This suggests that IB's sufficiency (infinite width $\V_{Bob}=\universal{}$) is undesirable when  $\V_{Alice}$ is constrained.
\Cref{fig:scaled_qsuf} shows similar results in 8D across 5 runs. 
See \cref{sec:QsufAll} for more settings.
\subsection{\vminimality: Optimal Representations for Generalization}
\label{sec:qmin}
\Cref{theo:opt_qmin} states that $\V$-minimality ensures all ERMs can generalize well.
We investigate whether this is still approximately the case in practical settings, i.e., when Bob optimizes $\edib{}$.

\noindent \textbf{Experimental Details.}
Our claim concerns \textit{all} ERMs $\hat{\V}^*(\mathcal{D})$, which cannot be supported by training a few $\hpred{} \in \hat{\V}^*(\mathcal{D})$. 
Instead, we evaluate the ERM that performs worst on the test set (Worst ERM), i.e., $\arg \max_{f \in \hat{\V}^*(\mathcal{D})} \Risk{}$.
We do so by optimizing the following Lagrangian relaxation $\arg \min_{f \in \V}  \eRisk{} - \gamma \Risk{}$ (see \cref{sec:appx_antireg}).
 As our theory does not impose constraints on $\rv Z$, we need an encoder close to a universal function approximator.
We use a 3-MLP encoder with around 21M parameters and a 1024 dimensional $\rv Z$.
Since we want to investigate the generalization of ERMs resulting from Bob's criterion, we do not use (possibly implicit) regularizers such as large learning rate~\cite{li2019genlr}.
For more experimental details see \cref{sec:hyperparam_qsmin}.

\begin{figure}
\centering
  \begin{subfigure}{0.31\textwidth}
\includegraphics[width=\textwidth]{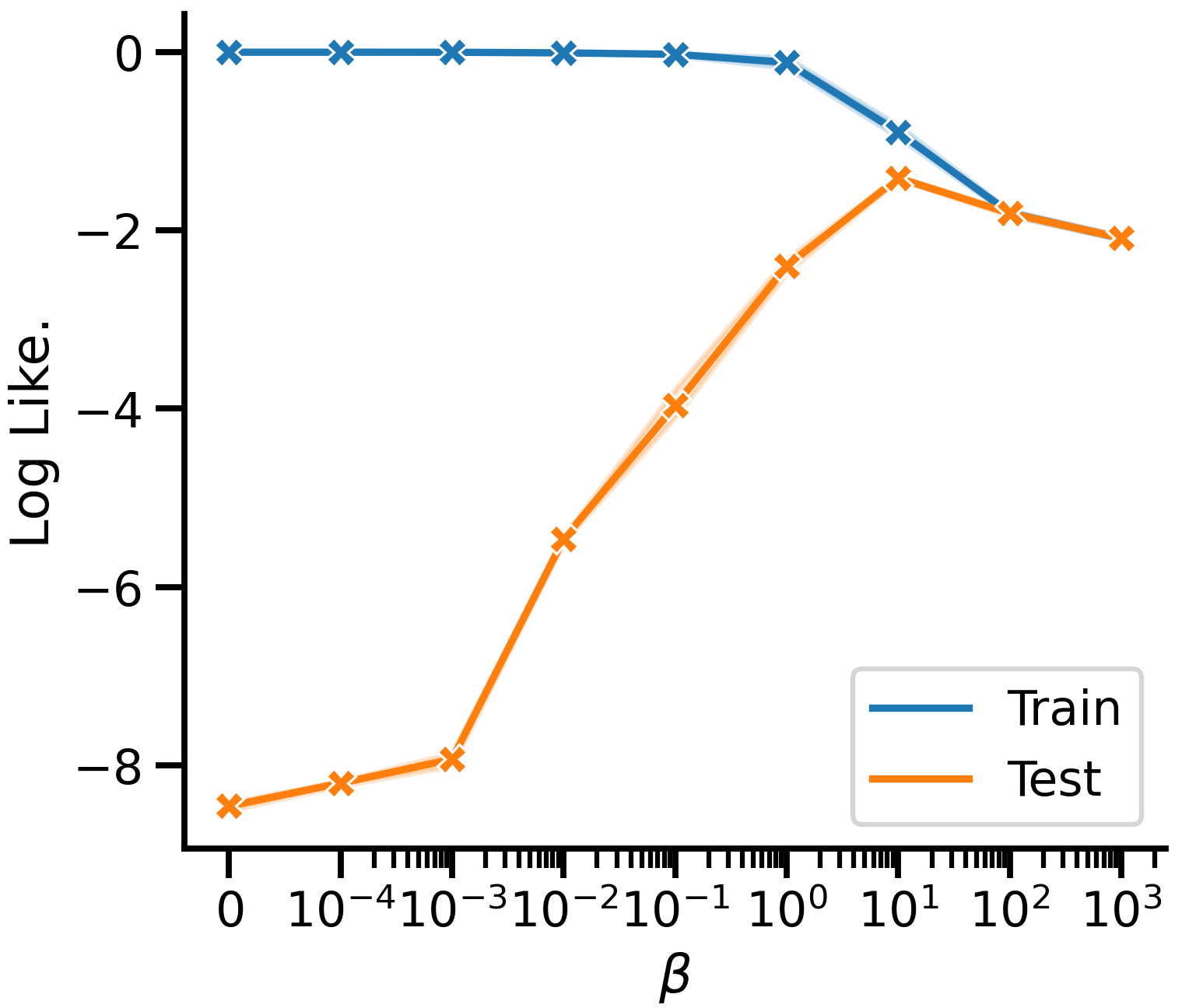} 
  \caption{Generalization Gap}
 \label{fig:qminimality_gap_loglike}
 \end{subfigure}
  \hfill
   \begin{subfigure}{0.31\textwidth}
\includegraphics[width=\textwidth]{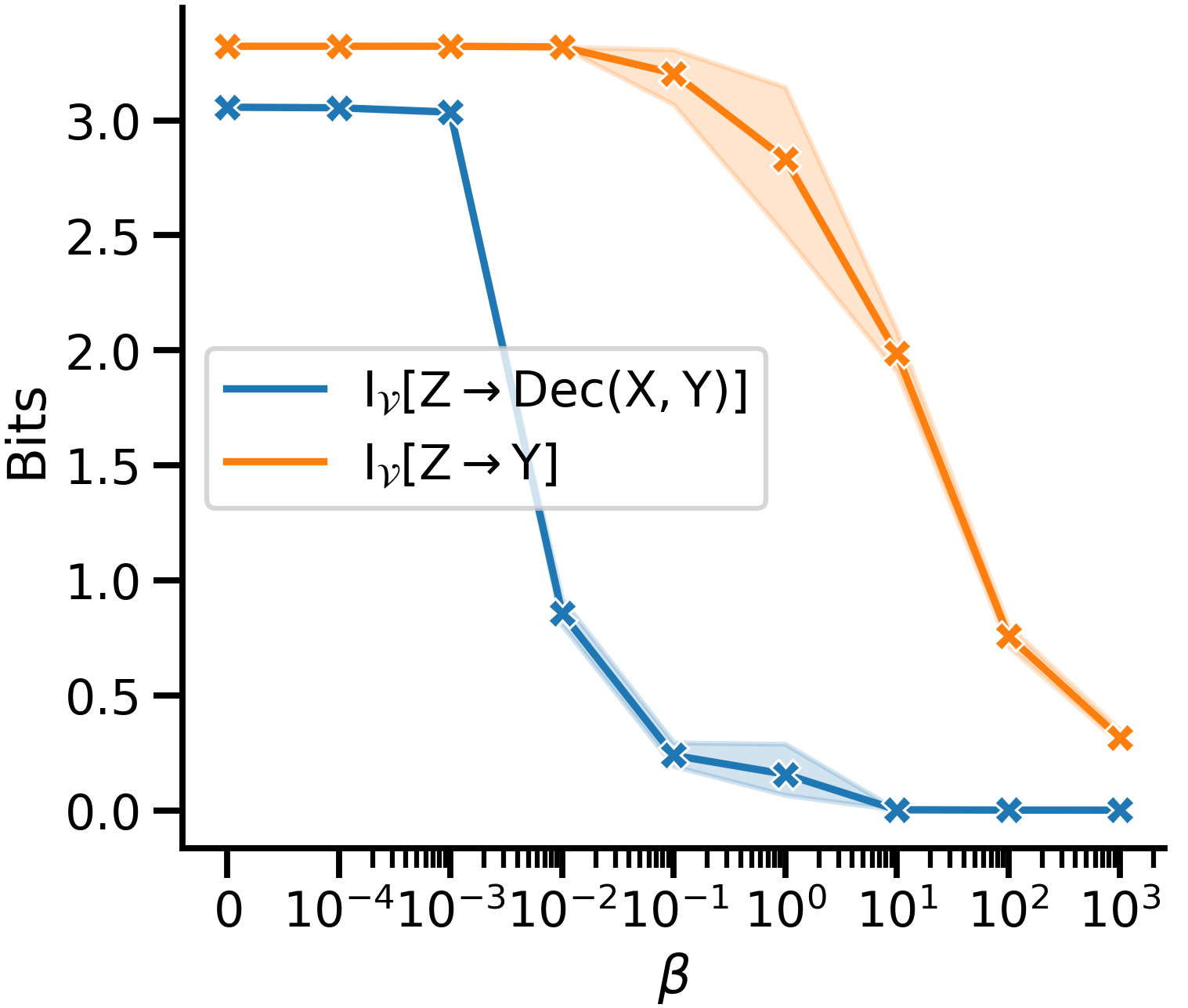} 
  \caption{$\V$-Sufficiency,-Minimality}
 \label{fig:qminimality_Vbits}
 \end{subfigure}
   \hfill
    \begin{subfigure}{0.31\textwidth}
\includegraphics[width=\textwidth]{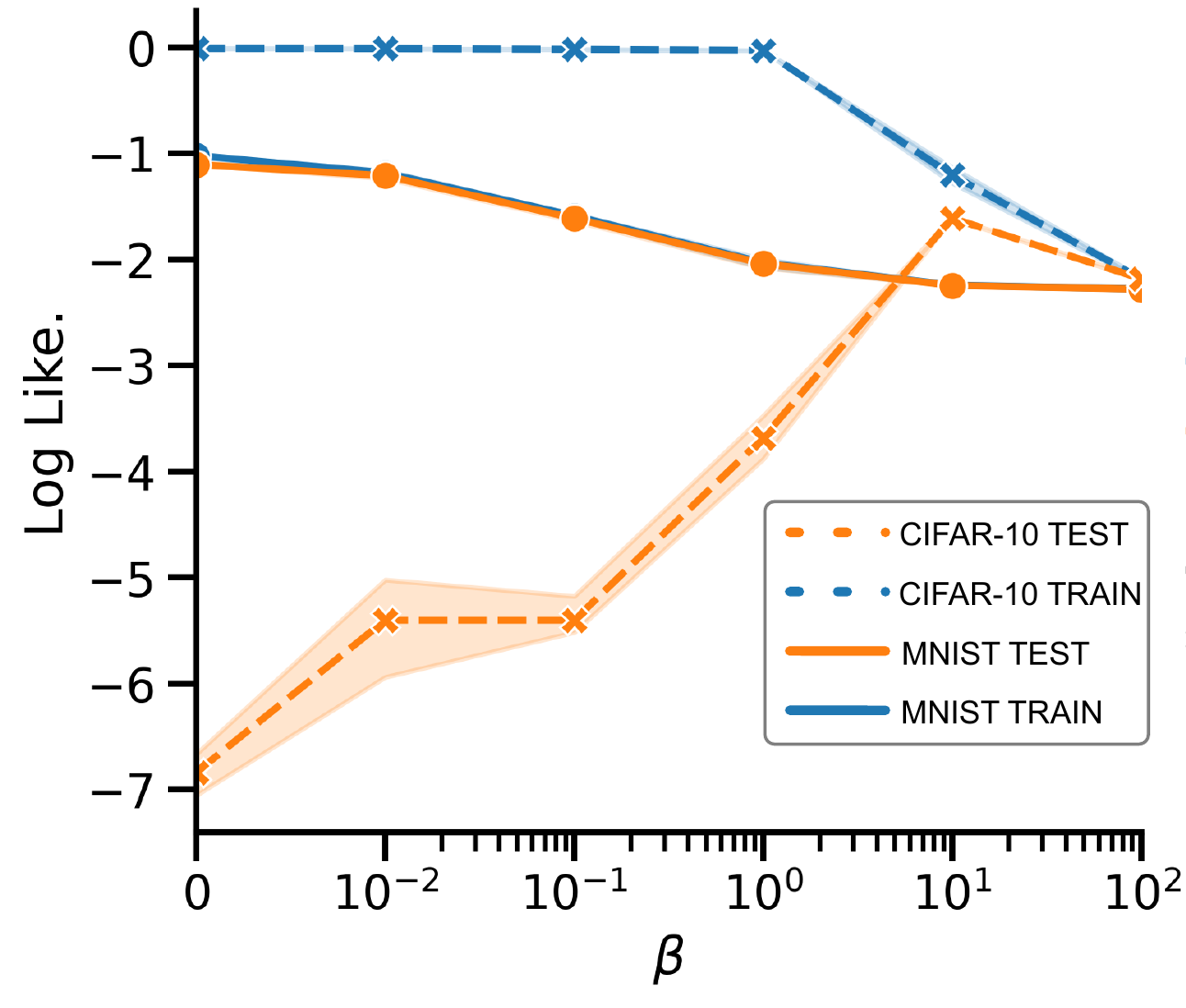}
  \caption{CIFAR10+MNIST}
 \label{fig:qminimality_cifar10mnist}
 \end{subfigure}
\caption{
\textbf{Effect of DIB on generalization}
Left two plots (CIFAR10):
Impact of DIB's $\beta$ on:
(a) the train and test performance of Alice worst ERM;
(b) the $\eDIFxzCy{}$ and $\eDIF{Z}{Y}$ of Bob's representation $\rv Z$.
As $\beta$ increases, $\rv Z$ becomes $\V$-minimal which increases the Alice's test performance until $\rv Z$ is far from $\V$-sufficient.
Right plot (CIFAR10+MNIST):
Same as (a) but images contain overlaid digits as distractors (see \cref{sec:cifar10mnist}).
$\V$-minimality avoids over-fitting by removing spurious MNIST information.
The shaded areas indicate $95\%$ bootstrap confidence interval on 5 runs.}
\label{fig:qminimality}
\end{figure}
\noindent\textbf{What is the impact of DIB's $\beta$ ?} 
We train representions on CIFAR10 with various $\beta$ to investigate the effect of $\eDIFxzCy{}$  and $\eDIF{Z}{Y}$ (\cref{fig:qminimality_Vbits}) on Alice's performance (\cref{fig:qminimality_gap_loglike}). 
Increasing $\beta$ results in a decrease in $\eDIFxzCy{}$ which monotonically shrinks the train-test gap.
This suggests that, although our theory only applies for \vminimality,  $\DIFxzCy{}$ is tightly linked to generalization even when it is non-zero.
After a certain threshold ($\beta=10$) the generalization gains come at a large cost in $\DIF{Z}{Y}$, which controls the best achievable loss.
This shows that a trade-off (controlled by $\beta$) between  \vminimality (generalization) and \vsufficient (lower bound) arises when Bob has to estimate $\dib{}$ using finite samples $\edib{}$.
%
%

\noindent\textbf{\vminimality and robustness to spurious correlations}.
We overlay MNIST digits as
a distractor on
CIFAR10 (see \cref{sec:cifar10mnist}).
We run the same experiments as with CIFAR10, but we additionally train an ERM from $\V$ to predict MNIST labels, i.e., test whether $\rv Z$ contains decodable information about MNIST.
\Cref{fig:qminimality_cifar10mnist} shows that as $\beta$ increases, predicting MNIST becomes harder.
Indeed decreasing $\eDIFxzCy{}$ removes all  $
\V$-information in $\rv Z$ which is not useful for predicting $\rv Y$.
As a result, Alice's ERM must generalize better as it cannot over-fit spurious patterns.\footnote{\citet{AchilleS18dropout} show that this happens for minimal $\rv Z$.
The novelty is that we obtain similar results when considering \vminimality, which is less stringent (\cref{proposition:Vminsuff}) and does not require the intractable $\MI{X}{Z}$.
}
\addtolength{\tabcolsep}{-3pt}

\begin{table}[ht!]
\centering
\caption{Alice's worst and average case log loss given different representation schemes used by
Bob (lower is better). Standard errors are across 5 runs.}
\label{table:worstcase}
\begin{tabular}{@{}lrrrrrrrr@{}}
\toprule
& No Reg.  %
& Stoch. Rep.    %
& Dropout  %
& Wt. Dec.   %
& VIB    %
& $\V^-$-DIB        %
& $\V^+$-DIB        %
& DIB %
\\ \midrule
Worst
& $10.23 { \scriptstyle \,\pm\, .13 }$   %
& $8.61 { \scriptstyle \,\pm\, .05 }$   %
& $1.90 { \scriptstyle \,\pm\, .00 }$  %
& $10.25 { \scriptstyle \,\pm\, .03 }$  %
& $1.82 { \scriptstyle \,\pm\, .02 }$  %
& $1.54 { \scriptstyle \,\pm\, .03 }$  %
& $1.94 { \scriptstyle \,\pm\, .33 }$  %
& $\textbf{1.41} { \scriptstyle \,\pm\, .01 }$  %
\\
Avg.
& $4.62 { \scriptstyle \,\pm\, .00 }$   %
& $4.34 { \scriptstyle \,\pm\, .04 }$   %
& $1.49 { \scriptstyle \,\pm\, .00 }$  %
& $4.96 { \scriptstyle \,\pm\, .03 }$  %
& $1.76 { \scriptstyle \,\pm\, .01 }$  %
& $1.47 { \scriptstyle \,\pm\, .01 }$  %
& $1.74 { \scriptstyle \,\pm\, .18 }$  %
& $\textbf{1.38} { \scriptstyle \,\pm\, .01 }$  %
\\ \bottomrule
\end{tabular}
\vspace{-10pt}
\end{table}

\addtolength{\tabcolsep}{3pt}    


\noindent\textbf{Which $\V$-minimality should Bob chose?}
We study the effect of $\rv Z \in \mathcal{S}_{\V_{Alice}}$ being minimal w.r.t. families which are larger ($\V^+$-DIB), smaller ($\V^-$-DIB), and equal to $\V_{Alice}$.
In theory, optimal representations would be $\V_{Alice}$-minimal (\Cref{theo:opt_qmin}), which are achieved by DIB and $\V^+$-DIB (Monotonicity).
$\V^+$-DIB should nevertheless be harder to estimate than DIB (\cref{proposition:pac}).
In the last 3 columns of \cref{table:worstcase} we indeed observe that DIB performs best.
$\V^+$-DIB performs worse, 
suggesting that IB's minimality is undesirable in practice.
We also minimize a known lower bound of $\MI{Z}{X}$ (VIB; \cite{alemi2016deep}) and find that it performs worse than DIB.\footnote{
VIB is hard to compare to DIB as it is unclear w.r.t. which family, if any, VIB's solutions are minimal.
}
We show results for different $\beta$ in \cref{sec:appx_Qs}.

\noindent\textbf{Comparison to traditional regularizers}. 
To ensure that the previous experimental gains support our theory and are not necessarily true for other regularizers, we test different regularizers on Bob and see whether they also learn representations that ensure Alice's ERM will generalize. 
In \cref{table:worstcase}, we show the results of:
\begin{inlinelist}
\item No regularization;
\item Stochastic representations (DIB with $\beta=0$);
\item Dropout~\citep{srivastava2014dropout};
\item Weight decay.
\end{inlinelist}
We find that DIB significantly outperforms other regularizers, which supports our claims that $\V$-minimality is well-suited for enforcing generalization. 
We emphasize that we evaluate the regularizers in a setting which is closer to our theory: two-stage game, no implicit regularizers, and evaluated on log likelihood.
We show in \cref{sec:appx_quant_2player} that DIB is a descent regularizer in standard classification settings but performs a little worse than dropout.
\subsection{Probing Generalization in Deep Learning}
\label{sec:beyondworst}
Methods that predict or correlate with the generalization of neural networks have been of recent theoretical and practical interest \cite{DziugaiteR17,Arora0NZ18,MorcosBRB18,JiangKMB19,jiang2019fantastic,NovakBAPS18}, as they can shed light on the inductive biases in deep learning \cite{HardtRS16,SmithL18,BartlettFT17} and prescribe better training procedures \cite{HochreiterS97a,ChaudhariCSLBBC17,KeskarMNST17,WeiM19,NeyshaburSS15}.
Having empirically shown a strong link between the degree of \vminimality{} and generalization (\cref{fig:dib_neural_net}), it is natural to ask whether it can predict the generalization of a trained model.
Specifically, consider the first $L$ layers as an encoder from inputs $\rv X$ to representations $\rv Z_L$, and subsequent layers as a classifier in $\V_L$.
We hypothesize that $\op{\hat{I}_{\V_L}}{\rv Z_{L} \rightarrow \decxY{}; \mathcal{D}}$ correlates well with the generalization of the network.

To test this, we follow \citet{jiang2019fantastic} and train convolutional networks (CNN) with varying hyperparameters (depth, width, dropout, batch size, weight decay, learning rate, dimensionality of $\rv Z$) and retain those that reach $0.01$ empirical risk.
From this set of 562 models, we measure Kendall's rank correlation \cite{KENDALL} between $\op{\hat{I}_{\V_L}}{\rv Z_{L} \rightarrow \decxY{}; \mathcal{D}}$ and the generalization gap of each CNN, i.e., the difference between their train and test performance. For experimental details see \cref{sec:hyperparam_corr}.
\begin{table}[ht!]
\centering
\caption{Rank correlation $\tau$ between different measures and generalization gap (in terms of accuracy $\tau_{acc}$ and log loss $\tau_{logloss}$) of 562 CNNs.
$\V_L$ denotes our $\op{\hat{I}_{\V_L}}{\rv Z_{L} \rightarrow \decxY{}; \mathcal{D}}$.
}
\label{table:correlation}
    \begin{tabular}{@{}lrrrrrrrrr@{}}
    \toprule
    & $\V^+_L$    %
    & $\V^-_L$   %
    & $\V_L$  %
    & Entropy   %
    & Path Norm    %
    & Var. Grad. %
    & Sharp. Mag. %
    \\ \midrule
    $\tau_{acc.}$~\citep{jiang2019fantastic} 
    &    %
    &   %
    &   %
    & $0.148$  %
    & $0.373$  %
    & $0.311$  %
    & $\textbf{0.484}$  %
    \\
    $\tau_{acc}$ (ours)
    & $\textbf{0.482} $   %
    & $0.391$  %
    & $0.471$   %
    & $0.234$  %
    & $0.347$  %
    & $0.332$  %
    & $0.385$  %
    \\
    $\tau_{log loss}$ (ours)
    & $\textbf{0.505} $   %
    & $0.435$  %
    & $0.498$   %
    & $0.164$  %
    & $0.357$  %
    & $0.167$  %
    & $0.233$  %
    \\ \bottomrule
\end{tabular}
\end{table}

\textbf{Does \vminimality correlate with generalization?}
In the last five columns of \cref{table:correlation}, we compare our results ($\V_L$) to the best generalization measure from each categories investigated in \cite{jiang2019fantastic}: the entropy of the output \cite{PereyraTCKH17}, the path norm \cite{NeyshaburTS15}, the variance of the gradients after training (Var. Grad. ; \cite{jiang2019fantastic}), and the ``sharpness'' of the minima  (Sharp. Mag.; \cite{KeskarMNST17}).\footnote{
We report Jiang et al.'s \cite{jiang2019fantastic} results since our experiments and Sharp. Mag. differs slightly from theirs.
}
As hypothesized, $\op{\hat{I}_{\V_L}}{\rv Z_{L} \rightarrow \decxY{}; \mathcal{D}}$ correlates with generalization and even outperforms the baselines.
Similarly to \cref{table:worstcase} we also evaluate minimality with respect to a family larger ($\V_L^+$) or smaller ($\V_L^-$) than $\V_L$.
Surprisingly, $\V_L^+$ performs better than $\V_L$, which might be because larger networks can help optimization of sub-networks $\V_L \subseteq \V_L^+$ as suggested by the Lottery Ticket Hypothesis \cite{FrankleC19}.

To the best of our knowledge $\V$-minimality is the first measure of generalization of a network that only considers a single internal representation $\rv Z_{L}$.
This could be of particular interest in transfer learning, as it can predict how well any model of a certain architecture will generalize when using a specific pretrained encoder.
As $\V$-minimality is a property of a representation rather than the architecture, we show in \cref{sec:appx_correlation} that it can be meaningfully compared \textit{across} different architectures and datasets. 


%
\section{Other Related Work}
\label{sec:previous}
\textbf{Generalized information, game theory and Bayes decision theory}.
If you need a distribution $P^*_{\rv X}$ to act as a representative $\Gamma \subseteq \mathcal{P}(\mathcal{X})$ you should follow the maximum entropy (MaxEnt) principle \cite{Jaynes,csiszar1991least} to minimize the worst-case log loss \cite{topsoe1979information,walley1991statistical}.
\citet{grunwald2004game} generalized MaxEnt to different losses by framing the problem as an adversarial game between nature and a decision maker.
Robust supervised learning \cite{LanckrietGBJ02} can also be framed in a way that suggests to maximize \textit{conditional} entropy \cite{globerson_minimum_2004,farnia2016minimax}.
This line of work focuses on prediction rules (Alice).
Our framing (\cref{sec:formal_prob_state}) extends this literature by incorporating a co-operative agent (Bob), which learns representations to minimize the worst-case loss of the decision maker (Alice).
Although \citep{nguyen2010estimating,duchi2018multiclass} also studied representations using generalized information, they focused on consistency rather than generalization.

\textbf{Extended sufficiency and minimality}. 
Linear sufficiency is well studied \cite{drygas1983sufficiency,baksalary1981linear,kala2017some} but only considers linear encoders and predictors and is used for estimation rather than predictions.
In ML, \citet{CvitkovicK19} incorporated the encoder's family (Bob) to characterize achievable $\rv Z$.
This is complementary to our incorporation of the decoder's family $\V$ (Alice) to characterize optimal $\rv Z$.

\textbf{Kernel Learning}.
There is a large literature in learning kernels \cite{LanckrietCBGJ03,BachLJ04,SonnenburgRSS06,GonenA11} for support vector machines \cite{cortes1995svm}, which implicitly learns a data representation \cite{MicchelliP07}.
The learning is either done by minimizing estimates  \cite{WestonMCPPV00,ChapelleVBM02} or bounds of the generalization error \cite{SrebroB06,CortesMR10a,KloftB12,CortesKM13,LiuLLYW17a}.
The major advantage of our work is that we are not restricted to predictors $\V$ that can be ``kernelized'' and provide an optimality proof.


%
\section{Conclusion and Future Work}
\label{sec:conclusion}
In this work, we propose a prescriptive theory for representation learning.
We first characterize optimal representations $\rv Z^*$
for supervised learning, by defining minimal sufficient representation with respect to a family of classifiers $\V$.
These representations $\rv Z^*$ guarantee that any downstream empirical risk minimizer $f \in \V$ will incur minimal expected test loss, by ensuring that $f$ can correctly predict labels but cannot distinguish examples with the same label. 
We then provide the decodable information bottleneck objective to learn $\rv Z^*$ with PAC-style guarantees.
We empirically show that using $\rv Z^*$ can improve the performance and robustness of image classifiers.
We also demonstrate that our framework can be used to predict generalization in neural networks.

In addition to supporting our theory, our experiments raise interesting questions for future work.
First, results in \cref{sec:qmin} suggest that performance is causally related with the degree of $\V$-minimality of a representation, even though we only prove it for ``perfect'' $\V$-minimality.
A natural question, then, is whether generalization bounds can be derived for approximate $\V$-minimality.
Second, the high correlation between generalization in neural networks and the degree $\V$-minimality (\cref{table:correlation}) suggest that it might be an important quantity to study for understanding generalization in deep learning. 

More generally, our work shows that information theory in theoretical and applied ML can benefit from incorporating the predictive family $\V$ of interest.
For example, we believe that many issues of
mutual information \cite{tschannen2019mutual} in self-supervised learning \cite{Linsker88,HjelmFLGBTB19,CPC}, and IB \cite{saxe2018information,kolchinsky2018caveats,amjad2019learning} 
in IB's theory of deep learning \cite{tishby2015deep,shwartz2017opening} could be solved by taking into account $\V$.
By extending $\V$-information to arbitrary r.v. (through decompositions) we hope to enable its use in those and many other domains.




\section*{Broader Impact}
Our work takes the perspective that an
``optimal'' representation is one such that any classifier
that fits the training data should generalize well to test.
In terms of potential practical benefits, it is possible that
using our optimal representations, one can alleviate the effort of hyperparameter search and selection currently required to tune
deep learning models. This could be a step towards democratizing machine learning to sections of the society without large computational resources – since hyperparameter search is often computationally expensive. We do not anticipate that our work will advantage or disadvantage any particular group.

\begin{ack}
We would like to thank: Naman Goyal for early feedback and best engineering practices;
Brandon Amos for support concerning min-max optimization;
Stephane Deny for suggesting to look for the Worst ERM;
Emile Mathieu, Sho Yaida, and Max Nickel for helpful discussions and feedback;
Jakob Foerster for the name ``decodable'' information bottleneck;
Dan Roy for suggesting to use the term ``distinguishability'' to understand $\V$-minimality;
Chris Maddison for finding typos and small mistakes in the proofs;
and Ari Morcos for tips to help Yann Dubois writing papers. DJS was partially supported by the NSF through the CPBF PHY-1734030, a Simons Foundation fellowship for the MMLS, and by the NIH under R01EB026943.

\end{ack}

\bibliographystyle{IEEEtranN}
\bibliography{bibliography}

\clearpage
\newpage

\appendix
In the following appendices we:
\begin{inlinelist}
\item Formalize our notation in \cref{appx:notation};
\item State and discuss our assumptions in  \cref{sec:appx_assumptions};
\item State and prove our theoretical results in \cref{sec:appx_theory};
\item Provide details for reproducing our results in \cref{sec:appx_reproducibility};
\item Provide and discuss additional results that shed light on many of our design choices \cref{sec:appx_additional}.
\end{inlinelist}

\section{Notation}
\label{appx:notation}
Letters that are upper-case non-italic $\rv Y$, calligraphic $\mathcal{Y}$, and lower-case $y$, represent, respectively, a random variable (r.v.), its associated codomain, and a realization of it. 
When necessary to be explicit, we will say that $\rv Y$ takes value in (t.v.i.) $\mathcal{Y}$.
Conditional distribution will be denoted $P_{\rv Y \cond \rv z} \in \mathcal{P}(\mathcal{Y} | \mathcal{Z})$, and the image of $z$ as $P_{\rv Y \cond z} \in \mathcal{P}({\mathcal{Y}})$, where $\mathcal{P}(\mathcal{Y})$  denotes the collection of all probability measures on $\mathcal{Y}$ with its $\sigma$-algebra and $\mathcal{P}(\mathcal{Y} | \mathcal{Z}) \defeq \{ P : \mathcal{Z} \to \mathcal{P}({\mathcal{Y}}) \}$ is used as a shorthand.
The composition of a function $f$ with a random variable $\rv Z$ will be denoted $f(\rv Z)$.  
Expectations will be written as: $\op{E_{y \sim  P_{\rv Y}}}{y} \defeq \int y\, dP_{\rv Y}$. 
Independence between two r.v.s will be denoted with $\cdot \perp \cdot$.
The indicator function is denoted as $\mathds{1}[\cdot]$.
The cardinality of a set is denoted by $|\cdot|$.
The preimage of $\{x\}$ by $f$ will be denoted $\preimage{f}{x}$.
Finally, a hat $\hat{\cdot}$ will be used to refer to empirical estimates: 
\begin{inlinelist}
\item $\hrv{Y}$  is an approximation of $\rv Y$ (so $P_{\rv Y},P_{\hrv Y} \in \mathcal{P}(\mathcal{Y})$);
\item $\hat{P}_{\rv Y}$ denotes an empirical distribution of $\rv Y$;
\item Functionals with expectations taken over empirical distributions inherit the hat (e.g. $\op{\hat{H}}{\rv Y \cond \rv X}$).
\end{inlinelist}

Letters $\rv X$,$\rv Z$,$\rv Y$ are respectively used to refer to the input, representation and target of a predictive task.
We use $\rv X_y$,$\rv Z_y$ to respectively denote an input and a representations that have the same distribution as $\rv X$,$\rv Z$ conditioned on $y$, i.e., $P_{\rv X_y} = P_{\rv X|y}$ and $P_{\rv Z_y} = P_{\rv Z|y}$.
We denote by $\V$ any predictive family, i.e, $\V \in \mathcal{P}(\mathcal{Y} | \mathcal{Z})$ and satisfies the assumptions in \cref{appx:assumptions_V}.
The largest such set if the universal predictive family $\universal{} \defeq \mathcal{P}(\mathcal{Y} | \mathcal{Z})$.
The probability of $y \in \mathcal{Y}$ given $z \in \mathcal{Z}$ as predicted by a classifier $f \in \V$ is denoted $f[z](y)$ to distinguish it from the underlying conditional probability $P_{\rv Y | \rv Z}(y|z)$.
We are interested in minimizing the expected loss of a classifier $f \in \V$, also called risk 
$\mathrm{R}(f,\rv Z) \defeq
\E{y,x \sim P_{\rv y, \rv x}} {\E{z \sim P_{\rv Z \cond x}}{S(y,f[z])}}$.
In practice we will be given a training set of $M$ input-target pairs $\sampleD{}$, in which case we can estimate the risk using the empirical risk $\eRisk{} \defeq \frac{1}{M} \sum_{y,x \in \mathcal{D}} \E{z \sim P_{\rv Z \cond x}}{S(y,f[z])}$.
The set of ERMs are denoted as $\hat{\V}(\mathcal{D}) \defeq  \arg \min_{f \in \V}  \eRisk{}$.
Finally, we will denote the best achievable risk for $\V$ as $\mathrm{R}^*(\V) \defeq \min_{\rv Z} \min_{f \in \V} \Risk{}$.

\section{Assumptions}
\label{sec:appx_assumptions}

\subsection{Generic Assumptions}
\label{appx:generic_assumptions}
We make a some assumptions throughout our paper to have concise statements.
First let us discuss generic assumptions about the setting we are studying:

\begin{description}
\item[At least one example per class] We assume that every training set has at least one example per label.
This is generally true in modern ML, where $|\mathcal{D}| \gg |\mathcal{Y}|$.
\Cref{theo:opt_qmin} would not hold without it, as ERMs could not perform optimally without having examples to learn from.
\item[Logarithmic score]
We only consider the log loss $S(y,f[z]) \coloneqq -\log f[z](y)$ as it is the most common scoring rule.
Indeed, it is (essentially) the only \textit{strictly proper} (strictly minimized by the underlying distribution $P_{\rv Y \times \rv z }$) and \textit{local} (depending only on predicted probability of the observed event $P_{\rv Y \times \rv z }(y|\rv Z)$) scoring rule \cite{parmigiani}, making it computationally attractive.
The framework can likely be extended to any proper scoring rule (e.g. pseudo-likelihood, Brier score, kernel scoring rule) by considering generalized predictive entropy \cite{dawid1998coherent,grunwald2004game,gneiting2007strictly}.
\item[Finite sample spaces] 
We restrict ourselves to finite $|\mathcal{X}|$,$|\mathcal{Y}|$,$|\mathcal{Z}|$, so as to avoid the use of measure theory and axiomatic set theory, which would obscure the main points of the paper.
While this assumption holds in computational ML (due to the use of digital computer or the fact that we can always restrict the sample spaces to the finite examples seen in our training and testing set), it is unsatisfactory from a theoretical standpoint and the general case should be investigated in future work.
We conjecture that \cref{theo:opt_qmin} extends to the uncountable case.
\item[At least as many representations as labels] 
The sample space of representions is at least as large as the one for labels: $|\mathcal{Z}| \geq |\mathcal{Y}|$. This holds in practice where there are usually less than a 1000 possible labels $\mathcal{Y}$ while even a single dimensional $\rv Z$ can often (depending on computer) take $2^{32} \approx 4 * 10^6$ values.
\item[Multi-class classification] The sample space of the target is $\mathcal{Y}=[0,\dots,|\mathcal{Y}|-1]$.
\end{description}

\subsection{Assumptions on Functional Families}
\label{appx:assumptions_V}

Now let us discuss the assumptions that we make about functional families. 
The following assumptions hold for many functional families that are used in practice, including neural networks, logistic regression, and decision tree classifiers.

\begin{description}
\item[Invariance of $\V$ to label permutations] All predictive families are invariant to permutation, i.e., $\forall \V, \ \forall \pi : \mathcal{Y} \to \mathcal{Y}$,  $\forall f \in \mathcal{V}, \ \exists f' \in \mathcal{V}$ s.t. $\forall z \in \mathcal{Z}, \forall y \in \mathcal{Y}$ we have $\forall f[z](y) = f'[z](\pi(y))$.
This holds in practice (neural networks, decision trees, \dots) as we usually do not want predictors to depend on the order of labels, e.g. $\mathcal{Y}=\{``cat",``dog"\}$ or $\mathcal{Y}=\{``dog",``cat"\}$.
We use this assumption to simplify the proof of \cref{theo:opt_qmin}.
\item[Non-empty preimage of labels] We consider predictive families that have a non empty preimage for each label: $\forall \V$, $\exists f \in \V$, s.t. $\forall y \in \mathcal{Y}, \ \exists z \in \mathcal{Z}$ we have $f[z](y)=1$.
This is usually true in ML.
In neural networks, this can be achieved by making the weights of your last layer very large such that the softmax will give the label a probability of 1 (achieved due to floating point representation).
We use this assumption to show that when the label is deterministic, $\mathrm{R}^*(\V)=0$.
\item[Arbitrary constant prediction of $\V$] 
We assume that in all functional families there always is a predictor which predicts any constant output regardless of the input: $\forall \V, \ \forall P_{\rv Y} \in \mathcal{P}(\mathcal{Y}), \ \exists f \in \V$ s.t. $\forall z \in \mathcal{Z}$ we have $f[z]=P_{\rv Y}$.
This is typically true in classification, when the last layer parametrizes a categorical distribution. In neural networks this can be achieved by setting all weights to $0$ and then the bias of the last layer (softmax) to the desired values.
Notice that this not true in the general case (regression and countable infinite sample space), in which case the assumption can be relaxed to optional ignorance as in \cite{xu2020theory}.
We use this assumption to simplify the definition of $\V$-information in \cref{proposition:marginalent}.
\item[Monotonic biasing of $\V$]
We assume that all functional families are closed under ``monotonic biasing towards a prediction $y$''.
Formally, $\forall f' \in \mathcal{V}, \ \forall y \in \mathcal{Y}, \ \forall z \in \mathcal{Z}, \ \forall  p \in [0,1]$, $\exists g \in \V$ s.t. $g[z](y)=p$ and  $\forall z',z'' \in \mathcal{Z},\forall y' \in \mathcal{Y}$ we have 
$\mathrm{sign}(f'[z'](y') - f'[z''](y'))=\mathrm{sign}(g[z'](y') - g[z''](y'))$.
In other words, it is possible to construct a $g \in \V$ that assigns to a (single) pair $z,y$ the probability $p$ of your choice and preserves the order --- if $z,y$ was assigned a higher probability than $z',y$ by $f'$ then the same holds for $g$.
Such assumption holds for neural networks, as it is always possible to construct $g$ by modifying the bias term of the final softmax layer.
This assumption is crucial for the proof of \cref{theo:opt_qmin}.
\end{description}

\subsection{Assumptions for the Theorem}
\label{appx:assumptions_theorem}

We make an additional assumptions for \cref{theo:opt_qmin} and  \cref{proposition:Vminsuff}.

\begin{description}
\item[Deterministic Labeling] We assume that labels are deterministic functions of the data $\exists t : \mathcal{X} \to \mathcal{Y}$ s.t. $\rv Y = t(\rv X)$.
This is generally true in ML datasets where every example is only seen once and thus every example is given a single label with probability 1. 
This does not necessarily hold in the real world.
We use this assumption to simplify the proofs, we believe that it is not necessary for the theorem to hold but should be investigated in future work.
\end{description}

\section{Theoretical Results and Proofs}
\label{sec:appx_theory}

\subsection{Background}
\label{sec:proof_bckgrnd}
\subsubsection{Minimal Sufficient Statistics}
\label{sec:proof_minsuffstat}
In the following, we clarify the link between minimal sufficient \textit{statistics} \cite{fisher1922mathematical} and  \textit{representations} \cite{shamir2010learning,achille2018emergence} of inputs $\rv X$.
The difference between a representation $\rv Z$ in IB and a statistic $T(\rv X)$, is that the mapping between the inputs $\rv X$ and the representation $\rv Z$ can be stochastic --- specifically a representation is a statistic of the input and independent noise $\epsilon \perp \rv X$, i.e., $\rv Z = T(\rv X, \epsilon)$.
We now prove that for (deterministic) statistics, the notion of 
minimal sufficient representation is equivalent to that of predictive minimal sufficient statistics \cite{bernardo2009bayesian}. 
\begin{restatable}[Minimal Sufficient Representations]{definition}{minimality}\label{def:min_suff_rep}
A representation $\rv Z= T(\rv X, \epsilon)$ is:
\begin{itemize}
\item \textbf{Sufficient for $\rv Y$} if $\rv Z \in \suff{} \defeq \arg \max_{\rv z'} \MI{Y}{z'} $
\item \textbf{Minimal Sufficient for $\rv Y$} if $\rv Z \in \smin{} \coloneqq \arg\min_{\rv Z' \in \suff{}} \MI{X}{z'}$
\end{itemize}
\end{restatable}
\begin{definition}[Sufficient Statistic]\label{def:standard_sufficiency}
A statistic $\rv Z=T(\rv X)$ is predictive sufficient for $\rv Y $ if $\rv Y - \rv Z - \rv X$ forms a Markov Chain.
\end{definition}
%
%
\begin{lemma}[Equivalence of Sufficiency]\label{def:equiv_sufficiency}
Let $\rv Z$ be a statistic $T(\rv X)$ or a representation $T(\rv X,\epsilon)$  of $\rv X$, then
$\rv Z$ is predictive sufficient for $\rv Y$ by \cref{def:standard_sufficiency} $\iff$ $\rv Z$ is sufficient for $\rv Y$ by \cref{def:min_suff_rep}.
\end{lemma}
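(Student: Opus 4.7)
The plan is to reduce this equivalence to the data processing inequality (DPI) together with its equality condition, applied in the Markov chain $\rv Y - \rv X - \rv Z$. The first step is to observe that any representation $\rv Z = T(\rv X, \epsilon)$ with $\epsilon \perp \rv X$ (and independent of $\rv Y$ given $\rv X$, which is implicit in the setup, since $\epsilon$ is exogenous noise) automatically forms the chain $\rv Y - \rv X - \rv Z$: conditionally on $\rv X$, the representation $\rv Z$ depends only on the independent noise $\epsilon$ and hence is conditionally independent of $\rv Y$. The deterministic statistic case $\rv Z = T(\rv X)$ is just the special case of degenerate $\epsilon$, so both halves of the lemma can be handled uniformly.

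From this Markov chain, DPI gives the universal upper bound $\MI{Y}{Z'} \leq \MI{Y}{X}$ for every candidate representation $\rv Z'$. Next I would note that the identity map $T' = \mathrm{id}$ makes $\rv Z' = \rv X$ itself an admissible representation, so the upper bound is attained: $\max_{\rv Z'} \MI{Y}{Z'} = \MI{Y}{X}$. Consequently, $\rv Z \in \arg\max_{\rv Z'} \MI{Y}{Z'}$ (the IB-style notion of sufficiency from \cref{def:min_suff_rep}) is equivalent to the scalar equality $\MI{Y}{Z} = \MI{Y}{X}$.

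The last step is to invoke the standard equality condition of DPI: in a Markov chain $\rv Y - \rv X - \rv Z$, the identity $\MI{Y}{Z} = \MI{Y}{X}$ holds if and only if $\rv Y - \rv Z - \rv X$ is also a Markov chain, i.e. $\rv Y \perp \rv X \mid \rv Z$. This latter condition is precisely classical predictive sufficiency (\cref{def:standard_sufficiency}). Chaining the two equivalences closes both directions of the lemma at once.

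The only real subtlety is to pin down the ambient family of admissible representations over which the $\arg\max$ ranges, so that the identity statistic $\rv Z' = \rv X$ lies in it and the maximum is indeed $\MI{Y}{X}$. Once this is made explicit, the rest is bookkeeping with DPI, so I do not expect a genuine obstacle; the proof is essentially a textbook manipulation and will be short.
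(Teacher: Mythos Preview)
Your proposal is correct and follows essentially the same route as the paper: both arguments first establish $\max_{\rv Z'} \MI{Y}{Z'} = \MI{Y}{X}$ via DPI plus the identity statistic, then reduce the equivalence to the equality case of DPI in the chain $\rv Y - \rv X - \rv Z$. The only cosmetic difference is that you invoke the DPI equality condition as a textbook fact, whereas the paper spells it out via the chain-rule identity $\MI{Y}{Z} + \MI{Y}{X \mid \rv Z} = \MI{Y}{X} + \MI{Y}{Z \mid \rv X}$ and $\MI{Y}{Z \mid \rv X} = 0$.
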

\begin{proof}$ $ We prove the following for statistics $\rv Z=T(\rv X)$ but the same proof holds for representations $T(\rv X,\epsilon)$.
For both directions we use the fact that for any statistics $\max_{\rv z'} \MI{Y}{z'}= \MI{Y}{X} $.
Indeed, $\rv Y - \rv X - \rv Z$ constitutes a Markov Chain as $\rv Z=T(\rv X)$.
From the data processing inequality (DPI)  we have $\MI{Y}{X} \geq \MI{Y}{Z}$, where the equality is achieved by using the identity statistic $\rv Z = \rv X$.

($\implies$) Suppose $\rv Z$ is sufficient by \cref{def:standard_sufficiency}. 
Since $\rv Z$ is a statistic we again have $\MI{Y}{X} \geq \MI{Y}{Z}$.
From \cref{def:standard_sufficiency}, we also have $\rv Y - \rv Z - \rv X $ which implies (DPI) $\MI{Y}{Z} \geq  \MI{Y}{X}$. 
Due to the upper and lower bound we must have $\MI{Y}{X} = \MI{Y}{Z}$, which is equivalent to \cref{def:min_suff_rep}.

($\impliedby$) Assume that $\rv Z$ is sufficient by \cref{def:min_suff_rep}. 
 Using the chain rule of information we have 
 \begin{align*}
\MI{Y}{Z,\rv X} &= \MI{Y}{Z ,\rv X}\\
\MI{Y}{Z}+\MI{Y}{X \cond \rv Z}&=\MI{Y}{X}+\MI{Y}{Z \cond \rv X}  \\
\max_{\rv z'} \MI{Y}{z'} + \MI{Y}{X \cond \rv z}&= \MI{Y}{X}+ \MI{Y}{z \cond \rv X} & \text{\cref{def:min_suff_rep}} \\
\MI{Y}{X \cond \rv z}&= \MI{Y}{z \cond \rv X}  \\
\MI{Y}{X \cond \rv z}&= 0 & \rv Y - \rv X - \rv Z \\
 \end{align*}
The fourth line comes from $\arg \max_{\rv z'} \MI{Y}{z'}=\MI{Y}{X}$.
The last line holds as $\rv Z$ is a statistic of $\rv X$. 
$\MI{Y}{X \cond \rv Z}= 0$ implies that $\rv  Y \perp \rv X \cond \rv Z$ so $\rv Y - \rv Z - \rv x$ is a Markov Chain, which concludes the proof.
\end{proof}
\begin{definition}[Minimal Sufficient Statistic]\label{def:standard_minimality}
A sufficient statistic $\rv Z$ is \textit{minimal} if for any other sufficient statistic $\rv Z'$ , there exists a function $g$ such that $\rv Z = g(\rv Z')$.
\end{definition}
%
%
\begin{proposition}[Equivalence of Minimal Sufficiency]
Let $\rv Z=T(\rv X)$ be a (deterministic) statistic, 
then $\rv Z$ is minimal (by \cref{def:standard_minimality}) and sufficient for $\rv Y$ (by \cref{def:standard_sufficiency})   $\iff$ $\rv Z$ is minimal sufficient by \cref{def:min_suff_rep}.
\end{proposition}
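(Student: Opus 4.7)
The plan is to reduce the proposition to a comparison of Shannon entropies, using three ingredients: (i) for any deterministic statistic $\rv Z = T(\rv X)$, $\MI{X}{Z} = \op{H}{\rv Z}$ (since $\op{H}{\rv Z \mid \rv X} = 0$); (ii) the sufficiency equivalence lemma just proved, which identifies the two notions of sufficiency; and (iii) uniqueness-up-to-bijection of classical minimal sufficient statistics in the finite setting.

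The linchpin of the argument is an auxiliary lemma: the classical minimal sufficient statistic $\rv Z_{ms}$ is a (measurable) function of every info-theoretic sufficient representation $\rv Z' = T'(\rv X, \epsilon)$, and in particular $\MI{X}{Z'} \ge \op{H}{\rv Z_{ms}}$. I would prove it by combining two Markov conditions: the chain $\rv Y - \rv Z' - \rv X$ coming from info-theoretic sufficiency gives $P_{\rv Y \mid \rv X = x, \rv Z' = z'} = P_{\rv Y \mid \rv Z' = z'}$, while the representation structure $\rv Z' = T'(\rv X, \epsilon)$ with $\epsilon$ independent of $(\rv X, \rv Y)$ gives $P_{\rv Y \mid \rv X = x, \rv Z' = z'} = P_{\rv Y \mid \rv X = x}$. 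Together these force $P_{\rv Y \mid \rv X = x} = P_{\rv Y \mid \rv Z' = z'}$ for every $(x, z')$ in the joint support, so the equivalence class of $x$ under ``$P_{\rv Y \mid \rv X = \cdot}$ equal'' — which is exactly $\rv Z_{ms}(x)$ — is determined by $z'$. This yields $\op{H}{\rv Z_{ms} \mid \rv Z'} = 0$ and hence $\MI{X}{Z'} \ge \op{I}{\rv Z_{ms}; \rv Z'} = \op{H}{\rv Z_{ms}}$; setting $\rv Z' = \rv Z_{ms}$ shows the bound is tight.

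With this in hand the forward direction is immediate: if $\rv Z = T(\rv X)$ is classical minimal sufficient, then $\rv Z$ and $\rv Z_{ms}$ are both classical minimal sufficient statistics and must therefore be bijections of each other on their supports (each is a function of the other), so $\MI{X}{Z} = \op{H}{\rv Z} = \op{H}{\rv Z_{ms}} = \min_{\rv Z' \in \suff{}} \MI{X}{z'}$, hence $\rv Z \in \smin{}$. For the backward direction, assume $\rv Z = T(\rv X) \in \smin{}$; by the sufficiency equivalence lemma $\rv Z$ is classically sufficient, so $\rv Z_{ms} = g(\rv Z)$ for some $g$. The info-theoretic minimality gives $\op{H}{\rv Z} = \MI{X}{Z} = \op{H}{\rv Z_{ms}}$, which combined with $\rv Z_{ms} = g(\rv Z)$ in the finite case forces $g$ to be a bijection on the support of $\rv Z$. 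Hence $\rv Z$ is itself a bijection of $\rv Z_{ms}$, so $\rv Z$ is a function of every classical sufficient statistic, i.e., $\rv Z$ satisfies classical minimality.

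The main obstacle is the linchpin lemma: ensuring that stochastic sufficient representations cannot carry strictly less $\rv X$-information than $\rv Z_{ms}$ does. The delicate point is that ``info-theoretic sufficiency'' ($\rv Y - \rv Z' - \rv X$) and ``being a representation of $\rv X$'' ($\rv Y \perp \rv Z' \mid \rv X$) are distinct hypotheses, and only their conjunction pins $P_{\rv Y \mid \rv Z' = z'}$ to $P_{\rv Y \mid \rv X = x}$ pointwise on the joint support; the rest of the argument is a routine bijection-by-entropy-matching step enabled by the finiteness assumption on $\mathcal{X},\mathcal{Z}$.
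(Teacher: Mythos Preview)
Your proof is correct and takes a genuinely different route from the paper's. The paper argues the $(\Rightarrow)$ direction directly from the classical minimality definition: since $\rv Z = g(\rv Z')$ for every sufficient $\rv Z'$, the Markov chain $\rv X - \rv Z' - \rv Z$ holds and DPI gives $\MI{X}{Z} \le \MI{X}{Z'}$. For $(\Leftarrow)$ it proceeds by contrapositive: assuming $\rv Z$ is not classically minimal, it exhibits a sufficient $\rv Z'$ with $\rv Z' = \tilde g(\rv Z)$ for a non-injective $\tilde g$, and then invokes a strict DPI to conclude $\MI{X}{Z'} < \MI{X}{Z}$. You instead anchor everything to a canonical minimal sufficient statistic $\rv Z_{ms}$ (the partition of $\mathcal{X}$ by $P_{\rv Y \mid \rv X = \cdot}$), prove the auxiliary lemma that $\MI{X}{Z'} \ge \op{H}{\rv Z_{ms}}$ for \emph{every} sufficient representation $\rv Z'$ (stochastic included), and then reduce both directions to an entropy-matching / bijection-on-support argument in the finite setting.

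What your approach buys: the definition of $\smin{}$ takes the $\arg\min$ over all sufficient \emph{representations} $\rv Z' = T'(\rv X,\epsilon)$, not just deterministic statistics, so showing that a deterministic $\rv Z$ attains that minimum really does require ruling out stochastic competitors. Your auxiliary lemma does exactly this, by combining the two Markov conditions $\rv Y - \rv Z' - \rv X$ and $\rv Y - \rv X - \rv Z'$ to force $P_{\rv Y \mid \rv X = x}$ to be constant on the fibres of $\rv Z'$; the paper's $(\Rightarrow)$ argument is tacit on this point. What the paper's approach buys: it is shorter and avoids constructing $\rv Z_{ms}$, working instead with an arbitrary witness $\rv Z'$ and a direct strict-DPI step. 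Both arguments ultimately lean on the same finiteness assumption to turn equality of entropies into bijectivity.
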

\begin{proof} From \cref{def:equiv_sufficiency} we know that the sufficiency requirements are equivalent in \cref{def:standard_minimality} and \cref{def:min_suff_rep}. 
We now need to prove that the minimality requirements are also equivalent.

 ($\implies$) Let $\rv Z$ be minimal by \cref{def:standard_minimality}, then for all other sufficient $\rv Z'$ we have  the Markov Chain $\rv X - \rv Z' - \rv Z $. 
 From the DPI, $\MI{X}{Z} \leq \MI{ X}{Z'}$. This completes the first direction of the proof.
 
 ($\impliedby$) We will prove it by contrapositive. 
 Suppose $\rv Z \defeq T(\rv X) \in \suff{}$ is not minimal by \cref{def:standard_minimality}, i.e. there exists a sufficient statistic $\rv Z' \defeq \rv T'(\rv X) \in \suff{}$ s.t. no function $g$ satisfies $T(\rv X) = g(T'(\rv X))$. 
Then the binary relation $\{(T'(x), T(x)) \cond x \in \mathcal{X}\}$ is not univalent, therefore the converse relation $\{(T(x),T'(x)) \cond x \in \mathcal{X}\}$ is not injective.
As a result, there exists a non injective function $\Tilde{g}$ such that
$T'(\rv X) = \Tilde{g}(T(\rv X))$. 
From the DPI we have $ \MI{X}{Z'} < \MI{ X}{Z}$ with a strict inequality due to the non injectivity of $\Tilde{g}$.
So $\rv Z$ is not minimal by \cref{def:min_suff_rep}, thus concluding the proof.
\end{proof}
We emphasize that the second implication ($\impliedby$) does \textit{not} hold in the case of a representation $\rv Z=T(\rv X,\epsilon)$.
Indeed, \cref{def:standard_minimality} is not really meaningful for ``stochastic'' representations.

\subsubsection{Replacing $\CHF{Y}{\varnothing}$ by $\op{H}{\rv Y}$ }
\label{sec:proof_emptyset}

Due to our ``arbitrary constant prediction of $\V$'' assumption, we can replace $\CHF{Y}{\varnothing}$ by $\op{H}{\rv Y}$ in \citepos{xu2020theory} definition of $\V$-information.

\begin{restatable}{proposition}{marginalent}\label{proposition:marginalent}
For all predictive families $\V$ we have $\CHF{Y}{\varnothing}=\op{H}{\rv y}$. 
\end{restatable}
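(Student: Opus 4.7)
The plan is to unpack the definition of $\CHF{Y}{\varnothing}$ and then appeal to two ingredients: the \textbf{arbitrary constant prediction} assumption on $\V$ (\cref{appx:assumptions_V}) and the strict properness of log loss (Gibbs' inequality). Mirroring \cref{eqn:Vinformation} with an empty conditioning set, we have
\begin{equation*}
\CHF{Y}{\varnothing} = \inf_{f \in \V}\, \E{y \sim P_{\rv Y}}{-\log f[\varnothing](y)},
\end{equation*}
where $f[\varnothing] \in \mathcal{P}(\mathcal{Y})$ is the prediction of $f$ conditioned on no input, equivalently a constant output of $f$.

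First I would show the inequality $\CHF{Y}{\varnothing} \ge \op{H}{\rv Y}$. For any $f \in \V$, writing $Q = f[\varnothing] \in \mathcal{P}(\mathcal{Y})$, Gibbs' inequality gives
\begin{equation*}
\E{y \sim P_{\rv Y}}{-\log Q(y)} = \op{H}{\rv Y} + \mathrm{KL}(P_{\rv Y} \,\Vert\, Q) \ge \op{H}{\rv Y}.
\end{equation*}
Taking infimum over $f \in \V$ yields the lower bound.

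Next I would show the matching upper bound $\CHF{Y}{\varnothing} \le \op{H}{\rv Y}$ by exhibiting an $f \in \V$ achieving the value $\op{H}{\rv Y}$. This is exactly where the \textbf{arbitrary constant prediction of $\V$} assumption is used: for the particular choice $P_{\rv Y} \in \mathcal{P}(\mathcal{Y})$, there exists $f^* \in \V$ with $f^*[z] = P_{\rv Y}$ for all $z \in \mathcal{Z}$, hence in particular $f^*[\varnothing] = P_{\rv Y}$. Plugging this $f^*$ into the defining infimum gives $\E{y \sim P_{\rv Y}}{-\log P_{\rv Y}(y)} = \op{H}{\rv Y}$, which combined with the lower bound yields equality.

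There is no real obstacle here; the only subtlety is making sure the ``arbitrary constant prediction'' assumption is strong enough to include the constant $P_{\rv Y}$ as a valid output of some $f \in \V$, which is precisely how that assumption is phrased in \cref{appx:assumptions_V}. Without it, the infimum would in general only give an upper bound $\op{H}{\rv Y}$ plus a nonnegative gap equal to the smallest KL divergence from $P_{\rv Y}$ to the set of achievable constant predictions of $\V$, which is why the assumption is explicitly invoked in the statement.
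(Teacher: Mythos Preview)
Your proof is correct and uses the same two ingredients as the paper: strict properness of the log loss (Gibbs' inequality) and the arbitrary constant prediction assumption. The paper organizes the argument as a chain of equalities (passing through the subfamily $\V_{\varnothing}$ of constant predictors), whereas you split it into matching upper and lower bounds; the logical content is identical.
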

\begin{proof}
Denote $\V_{\varnothing} \subset \V$ the subset of $f$ that satisfy $f[x] = f[\varnothing], \ \forall x \in \mathcal{X}$.
\begin{align*}
\CHF{Y}{\varnothing} &\defeq \inf_{f \in \V } \E{z,y \sim P_{\rv Z, \rv Y}}{- \log f[\varnothing](y) } \\
&=  \inf_{f \in \V_{\varnothing} } \E{z,y \sim P_{\rv Z, \rv Y}}{- \log f[\varnothing](y) } \\
&=  \inf_{f \in \V_{\varnothing} } \E{z,y \sim P_{\rv Z,\rv Y}}{- \log f[z](y) } \\
&=  \E{y \sim P_{\rv Y}}{- \log P_{\rv Y} } & \text{Properness and Arbitrary Const. Pred.} \\
&=  \op{H}{\rv y} \\
\end{align*}
The penultimate line uses the properness of the log loss (best unconditional predictor of y is $P_{\rv Y}$) and our assumption regarding ``arbitrary constant prediction'', which implies that there exists $f \in V$ s.t. $\forall z \in \mathcal{Z}$ we have $f[z]=P_{\rv Y}$.
\end{proof}

\subsection{$\V$-Sufficiency}
\label{sec:proof_sufficiency}

In this subsection, we prove our claims in \cref{sec:theory_qsuf}.
First, let us show that $\CHF{Y}{Z}$ is indeed the best achievable risk for $\rv Z$.

\begin{lemma}
\label{lemma:Vent_risk}
For any predictive family $\V$, $\min_{f \in \V} \Risk{}=\CHF{Y}{Z}$.
\end{lemma}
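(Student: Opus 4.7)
The plan is to unpack the definitions of $\Risk{}$ and $\CHF{Y}{Z}$ and show that the outer expectation in $\Risk{}$ can be rewritten so that it coincides with the expression inside $\inf_{f \in \V}$ that defines $\CHF{Y}{Z}$. Once that is established, taking the infimum over $f \in \V$ on both sides gives the claim.

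First I would write out $\Risk{} = \E{y,x \sim P_{\rv Y, \rv X}}{\E{z \sim P_{\rv Z \cond x}}{-\log f[z](y)}}$ using the log loss $S(y,f[z]) = -\log f[z](y)$. The central step is to marginalize out $\rv X$. Since the encoder $P_{\rv Z | \rv X}$ defines the joint distribution via $P_{\rv X, \rv Y, \rv Z}(x,y,z) = P_{\rv X, \rv Y}(x,y) \, P_{\rv Z | \rv X}(z|x)$ (equivalently $\rv Y - \rv X - \rv Z$ forms a Markov chain), summing over $x$ with the sample spaces finite (see the assumptions in \cref{appx:generic_assumptions}) yields
\begin{equation*}
\Risk{} = \sum_{y,z} \Bigl[ \sum_{x} P_{\rv X, \rv Y}(x,y) \, P_{\rv Z | \rv X}(z|x) \Bigr] \cdot (-\log f[z](y)) = \E{z, y \sim P_{\rv Z, \rv Y}}{-\log f[z](y)}.
\end{equation*}
This identity is purely a rewriting: the integrand $-\log f[z](y)$ depends only on $(z,y)$ (not on $x$ directly), so it is legal to push the inner expectation through and replace $P_{\rv X, \rv Y} \, P_{\rv Z | \rv X}$ by its marginal $P_{\rv Z, \rv Y}$.

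Finally I would take the infimum over $f \in \V$ on both sides: the left-hand side becomes $\min_{f \in \V} \Risk{}$ (attained because $\V$ is a functional family and we are in the finite setting) and the right-hand side is exactly the definition of $\CHF{Y}{Z}$ in \cref{eqn:Vinformation}. Since no step depended on the specific choice of $f$, only on the structure of the joint distribution and the log loss, the equality follows.

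There is no real obstacle here: the result is essentially a bookkeeping identity showing that the expected log loss over $(x,y,z)$ equals the expected log loss over $(y,z)$ because the predictor only sees $z$. The only care needed is to make the marginalization explicit, and to note that we implicitly use the finite sample-space assumption so we can exchange sums freely, and the fact that $P_{\rv Z | \rv X}$ does not depend on $y$ so that marginalizing $x$ against $P_{\rv X, \rv Y} \, P_{\rv Z | \rv X}$ gives $P_{\rv Y, \rv Z}$.
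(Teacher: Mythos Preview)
Your proposal is correct and follows essentially the same approach as the paper: both arguments hinge on the Markov chain $\rv Y - \rv X - \rv Z$ to marginalize out $\rv X$ (since the integrand depends only on $(z,y)$), then invoke the finite sample space assumption to replace $\inf$ by $\min$. The only cosmetic difference is that the paper starts from $\CHF{Y}{Z}$ and inserts $\rv X$ to reach $\Risk{}$, whereas you start from $\Risk{}$ and remove $\rv X$; the content is identical.
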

\begin{proof}
This directly come from the definition of predictive information:
\begin{align*}
\CHF{Y}{Z} &\defeq  \inf_{f \in \V} \E{z,y \sim P_{\rv Z, \rv Y}}{- \log f[z](y) } \\
&= \inf_{f \in \V} \E{y \sim P_{\rv y}}{\E{z \sim P_{\rv Z|  y}}{- \log f[z](y) }} &  \\
&= \inf_{f \in \V} \E{y \sim P_{\rv y}}{\E{x \sim P_{\rv X| y}}{\E{z \sim P_{\rv Z|  x}}{- \log f[z](y) }}} & \rv Y - \rv X - \rv Z \\
&=  \inf_{f \in \V} \E{y,x \sim P_{\rv y, \rv x}} {\E{z \sim P_{\rv Z \cond x}}{- \log f[z](y) }} \\
&=  \inf_{f \in \V}\Risk{} & \text{Def. Risk} \\
&= \min_{f \in \V} \Risk{} & \text{Finite Sample Space}
\end{align*}
\end{proof}

\Cref{proposition:opt_qsuf} is a trivial corollary of the previous lemma.

\optimalQsuf*
\begin{proof}
\begin{align*}
\Qsuff{} &\defeq \arg \max_{\rv Z} \DIF{Y}{Z} \\
&= \arg \max_{\rv Z} \op{H}{\rv Y} - \CHF{Y}{Z} \\
&= \arg \min_{\rv Z}  \CHF{Y}{Z}  & \text{Const. } \op{H}{\rv Y}\\
&= \arg \min_{\rv Z} \min_{f \in \V} \Risk{} & \text{\cref{lemma:Vent_risk}} 
\end{align*}
\end{proof}

Let us now show that when the label is deterministic $\forall V, \ \mathrm{R}^*(\V)=0$.
This may be counterintuitive, but the following proof shows that we are simply shifting the burden of classification from the classifier to the encoder --- which is unconstrained.

\begin{proposition}
\label{proposition:zero_risk}
Assume that labels are a deterministic function of the data $\exists t : \mathcal{X} \to \mathcal{Y}$ s.t. $\rv Y = t(\rv X)$, then
for any predictive family $\V$ the best achievable risk is $\min_{\rv Z} \min_{f \in \V} \Risk{}=0$.
\end{proposition}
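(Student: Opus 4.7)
The plan is to exhibit an explicit encoder $P_{\rv Z \cond \rv X}$ and a classifier $f \in \V$ for which the risk is exactly zero; since the log loss is non-negative, this will give the desired equality. Intuitively, because $\rv Z$ is unconstrained, we can offload the whole classification task to the encoder and let $f$ do only a trivial lookup, echoing the comment that we are merely ``shifting the burden of classification from the classifier to the encoder.''

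First I would invoke the \emph{non-empty preimage of labels} assumption from \cref{appx:assumptions_V}: there exists a single $f^\star \in \V$ such that for every $y \in \mathcal{Y}$ one can pick a representation $z_y \in \mathcal{Z}$ with $f^\star[z_y](y) = 1$. (The \emph{at least as many representations as labels} assumption $|\mathcal{Z}| \geq |\mathcal{Y}|$ ensures the $z_y$ can be chosen without collisions, though the argument does not actually require distinctness.) Next, using the deterministic labeling hypothesis $\rv Y = t(\rv X)$, I would define the deterministic encoder
\begin{equation*}
P_{\rv Z \cond x} \defeq \delta_{z_{t(x)}}, \qquad x \in \mathcal{X},
\end{equation*}
so that sampling $z \sim P_{\rv Z \cond x}$ always yields $z_{t(x)}$.

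With this choice, for any $(x,y)$ drawn from $P_{\rv X, \rv Y}$ we have $y = t(x)$ almost surely, and thus $z = z_{t(x)} = z_y$, so that $f^\star[z](y) = f^\star[z_y](y) = 1$ and $-\log f^\star[z](y) = 0$. Plugging this into the definition of the risk,
\begin{equation*}
\mathrm{R}(f^\star, \rv Z) = \E_{y,x \sim P_{\rv X, \rv Y}}\!\left[\E_{z \sim P_{\rv Z \cond x}}\!\left[-\log f^\star[z](y)\right]\right] = 0.
\end{equation*}
This yields $\min_{\rv Z} \min_{f \in \V} \Risk{} \leq 0$. The reverse inequality is immediate because $f[z](y) \in [0,1]$ forces $-\log f[z](y) \geq 0$, so the infimum is bounded below by zero.

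I do not anticipate a serious obstacle here: the proof is essentially a bookkeeping exercise on the assumptions. The only subtlety is making sure that a \emph{single} $f^\star$ works simultaneously for all labels, which is exactly what the non-empty preimage assumption delivers (otherwise one would need a separate classifier per label and could not conclude $\min_{f \in \V} \Risk{} = 0$). The determinism of $t$ is what lets the encoder commit to $z_{t(x)}$ without averaging over a non-trivial conditional $P_{\rv Y \cond x}$; relaxing this hypothesis is flagged earlier in the appendix as future work.
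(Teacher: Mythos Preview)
Your proof is correct and follows essentially the same strategy as the paper's: invoke the non-empty-preimage assumption to fix a single $f^\star \in \V$, then define the encoder to send each $x$ into the $f^\star$-preimage of its label $t(x)$, and verify that the resulting log loss is identically zero. The only cosmetic difference is that the paper spreads $P_{\rv Z^*\cond x}$ \emph{uniformly} over the entire preimage $\preimage{f^\star}{t(x)}$ rather than placing a Dirac at one chosen $z_{t(x)}$; this is immaterial for the present proposition, though the paper later reuses that uniform construction (\cref{eqn:z_star}) in \cref{lemma:zeroVminimal} where the uniformity is what yields $\rv Z^*_y \perp \rv X_y$.
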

\begin{proof}
First notice that $\min_{\rv Z} \min_{f \in \V} \Risk{} \geq 0$ due to the non-negativity of the log loss. 
We show that the inequality is an equality by constructing a representation $\rv Z^*$ and a $f \in V$ such that $\mathrm{R}(f,\rv Z^*)=0$. Intuitively, we do so by finding ``buckets'' of $\rv Z$ that correspond to a certain label and then having an encoder which essentially classifies each input $x$ to the correct bucket. Formally:

Let $\preimage{f}{y} \defeq \{ z \in \mathcal{Z} \ s.t. \ f[z](y)=1  \}$ denote the preimage of a deterministic label by a classifier $f$.
By the ``Non-empty Preimage of Labels'' assumption we know that $\forall \V$ there exists $ f \in \family{}$ s.t. $\forall y \in \mathcal{Y}$, the preimage is non-empty $|\preimage{f}{y}| \geq 0$.
Let $f$ be one of those predictors.
We construct the desired $\rv Z^*$ by setting its probability mass function $\forall z \in \mathcal{Z}, x \in \mathcal{X}$ as a uniform distribution over the $f$ preimage of the label of $x$ (deterministic label assumption $\rv Y=t(x)$) .
\begin{align}
P_{\rv Z^* \cond \rv X}(z|x) \defeq  \begin{cases}
  \frac{1}{|\preimage{f}{t(x)}|} & \text{if } 
 z \in \preimage{f}{t(x)}\\
0 & \text{else}
\end{cases} \label{eqn:z_star}
\end{align}
We now show that the risk $\mathrm{R}(f,\rv Z^*)$ is indeed 0:
\begin{align*}
\mathrm{R}(f,\rv Z^*) 
&\defeq \E{y,x \sim P_{\rv y, \rv x}} {\E{z \sim P_{\rv Z \cond x}}{- \log f[z](y) }} \\
&= \sum_{y \in \mathcal{Y}} \sum_{x \in \mathcal{X}} \sum_{z \in \mathcal{Z}} P_{\rv{Y}}(y) P_{\rv{X} \cond \rv{Y}}(x \cond y) P_{\rv{Z}^{*}|\rv{x}}(z \cond x)[-\log f[z](y)] \\
&= \sum_{y \in \mathcal{Y}} 
\sum_{x \in \mathcal{X}}  
\sum_{z \in \preimage{f}{t(x)}}
P_{\rv Y}(y)
P_{\rv X \cond \rv Y}(x|y)
\frac{- \log f[z](y)}{|\preimage{f}{t(x)}|} & \text{\cref{eqn:z_star}}
\\
&= 
\sum_{y \in \mathcal{Y}}  \br{
\sum_{z \in \preimage{f}{y}}
P_{\rv Y}(y)
\frac{- \log f[z](y)}{|\preimage{f}{y}|}}
\br{\sum_{x \in \mathcal{X}}P_{\rv X \cond \rv Y}(x|y)} 
\\
&= 
\sum_{y \in \mathcal{Y}}  \br{
\sum_{z \in \preimage{f}{y}}
P_{\rv Y}(y)
\frac{- \log 1}{|\preimage{f}{y}|}} * 1 & \text{Def. }f \\
 &= 0 
\end{align*}
The fourth line uses $y=t(x)$, thus removing the dependence with $\rv X$.
The penultimate line uses $\forall z \in \preimage{f}{y}$, $f[z](y)=1$  which is the defining property of the selected $f$.
\end{proof}

\subsection{Theorem}
\label{appx:proof_optimal}

We will now prove the main result of our paper, namely that \textit{any} ERM that uses a $\V$-minimal $\V$-sufficient representation will reach the best achievable test loss. 

\begin{theorem*}\label{theo:opt_qmin_formal} 
Suppose $\rv Y$ is a deterministic labeling $t(\rv X)$. 
Let $\V \in \mathcal{P}{(\mathcal{Y} | \mathcal{Z})}$ be a predictive family satisfying the assumptions in \cref{appx:assumptions_V}.
Under the assumptions stated in \cref{appx:generic_assumptions} , we have that:
if $\rv Z$ is a $\V$-minimal $\V$-sufficient representation of $\rv X$ for $\rv Y$, then any ERM on any dataset will achieve the best achievable risk, i.e.
\[
\rv Z \in \Qsmin{} \implies 
\forall M \geq |\mathcal{Y}|, \
\sampleD{}, \ \forall \hpred{} \in \hat{\V}(\mathcal{D}) \text { we have } \mathrm{R}(\hpred{},\rv Z)=\min_{\rv Z} \min_{f \in \V} \Risk{}
\]
\end{theorem*}

\subsubsection{Lemmas for \cref{theo:opt_qmin}}
\label{appx:proof_lemmas}

In this subsection we show three simple lemmas that are useful for proving \cref{theo:opt_qmin}.
First we show that in the deterministic label setting, 
$\V$-sufficiency implies that the representaion space can be partitioned by $\mathcal{Y}$, i.e., the supports of each $\rv Z_y$ are non-overlapping.

\begin{restatable}{lemma}{VsuffSupp}\label{lemma:VsuffSupp}
Assume $\rv Y$ is a deterministic labeling $t(\rv X)$. Then
$\rv Z \in \Qsuff{} \implies \forall y\neq y', \ y,y' \in \mathcal{Y}$ we have $\mathrm{supp}(P_{\rv Z | y}) \cap \mathrm{supp}(P_{\rv Z | y'}) = \varnothing$.
\end{restatable}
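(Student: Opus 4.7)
\textbf{Proof plan for \cref{lemma:VsuffSupp}.} The strategy is to show that $\V$-sufficiency forces the existence of a predictor in $\V$ with \emph{zero} expected log loss, and a zero log loss on a finite joint distribution can only occur if each representation in the support is deterministically mapped to a single label. Overlapping supports would then force a single distribution to put unit mass on two distinct labels, a contradiction.

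\textbf{Step 1: From $\V$-sufficiency to zero risk.} By \cref{lemma:Vent_risk}, $\CHF{Y}{Z} = \min_{f \in \V} \Risk{}$, so the optimal risk attainable on $\rv Z$ equals $\op{H}{\rv Y} - \DIF{Y}{Z}$. Since $\rv Z \in \Qsuff{}$ by assumption, $\DIF{Y}{Z} = \max_{\rv Z'} \DIF{Y}{Z'}$. Combining this with \cref{proposition:zero_risk}, which uses the deterministic labeling hypothesis and the ``non-empty preimage of labels'' assumption to guarantee $\min_{\rv Z'}\min_{f \in \V} \mathrm{R}(f,\rv Z') = 0$, we conclude that $\min_{f \in \V} \Risk{} = 0$. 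Because the sample spaces are finite, the infimum is attained, giving $f^* \in \V$ with $\mathrm{R}(f^*,\rv Z) = 0$.

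\textbf{Step 2: Zero risk forces deterministic, support-consistent predictions.} The expected loss expands as
\begin{equation*}
0 = \mathrm{R}(f^*,\rv Z) = \sum_{y \in \mathcal{Y}} \sum_{z \in \mathcal{Z}} P_{\rv Y, \rv Z}(y, z)\bigl(-\log f^*[z](y)\bigr).
\end{equation*}
Each summand is non-negative (log loss is non-negative), so every term vanishes. Hence for every $(y, z)$ with $P_{\rv Y, \rv Z}(y, z) > 0$ we must have $f^*[z](y) = 1$. Equivalently, whenever $z \in \mathrm{supp}(P_{\rv Z \mid y})$ (which, combined with $P_{\rv Y}(y) > 0$, means $P_{\rv Y, \rv Z}(y,z) > 0$), the predictor $f^*$ places all its mass at $y$, i.e.\ $f^*[z](y) = 1$.

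\textbf{Step 3: Disjointness of supports.} Suppose for contradiction that some $z^*$ lies in both $\mathrm{supp}(P_{\rv Z \mid y})$ and $\mathrm{supp}(P_{\rv Z \mid y'})$ with $y \neq y'$ (and both labels have positive marginal mass, else their conditional supports are undefined/empty and there is nothing to prove). Step~2 forces $f^*[z^*](y) = 1$ and $f^*[z^*](y') = 1$, but $f^*[z^*] \in \mathcal{P}(\mathcal{Y})$ sums to one, so it cannot assign probability one to two distinct elements. This contradicts the existence of $f^*$, proving $\mathrm{supp}(P_{\rv Z \mid y}) \cap \mathrm{supp}(P_{\rv Z \mid y'}) = \varnothing$. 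The only subtlety is handling labels with $P_{\rv Y}(y) = 0$, where the conditional is vacuous; for such labels the support is empty and the disjointness is immediate, so the statement holds uniformly over $y \neq y'$.
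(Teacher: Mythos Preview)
Your proof is correct and closely parallels the paper's argument, but the two run in opposite logical directions and use slightly different tools. The paper argues by contrapositive: assuming the supports overlap at some $z$, Bayes' rule gives $P_{\rv Y|\rv Z}(y|z)>0$ and $P_{\rv Y|\rv Z}(y'|z)>0$, hence the Shannon conditional entropy $\op{H}{\rv Y|\rv Z}$ is strictly positive, and by monotonicity of predictive entropy $\CHF{Y}{Z}\geq \op{H}{\rv Y|\rv Z}>0$, contradicting \cref{proposition:zero_risk}. You instead argue directly: $\V$-sufficiency together with \cref{proposition:zero_risk} yields an $f^*\in\V$ with zero risk, which forces $f^*[z](y)=1$ on the joint support and makes overlapping supports impossible since $f^*[z^*]$ cannot put unit mass on two labels. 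Your route is a touch more elementary, as it avoids invoking Shannon entropy and the monotonicity of $\V$-entropy altogether; the paper's route is more conceptual in that it isolates exactly where the information-theoretic obstruction lies. Both rely on the same key ingredients (\cref{lemma:Vent_risk}, \cref{proposition:zero_risk}, finite sample spaces) and reach the conclusion with comparable effort.
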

\begin{proof}
Let us prove it by contrapositive. 
Namely, we will show $\mathrm{supp}(P_{\rv Z | y}) \cap \mathrm{supp}(P_{\rv Z | y'}) \neq \varnothing \implies \rv Z \not \in \Qsuff{}$.
$\mathrm{supp}(P_{\rv Z | y}) \cap \mathrm{supp}(P_{\rv Z | y'}) \neq \varnothing$ implies that $\exists y \neq y' \in \mathcal{Y}, \exists z \in \mathcal{Z}$ s.t. $P_{\rv Z | \rv Y}(z|y')  \neq 0$ and $P_{\rv Z | \rv Y}(z|y) \neq 0$.
Using Bayes rule (and the fact that $P_{\rv Y}$ has support for all labels), that means $P_{\rv Y | \rv Z}(y'|z)   \neq 0$ and $P_{\rv Y | \rv Z}(y|z)  \neq 0$ so there exists no $ y \in \mathcal{Y}$ s.t. $P_{\rv Y | \rv Z}(y'|z) =  1$.
Due to the finite sample space assumption and monotonicity of $\V$ predictive entropy, this implies $0 < \op{H}{\rv Y \cond \rv Z}  = \CHU{Y}{Z}  \leq \CHF{Y}{Z} $. 
From \cref{proposition:zero_risk} we conclude that $\rv Z \not \in \Qsuff{}$ as desired.
\end{proof}

We now show the simple fact that, if some classifier achieves zero test loss then being an ERM is equivalent to achieving zero training loss.

\begin{restatable}{lemma}{existperfect}\label{lemma:existperfect}
Let $\sampleD{}$ be a training dataset.
Suppose $\exists f \in \V$  s.t. $\Risk{}=0$, then:
$\hpred{} \in \hat{\V}^*(\mathcal{D}) \iff 
 \hat{\mathcal{R}}(\hpred{}, \rv Z ; \mathcal{D})=0$.
\end{restatable}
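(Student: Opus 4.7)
The plan is to show both directions by first establishing that the hypothesis forces the minimum empirical risk to be exactly zero, after which the equivalence becomes immediate from the definition of an ERM.

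The key observation is that the log loss $-\log f[z](y)$ is non-negative (since $f[z](y) \in [0,1]$), hence both $\Risk{f}$ and $\eRisk{f}$ are non-negative for every $f \in \V$. I would first argue that the hypothesized $f \in \V$ with $\Risk{f} = 0$ must also satisfy $\eRisk{f} = 0$. Since
\[
\Risk{f} = \E{y,x \sim P_{\rv Y, \rv X}}{\E{z \sim P_{\rv Z \cond x}}{-\log f[z](y)}} = 0
\]
and the integrand is non-negative, we get $f[z](y) = 1$ for all $(y,x)$ in the support of $P_{\rv Y, \rv X}$ and all $z$ in the support of $P_{\rv Z \cond x}$. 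Because the training dataset $\mathcal{D}$ is drawn from $P_{\rv Y, \rv X}$, every $(y,x) \in \mathcal{D}$ lies in this support, so each inner expectation in the empirical average vanishes, giving $\eRisk{f} = 0$.

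With this in hand, I would conclude $\min_{g \in \V} \hat{\mathcal{R}}(g,\rv Z;\mathcal{D}) \leq \eRisk{f} = 0$, and combined with non-negativity, the minimum empirical risk over $\V$ equals zero. The equivalence then follows directly: $(\Rightarrow)$ if $\hpred{} \in \hat{\V}^*(\mathcal{D}) = \arg\min_{g \in \V} \hat{\mathcal{R}}(g, \rv Z; \mathcal{D})$, then $\hat{\mathcal{R}}(\hpred{}, \rv Z; \mathcal{D})$ equals the minimum, which we have just shown is $0$; $(\Leftarrow)$ conversely, if $\hat{\mathcal{R}}(\hpred{}, \rv Z; \mathcal{D}) = 0$, then $\hpred{}$ attains the minimum (which is $0$) and is therefore an ERM.

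There is no real obstacle here — the lemma is essentially a bookkeeping observation. The only mild care needed is when passing from $\Risk{f}=0$ to $\eRisk{f}=0$, which requires invoking non-negativity of the log loss and the fact that training samples lie in the support of the data distribution; beyond that, both implications are immediate from the definition of an ERM.
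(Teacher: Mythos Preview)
Your proposal is correct and follows essentially the same approach as the paper: use non-negativity of the log loss to deduce that $\Risk{f}=0$ forces $f[z](y)=1$ on the support, conclude that the empirical risk of this same $f$ vanishes because training pairs lie in that support, and then read off the equivalence from the definition of an ERM. The paper phrases the support argument in terms of $\mathrm{supp}(\rv Z)$ and $\mathrm{supp}(P_{\rv Y\mid\rv Z})$ rather than $\mathrm{supp}(P_{\rv Y,\rv X})$ and $\mathrm{supp}(P_{\rv Z\mid x})$, but this is only a cosmetic difference.
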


\begin{proof}
As $\Risk{}$ is an expectation of a non-negative discrete r.v., it is equal to zero if and only if  $\forall z \in \mathrm{supp}(\rv Z), \ \forall y \in \mathrm{supp}(P_{\rv Y | \rv z})$  we have $\log f[z](y)=0$.
$\hat{\mathcal{R}}(f, \rv Z ; \mathcal{D})$ is also a weighted average (discrete expectation) of $\log f[z](y)=0$ over a subset of the previous support $\hat{z} \in \mathrm{supp}(\hrv Z) \subseteq \mathrm{supp}(\rv Z), \ \forall \hat{y} \in \mathrm{supp}(\hrv Y) \subseteq \mathrm{supp}(\rv Y)$ so we conclude that $\hat{\mathcal{R}}(f, \rv Z ; \mathcal{D})=0$.
As the minimal training loss is always zero and the risk cannot be less than zero (non negativity of log loss and finite sample space) the definition of ERMs becomes $ \hat{\V}(\mathcal{D}) \defeq  \arg \min_{f \in \V}  \eRisk{} = \{ f \in \V  \ s.t. \  \eRisk{}=0 \}$ as desired.
\end{proof}

A representation is $\V$-minimal $\V$-sufficient if and only if it has no $\V$-information with any of the terms in any $y$ decomposition of $\rv X$.

\begin{restatable}{lemma}{zeroVminimal}\label{lemma:zeroVminimal}
Assume $\rv Y$ is a deterministic labeling $t(\rv X)$, then:

$\forall \rv Z \in \Qsmin{} \quad \iff \quad \rv Z \in \Qsuff{}$ and $\forall y \in \mathcal{Y}, \forall \rv N \in \decxy{}$ we have $\DIF{N}{Z_y}=0$. 
\end{restatable}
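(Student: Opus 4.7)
The plan is to exploit two facts: (i) $\V$-information is non-negative, so $\DIFxzCy{}$ is a sum of non-negative terms that vanishes iff every summand vanishes; and (ii) the representation $\rv Z^{*}$ constructed in the proof of \cref{proposition:zero_risk} is simultaneously $\V$-sufficient and satisfies $\DIFxzCy{} = 0$, so $\min_{\rv Z \in \Qsuff{}} \DIFxzCy{} = 0$. The equivalence will then fall out.

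First I would invoke non-negativity of $\V$-information (a basic property restated in \cref{sec:v_information}), which makes every term $\DIF{N}{Z_y}$ in the double sum defining $\DIFxzCy{}$ non-negative. Hence $\DIFxzCy{} \geq 0$, with equality if and only if $\DIF{N}{Z_y} = 0$ for every $y \in \mathcal{Y}$ and every $\rv N \in \decxy{}$. At this stage the ($\Leftarrow$) direction is already almost complete: a $\V$-sufficient $\rv Z$ with all summands zero achieves the lower bound $0$, so if $0$ is indeed attainable within $\Qsuff{}$, such a $\rv Z$ lies in $\Qsmin{}$.

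Next I would exhibit a $\V$-sufficient $\rv Z^{*}$ attaining $\DIFxzCy{} = 0$, which pins the minimum at $0$ and closes the ($\Rightarrow$) direction. Pick $f \in \V$ whose label preimages $\preimage{f}{y}$ are all non-empty (guaranteed by the assumptions in \cref{appx:assumptions_V}) and let $P_{\rv Z^{*} \cond \rv X}(z | x) = \mathds{1}[z \in \preimage{f}{t(x)}] / |\preimage{f}{t(x)}|$ as in the proof of \cref{proposition:zero_risk}. That proof shows $\mathrm{R}(f, \rv Z^{*}) = 0$, so $\rv Z^{*} \in \Qsuff{}$ by \cref{proposition:opt_qsuf}. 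The crucial observation is that for any two $x, x'$ with $t(x) = t(x') = y$, the conditionals $P_{\rv Z^{*} \cond x}$ and $P_{\rv Z^{*} \cond x'}$ coincide (both are uniform on $\preimage{f}{y}$), so $\rv Z^{*}_{y} \perp \rv X_{y}$. Since every $\rv N \in \decxy{}$ is a function of $\rv X_{y}$, this yields $\rv Z^{*}_{y} \perp \rv N$ and therefore $\DIF{N}{Z^{*}_{y}} = 0$ for all $y$ and all $\rv N$, giving $\DIFxzCy{} = 0$ for $\rv Z^{*}$.

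Putting the pieces together: $\min_{\rv Z \in \Qsuff{}} \DIFxzCy{} = 0$, so $\rv Z \in \Qsmin{}$ iff $\rv Z \in \Qsuff{}$ and $\DIFxzCy{} = 0$, which by the first step is iff $\rv Z \in \Qsuff{}$ and $\DIF{N}{Z_y} = 0$ for every $y$ and every $\rv N \in \decxy{}$. The only delicate point is verifying the independence $\rv Z^{*}_{y} \perp \rv X_{y}$, which requires unpacking the definitions of the conditional random variables $\rv X_{y}, \rv Z_{y}$ and using that, under deterministic labels, $t$ is constant on $\{x : t(x) = y\}$ so the stochastic encoder does not distinguish between inputs sharing a label.
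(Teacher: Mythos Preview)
Your proposal is correct and follows essentially the same approach as the paper: both use non-negativity of $\V$-information to reduce the claim to showing that the minimum of $\DIFxzCy{}$ over $\Qsuff{}$ is zero, and both exhibit this via the same $\rv Z^{*}$ from \cref{proposition:zero_risk}, arguing that $P_{\rv Z^{*}\cond x}$ depends on $x$ only through $t(x)=y$ so that $\rv Z^{*}_{y}\perp\rv X_{y}$ and hence $\rv Z^{*}_{y}\perp\rv N$ for every $\rv N\in\decxy{}$. The paper's write-up is organized as two separate implications rather than first pinning the minimum at zero, but the content is the same.
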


\begin{proof} $ $ \\
($\impliedby$) Due to the non negativity of $\V$-information, we have $\forall \rv Z' \in \Qsuff{}$,  $0=\DIF{N}{Z_y} \leq \DIF{N}{Z'_y} $ so $\rv Z$ reaches the minimal achievable value in each term and thus also on their expectation $\DIFxzCy{}=0$. We thus conclude that $\rv Z \in \Qsmin{} \defeq \arg \min_{\rv Z \in \Qsuff{}} \DIFxzCy{}$.

($\implies$) 
Let us show that there is at least one $\rv Z \in \Qsmin{}$ s.t. $\forall y \in \mathcal{Y}, \forall \rv N \in \decxy{}, \ \DIF{N}{Z_y}=0 $, from which we will conclude that they all have to satisfy the previous property
in order to minimize $\DIFxzCy{}$.
Let us consider $\rv Z^*$ as defined in \cref{eqn:z_star}.
Notice that $P_{\rv Z_y^* \cond \rv X_y}(z|x) = \frac{1}{|\preimage{f}{t(x)}|} = \frac{1}{|\preimage{f}{y}|} = P_{\rv Z_y^*}(z_y)$, where the last equality comes from the fact that $z_y$ is associated with a single label $y$.
An other way of saying it, is that $\rv X_y - y - \rv Z^*_y$ forms a Markov Chain, but $y$ is a constant.
We thus conclude $\forall y \in \mathcal{Y}$ $\rv Z_y^* \perp \rv X_y$.
By definition of $y$ decomposition of $\rv X$ (\cref{eqn:dec}), we also know that $\forall \rv N \in \decxy{}$, $\exists t' : \mathcal{X} \to \mathcal{Y}$ s.t. $\rv N = t'( \rv X_y)$, from which we conclude that $\rv Z_y^* \perp \rv N$.
Due to the independence property of $\V$-information, we have $\forall y \in \mathcal{Y}$, $\forall \rv N \in \decxy{}$, $\DIF{N}{Z_y}=0$ as desired.
As we found one $\rv Z^* \in \Qsmin{}$ s.t. this is true, it must be for all $\rv Z \in \Qsmin{}$.
Indeed, due to the  positivity property it is the only way of reaching the minimal $\DIFxzCy{}=0$.
\end{proof}

\subsubsection{Proof Intuition}

\begin{figure}
\centering
\includegraphics[width=\textwidth]{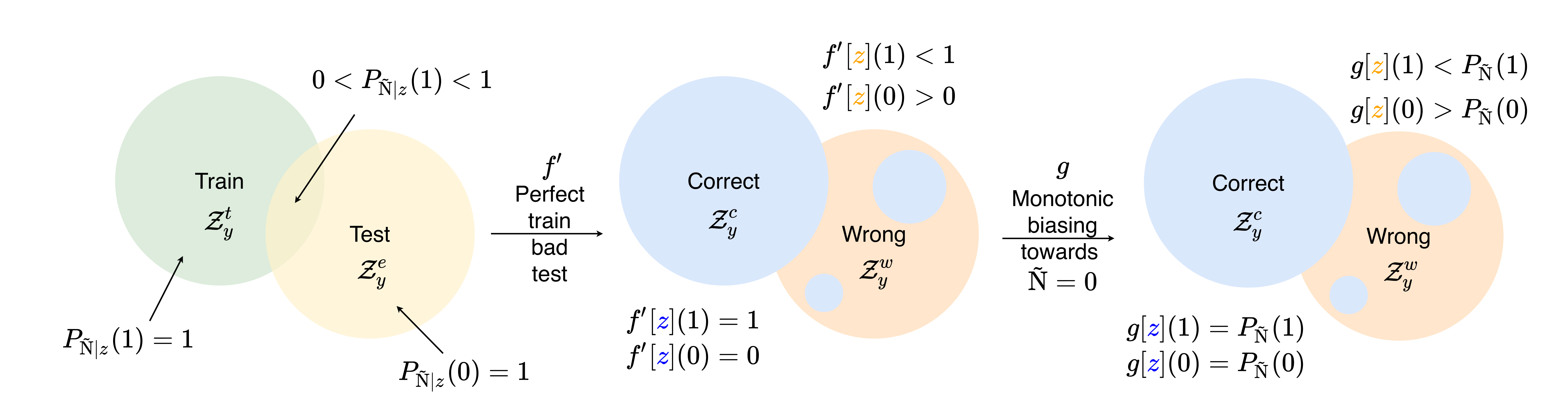}
\caption{
Intuition behind the construction of $g \in \V$ in the proof of \cref{theo:opt_qmin}.
The plot schematically represents the all the representations $\mathcal{Z}_y$ associated with a label $y=1$.
The representations 
$\mathcal{Z}_y^t$ (green) are associated with training examples,  $\mathcal{Z}_y^e$ (yellow) are associated with test / evaluation examples, $\mathcal{Z}_y^c$ (blue) are those that yield correct predictions of $y=1$ by $f'$, $\mathcal{Z}_y^w$ (orange) are those that yield wrong predictions  of $y=1$ by $f'$. 
}
\label{fig:schema_proof}
\end{figure}

The main difficulty in the proof is that $\V$-minimality removes information using deterministic labeling while the predictors $f \in \V$ are probabilistic.\footnote{
If both the labeling and predictors had been deterministic, the proof would be very simple and would go as follows:
assume $f$ is not optimal on test performance, show that the labels $\hat{\rv Y}$ predicted by $f$ and are in $\decxy{}$, conclude that $\rv Z$ does not minimize $\decxy{}$ as it perfectly predicts $\hat{\rv Y}$ by construction.
}
As a result the proof is relatively long, here is a rough outline:

\begin{enumerate}
\item As the theorem is about \textit{all} ERMs, use a proof by contrapositive to only talk about a single ERM $f'$ that performs optimally on train but not on test.
\item Construct a random variable $\tilde{\rv N}$ which labels train examples as 1 and test as 0. Show that $ \tilde{\rv N} \in \decxy{}$.
\item Using the ``monotonic biasing of $\V$'' assumption, construct $g\in \V$ from $f' \in \V$ by monotonically biasing the predictions towards the ``test'' label $\tilde{\rv N}=0$ s.t. $g[z](0)=P_{\tilde{\rv N}}(0)$ for every representation ${\color{blue} z^{(c)}}$ that are perfectly labelled by $f'$ (as shown in blue in \cref{fig:schema_proof}).
\item Show that $g$ predicts $\tilde{\rv N}$ better than the marginal distribution $P_{\tilde{\rv N}}$ for representations ${\color{burntorange} z^{(w)}}$ that are not perfectly labelled by $f'$ (as shown in orange in \cref{fig:schema_proof}), while predicting as well as $P_{\tilde{\rv N}}$ for ${\color{blue} z^{(c)}}$. Conclude that $g$ predicts $\tilde{\rv N}$ better than $P_{\tilde{\rv N}}$.
\item Show that the previous point entails $\op{I_{\V}}{\rv Z \to \tilde{\rv N}} \neq 0$. Conclude by \cref{lemma:zeroVminimal} that $\rv Z \not \in \Qsmin{}$ as desired. 
\end{enumerate}

\subsubsection{Formal Proof}

\begin{proof}
If $\rv Z \in \Qsmin{}$ is $\V$-minimal $\V$-sufficient then by definition it is also $\V$-sufficient, we thus restrict our discussion to $\V$-sufficient representations.
As $\rv Z$ is $\V$-sufficient, $\exists f \in V$ s.t. $\mathrm{R}(f,\rv Z)=\min_{\rv Z} \min_{f \in \V} \Risk{}=0$.
The first equality comes from \cref{proposition:opt_qsuf}.
The second equality comes from \cref{proposition:zero_risk} and the deterministic assumption $\rv Y=t(\rv X)$.
As there is some function $f$ with risk $\mathrm{R}(f,\rv Z)=0$, \cref{lemma:existperfect} tells us that being an ERM is equivalent to having zero empirical risk $\hpred{} \in \hat{\V}(\mathcal{D}) \iff \hat{\mathrm{R}}(\hpred{},\rv Z; \mathcal{D})=0$.
We thus only need to prove that $\V$-minimality implies that every function with zero empirical risk will get zero actual risk, i.e., they all generalize:
$\rv Z \in \Qsmin{} \implies \forall
\sampleD{}, \ \forall f \in \V$ s.t. $\eRisk{}=0$ will achieve $\mathrm{R}(f,\rv Z)=0$.

We will prove this statement by contrapositive, namely that the existence of a predictor with $0$ empirical risk but larger actual risk implies that the representation is not $\V$-minimal: $\exists \sampleD{} , \exists f'$ s.t. $\hat{\mathrm{R}}(f',\rv Z ; \mathcal{D})=0 \land \mathrm{R}(f',\rv Z) > 0 \implies$   $\rv Z  \not \in \Qsmin{}$.
For ease of notation let us assume that we are in a binary classification setting $|\mathcal{Y}|=2$.
We will later show how to reduce the multi-classification setting to the binary one.

Let $f'$ be a function with $\hat{\mathrm{R}}(f',\rv Z ; \mathcal{D})=0 \land \mathrm{R}(f',\rv Z) > 0$.
As the risk is positive, there must be some label $\exists y \in \mathcal{Y}$ s.t. when predicting from examples labeled as $y$ the expected loss will be positive $\mathrm{R}_y(f',\rv Z) \defeq \E{x \sim P_{\rv x \cond \rv y}}{\E{z \sim P_{\rv z \cond \rv x}}{- \log f'[z](y)}} > 0$.
Due to the deterministic labeling assumption and \cref{lemma:VsuffSupp}, we can study one single such label $y$ without considering any other label $y' \in \mathcal{Y}$ (neither the examples nor the representations interact between labels).
Without loss of generality --- due to the invariance of $\V$ to label permutations --- let us assume that this label is $y=1$.

Let $\mathcal{X}^{t}_y$ be the set of examples (associated with $y$) seen during training and  $\mathcal{X}^{e}_y = \mathcal{X}_y \setminus \mathcal{X}^{t}_y$ those only during test (eval). 
Let $\mathcal{Z}^{w}_y \defeq \{ z_y^{(w)} \in \mathcal{Z}_y | f'[z_y](y) \neq 1 \}$ be the representations who are (wrongly) not predicted $y$ by $f'$ with a probability of 1. 
Let $\mathcal{Z}^{c}_y \defeq \mathcal{Z}_y \setminus \mathcal{Z}^{w}_y $ be the set of representations that are (correctly) labeled $y$ by $f'$ with a probability of 1.
Notice that both these sets are non empty $|\mathcal{Z}^{w}_y|>0$ and $|\mathcal{Z}^{c}_y|>0$ because respectively $\mathrm{R}_y(f',\rv Z) > 0$ and $\hat{\mathrm{R}}_y(f',\rv Z,\mathcal{D}) = 0$.

Let $\tilde{\rv N} \defeq  \mathds{1}[\rv X_y \in \mathcal{X}^t_y ]$ be a binary ``selector'' of training examples.
Notice that $\tilde{\rv N} \in \decxy{}$, as $\mathds{1}[\cdot \in \mathcal{X}^t_y]$ is a deterministic function from $\mathcal{X}_y \to \mathcal{Y}$.
We want to show that $\op{H_{\mathcal{V}}}{\tilde{\rv N} \cond \rv Z_y} < \op{H}{\tilde{\rv N}}$.
To do so we have to find a function $g$ whose risk when predicting $\tilde{\rv N}$ is smaller than the entropy $\op{H}{\tilde{\rv N}}$.
Notice that $f'$ is close to being the desirable $g$, but not quite.
\footnote{Indeed, by construction $f'$ predicts perfectly the training examples $\tilde{\rv N}=1$, because $\hat{\mathrm{R}}_y(f',\rv Z,\mathcal{D})=0$ for $y=1$.
Unfortunately, its risk when predicting $\tilde{\rv N}$ is not always smaller than  $\op{H}{\tilde{\rv N}}$.
For example, $f'$ would incur infinite loss if some test examples $x \in \mathcal{X}^e_y$ was encoded to some some $\mathcal{Z}^{c}_y$ because $f'$ would predict that it comes from the training set with probability of 1.
}

We construct the desired $g$ by starting with $f'$ and monotonically increasing the probability of predicting $\tilde{\rv N}=0$ s.t. {\color{blue}$\forall z^{(c)} \in \mathcal{Z}^{c}_y$ we have  $g[z^{(c)}]=P_{\tilde{\rv N}}$}
as seen in \cref{fig:schema_proof}.
Specifically, from the ``monotonic biasing of $\V$'' we know that for $f' \in \V$, $y=0$, some $z^{(c)} \in \mathcal{Z}^{c}_y$, and $p=P_{\tilde{\rv N}}$ there exists $g \in \V$ s.t. $g[z^{(c)}](0)= P_{\tilde{\rv N}}(0)$ and $\forall z',z'' \in \mathcal{Z},\forall y' \in \mathcal{Y}$ we have 
$\mathrm{sign}(f'[z'](y') - f'[z''](y'))=\mathrm{sign}(g[z'](y') - g[z''](y'))$.
Notice that $\forall \tilde{z}^{(c)} \in \mathcal{Z}^{c}_y$ we have $g[z](0)= P_{\tilde{\rv N}}(0)$ due to the ordering requirement of monotonic biasing.
Indeed, we have just shown that this is true for one such $z^{c}$ and by construction $\forall \tilde{z}^{(c)} \in \mathcal{Z}^{c}_y$ we have  $\mathrm{sign}(f'[z^{(c)}](0) - f'[\tilde{z}^{(c)}](0)) = \mathrm{sign}(0 - 0) = 0$ so $0 = \mathrm{sign}(g[z^{(c)}](0) - g[\tilde{z}^{(c)}](0)) = \mathrm{sign}(P_{\tilde{\rv N}}(0) - g[\tilde{z}^{(c)}](0)) $ from which we conclude that $g[\tilde{z}^{(c)}](0)=P_{\tilde{\rv N}}(0)$ and $g[\tilde{z}^{(c)}](1)=P_{\tilde{\rv N}}(1)$ as we are in a binary setting.

Due to ordering requirement of monotonic biasing  {\color{burntorange}$\forall z^{(w)} \in \mathcal{Z}^{w}_y$ we have  $g[z^{(w)}](0) > P_{\tilde{\rv N}}(0)$} as seen in \cref{fig:schema_proof}.
Indeed, $\forall z^{(w)} \in \mathcal{Z}^{w}_y$ we have $f'[z^{(w)}](0) > 0$ by construction ($f'[z^{(w)}](1) \neq 1$) so by the ordering requirement $\mathrm{sign}(f'[z^{(c)}](0) - f'[z^{(w)}](0)) = \mathrm{sign}( - f'[z^{(w)}](0)) = -1 = \mathrm{sign}(g[z^{(c)}](0) - g[z^{(w)}](0)) = \mathrm{sign}(P_{\tilde{\rv N}}(0) - g[z^{(w)}](0)) $ from which we conclude that $g[z^{(w)}](0) > P_{\tilde{\rv N}}(0)$.

In other words, $g$ predicts training examples $\tilde{\rv N}=0$ in the {\color{blue}same way} as $P_{\tilde{\rv N}}$ but testing examples {\color{burntorange}better} than $P_{\tilde{\rv N}}$.
From here it should be clear that $\op{H_{\mathcal{V}}}{\tilde{\rv N} \cond \rv Z_y} < \op{H}{\tilde{\rv N}}$, which we prove below for completeness.
\begin{align*}
\op{I}{\rv Z_y \rightarrow \tilde{\rv N} } &\defeq  \op{H}{\tilde{\rv N}} - \op{H_{\mathcal{V}}}{\tilde{\rv N} \cond \rv Z_y} \\
 &= \sup_{f \in \V} \E{x,n,z \sim P_{\rv X_y, \tilde{\rv N} , \rv Z_y}}{\log \frac{f[z](n)}{P_{\tilde{\rv N}}(n)}}
 \\
 &\geq \E{x,n,z \sim P_{\rv X_y, \tilde{\rv N} , \rv Z_y}}{\log \frac{g[z](n)}{P_{\tilde{\rv N}}(n)}} & \text{Use } g
 \\
 &= \E{x,n \sim P_{\rv X_y, \tilde{\rv N}}}{\sum_{z \in \mathcal{Z}_y^{c}}  P_{\rv Z_y \cond \rv X_y}(z|x) \log \frac{g[z](n)}{P_{\tilde{\rv N}}(n)}} 
 \\
 &+
 \E{x,n \sim P_{\rv X_y, \tilde{\rv N}}}{\sum_{z \in \mathcal{Z}_y^{w}}  P_{\rv Z_y \cond \rv X_y}(z|x) \log \frac{g[z](n)}{P_{\tilde{\rv N}}(n)}}  & \mathcal{Z}^{c}_y \defeq \mathcal{Z}_y \setminus \mathcal{Z}^{w}_y 
 \\
&= 0
 +
 \E{x,n \sim P_{\rv X_y, \tilde{\rv N}}}{\sum_{z \in \mathcal{Z}_y^{w}}   P_{\rv Z_y \cond \rv X_y}(z|x)  \log \frac{g[z](n)}{P_{\tilde{\rv N}}(n)}} & {\color{blue} \forall z \in \mathcal{Z}^{c}_y, g[z]=P_{\tilde{\rv N}}(n)}
\\
&= \sum_{x \in \mathcal{X}_y^t} \sum_{z \in \mathcal{Z}_y^{w}}   P_{\rv Z_y, \rv X_y}(z, x)  \log \frac{g[z](1)}{P_{\tilde{\rv N}}(1)} \\
&+ \sum_{x \in \mathcal{X}_y^e} \sum_{z \in \mathcal{Z}_y^{w}}   P_{\rv Z_y, \rv X_y}(z, x)  \log \frac{g(z)[0]}{P_{\tilde{\rv N}}(0)} & \text{Def. } \tilde{\rv N} \text{ and } \mathcal{X}^{e}_y = \mathcal{X}_y \setminus \mathcal{X}^{t}_y
 \\
 &=0 +  \sum_{x \in \mathcal{X}_y^e} \sum_{z \in \mathcal{Z}_y^{w}}   P_{\rv Z_y, \rv X_y}(z, x)  \log \frac{g(z)[0]}{P_{\tilde{\rv N}}(0)} & \hat{\mathrm{R}}(f',\rv Z ; \mathcal{D})=0 \\
 &> 0 & {\color{burntorange} \forall z \in \mathcal{Z}^{w}_y, g[z](0) > P_{\tilde{\rv N}}(0)}
\end{align*}

Where the penultimate line comes form the fact that all train example $x\in \mathcal{X}^{t}_y$ must get encoded to some $z \in \mathcal{Z}^{c}_y$ as the empirical risk of $f'$ is 0.
We conclude that $\op{I}{\rv Z_y \rightarrow \tilde{\rv N} } \neq 0$, so by \cref{lemma:zeroVminimal}, $\rv Z \not \in \Qsmin{}$ which concludes the proof for the binary case.

For the multi-classification setting, the same proof holds by taking $f'$ and effectively reducing it to a binary classifier.
This is possible by starting from $f'$ and monotonically biasing it to construct $f'_{binary}$ which predicts with zero probability for all but two labels $y,y' \in \mathcal{Y}$.
One of those labels (say $y$) has to be the correct label (to ensure that $f'_{binary}$ still reaches 0 empirical risk), while the other $y'$ can be any label s.t. $\exists z \in \mathcal{Z}_y$ with $f'[z](y') \neq 0$.
Such a $y'$ always exists as the risk of $f'$ is not $0$.
Due to the monotonic biasing, $\forall z \in \mathcal{Z}_y$ we have $f'_{binary}[z](y) \geq f'[z](y)$ and $f'_{binary}[z](y') \geq f'[z](y')$.
As a result $f'_{binary}$ still reaches 0 empirical risk but non zero actual risk, it can thus be used to construct the desired $g$ as before.

\end{proof}

\subsection{$\V$-minimal $\V$-sufficient Properties}
\label{appx:proof_prop}

In this section we prove \cref{proposition:Vminsuff}.
We first show that universal sufficient representation corresponds to the subset of sufficient representations that t.v.i. in the domain of predictors in $\universal{}$.

\begin{restatable}[Recoverability of sufficiency]{lemma}{recsuff}\label{proposition:recsuff}
 Let $\universal{}$ be the universal family, then  $\Usuff{} = \suff{} \cap \mathcal{Z}$
\end{restatable}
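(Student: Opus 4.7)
The plan is to identify $\universal{}$-information with Shannon mutual information on representations t.v.i.\ $\mathcal{Z}$, and then verify that the unrestricted and $\mathcal{Z}$-restricted maxima of $\MI{Y}{Z}$ coincide so that the corresponding argmax sets agree. The first step is to establish that for every $\rv Z$ t.v.i.\ $\mathcal{Z}$ one has $\op{H}{\rv Y} - \CHU{Y}{Z} = \MI{Y}{Z}$, as already noted in \cref{sec:v_information}. Indeed, $\universal{} = \mathcal{P}(\mathcal{Y} \mid \mathcal{Z})$ contains the true posterior $z \mapsto P_{\rv Y \mid z}$, which by strict properness of the log loss uniquely minimizes the expected negative log-likelihood; therefore $\CHU{Y}{Z} = \op{H}{\rv Y \mid \rv Z}$.

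The second step is to show that the unrestricted maximum $\max_{\rv Z} \MI{Y}{Z} = \MI{Y}{X}$ is actually attained by some representation t.v.i.\ $\mathcal{Z}$. The deterministic-label hypothesis $\rv Y = t(\rv X)$ inherited from \cref{proposition:Vminsuff} gives $\MI{Y}{X} = \op{H}{\rv Y}$, and because $|\mathcal{Z}| \geq |\mathcal{Y}|$ by the generic assumptions of \cref{appx:generic_assumptions} there is an injection $\iota : \mathcal{Y} \to \mathcal{Z}$. Setting $\rv Z^\sharp \defeq \iota(t(\rv X))$ produces a representation t.v.i.\ $\mathcal{Z}$ with $\MI{Y}{Z^\sharp} = \op{H}{\rv Y} = \MI{Y}{X}$, so the restricted and unrestricted maxima of $\MI{Y}{Z}$ agree.

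Combining these observations with $m \defeq \MI{Y}{X}$, one obtains
\begin{equation*}
\Usuff{} \;=\; \arg\max_{\rv Z \,\text{t.v.i.}\, \mathcal{Z}} \MI{Y}{Z} \;=\; \{\, \rv Z \,\text{t.v.i.}\, \mathcal{Z} : \MI{Y}{Z} = m \,\} \;=\; \suff{} \cap \mathcal{Z},
\end{equation*}
which is the claim. The only nontrivial piece is the second step: a priori the ``t.v.i.\ $\mathcal{Z}$'' restriction could strictly depress the supremum of $\MI{Y}{Z}$ (since typically $|\mathcal{X}| \gg |\mathcal{Z}|$), in which case $\Usuff{}$ would be non-empty while $\suff{} \cap \mathcal{Z}$ was empty and the lemma would fail. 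Deterministic labels together with $|\mathcal{Z}| \geq |\mathcal{Y}|$ are precisely what let me exhibit a $\rv Z^\sharp$ inside $\mathcal{Z}$ attaining the unrestricted maximum, plugging this potential gap.
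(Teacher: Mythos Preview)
Your proof is correct and matches the paper's approach: both identify $\universal{}$-information with Shannon mutual information (the paper cites this as the ``recoverability property'' of $\V$-information) and then exhibit a sufficient representation inside $\mathcal{Z}$ by embedding $\rv Y$ itself, using $|\mathcal{Z}| \geq |\mathcal{Y}|$ --- the paper writes simply ``$\rv Z = \rv Y$'', you write $\iota(t(\rv X))$. You are somewhat more explicit than the paper in invoking the deterministic-label hypothesis to justify that this $\rv Z^\sharp$ is a bona fide representation of $\rv X$ (i.e., satisfies the Markov chain $\rv Y - \rv X - \rv Z$), which the paper's proof uses without comment.
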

\begin{proof}
In the following we abuse notation by using $\rv Z \in \mathcal{Z}$
to denote that $\rv Z$ t.v.i. $\mathcal{Z}$. Let us denote $\Omega = \bigcup \mathcal{Z}$ as the set of all possible finite sample spaces.
 From the recoverability property of $\V$-information, we know that  $\DI{\universal{}}{Y}{Z} = \MI{Y}{Z}$ so $\arg \max_{\rv Z \in  \mathcal{Z}} \DI{\universal{}}{Y}{Z} = \arg \max_{\rv Z \in \mathcal{Z}} \MI{Y}{Z}$. 
Suppose that $\suff{} \cap \mathcal{Z}$ is non empty, then $\suff{} \cap \mathcal{Z} = ( \arg  \max_{\rv Z \in \Omega} \MI{Y}{Z} ) \cap \mathcal{Z}  =  \arg \max_{\rv Z \in \mathcal{Z}} \MI{Y}{Z} = \Usuff{}$ as desired.
To show that $\suff{} \cap \mathcal{Z}$ is indeed non empty, notice that one can effectively learn the sufficient representation $\rv Z =\rv Y $ due to the assumption $|\mathcal{Y}| \leq |\mathcal{Z}|$ (when $|\mathcal{Y}| < |\mathcal{Z}|$, restrict the support to $\mathcal{Y}$).
\end{proof}

Let us characterize the set of minimal sufficient representations in terms of independence.

\begin{restatable}[Characterization of Minimal Sufficient Representations]{lemma}{charminsuff}\label{proposition:charminsuff}
Suppose $\rv Y$ is a deterministic labeling $t(\rv X)$, then the set of minimal sufficient representations $\smin{}$ correspond to  $\{ \rv Z \in \suff{} | \ \ \rv Z \perp \rv X | \rv Y  \ \}$.
\end{restatable}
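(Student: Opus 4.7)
The plan is to decompose $\MI{X}{Z}$ into a piece that is constant over $\suff{}$ plus a non-negative piece that vanishes exactly under the claimed conditional independence. First I would exploit the deterministic labeling assumption $\rv Y = t(\rv X)$ to get $\MI{Y}{Z \mid \rv X} = 0$ (since $\op{H}{\rv Y\mid \rv X}=0$), and also $\MI{X,Y}{Z} = \MI{X}{Z}$ because $\rv Y$ adds no information beyond $\rv X$. Combining these two observations with the chain rule of mutual information yields
\[
\MI{X}{Z} \;=\; \MI{X,Y}{Z} \;=\; \MI{Y}{Z} + \MI{X}{Z \mid \rv Y}.
\]

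Second, I would restrict to $\rv Z \in \suff{}$. By \Cref{def:min_suff_rep}, every $\rv Z \in \suff{}$ attains the common value $\MI{Y}{Z} = \max_{\rv Z'} \MI{Y}{Z'} = \MI{Y}{X} = \op{H}{\rv Y}$, so the first summand in the decomposition is a constant over $\suff{}$. Therefore
\[
\smin{} \;=\; \arg\min_{\rv Z \in \suff{}} \MI{X}{Z} \;=\; \arg\min_{\rv Z \in \suff{}} \MI{X}{Z \mid \rv Y}.
\]
Since $\MI{X}{Z \mid \rv Y} \geq 0$ with equality iff $\rv Z \perp \rv X \mid \rv Y$, it remains only to verify that this zero infimum is actually attained inside $\suff{}$, so that the argmin set equals the zero set.

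Third, I would exhibit a concrete attainer. Using the standing assumption $|\mathcal{Y}|\leq|\mathcal{Z}|$, the representation $\rv Z = \rv Y$ is well-defined; it is sufficient because $\MI{Y}{Y} = \op{H}{\rv Y} = \MI{Y}{X}$, and it satisfies $\rv Z \perp \rv X \mid \rv Y$ trivially (since $\rv Z$ is a function of $\rv Y$ alone). Hence $\min_{\rv Z\in \suff{}} \MI{X}{Z\mid \rv Y} = 0$, and
\[
\smin{} \;=\; \{\, \rv Z \in \suff{} \;\mid\; \MI{X}{Z\mid \rv Y} = 0\,\} \;=\; \{\, \rv Z \in \suff{} \;\mid\; \rv Z \perp \rv X \mid \rv Y\,\},
\]
completing the proof. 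There is no real obstacle: the only point requiring care is verifying that the deterministic labeling hypothesis is what makes the decomposition $\MI{X}{Z} = \MI{Y}{Z} + \MI{X}{Z\mid \rv Y}$ hold, i.e., it is essential that $\rv Y$ be a function of $\rv X$ and not merely correlated with it.
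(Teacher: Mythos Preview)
Your proof is correct and follows essentially the same route as the paper: decompose $\MI{X}{Z}$ via the chain rule into $\MI{Y}{Z}+\MI{X}{Z\mid\rv Y}$, observe the first term is constant on $\suff{}$, and exhibit $\rv Z=\rv Y$ to show the infimum of the second term is zero. One minor remark: your closing comment slightly overstates the role of the deterministic hypothesis in the decomposition step --- the identity $\MI{Y}{Z\mid\rv X}=0$ already follows from the Markov chain $\rv Y-\rv X-\rv Z$ (which holds for any representation), and the paper invokes it that way; where determinism is genuinely needed is in guaranteeing that $\rv Z=\rv Y$ is sufficient, i.e., that $\op{H}{\rv Y}=\MI{Y}{X}$.
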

\begin{proof}
\begin{align*}
\smin{} &\defeq \arg \min_{\rv Z \in \suff{}} \MI{X}{Z} \\
&= \arg \min_{\rv Z \in \suff{}}  \MI{X}{Z} + \op{I}{\rv Z; \rv Y | \rv X}& \rv Y - \rv X - \rv Z \\
&= \arg \min_{\rv Z \in \suff{}} \MI{X,\rv Y}{Z}& \text{Chain Rule} \\
&= \arg \min_{\rv Z \in \suff{}} \op{I}{\rv X; \rv Z | \rv Y} + \MI{Z}{Y}   & \text{Chain Rule} \\
&= \arg \min_{\rv Z \in \suff{}} \op{I}{\rv X; \rv Z | \rv Y}   & \text{Sufficiency} 
\end{align*}
The last line uses the fact that $\MI{Z}{Y}$ is a constant as the optimization is contrained to sufficient $\rv Z \in \suff{}$.
Notice that $\op{I}{\rv X; \rv Z | \rv Y} \geq 0$ and we know that it can reach zero when the labels are deterministic, for example with $\rv Z = \rv Y$.
So $\smin{} = \{ \rv Z \in \suff{} | \op{I}{\rv Y; \rv Z | \rv Y} = 0\} = \{ \rv Z \in \suff{} | \ \  \rv Z \perp \rv X | \rv Y  \ \}$ which concludes the proof.
\end{proof}

Similarly, let us characterize $\universal{}$-minimal $\universal{}$-sufficient representations in terms of independence.

\begin{restatable}[Characterization of $\universal{}$-Minimal $\universal{}$-Sufficient Representations]{lemma}{charUminUsuff}\label{proposition:charUminUsuff}
Suppose $\rv Y$ is a deterministic labeling $t(\rv X)$, then $\universal{}$-minimal $\universal{}$-sufficient representations $\Usmin{}$ correspond to  $\{ \rv Z \in \Usuff{} | \ \forall y \in \mathcal{Y},  \forall \rv N \in \mathrm{Dec}(\rv X, y), \ \rv Z \perp \rv N  \  \ \}$
\end{restatable}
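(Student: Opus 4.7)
The plan is to reduce this proposition to Lemma \ref{lemma:zeroVminimal} by specializing to the universal family $\V = \universal{}$, and then exploit the fact that $\universal{}$-information coincides with Shannon mutual information.

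First I would apply Lemma \ref{lemma:zeroVminimal} with $\V = \universal{}$, which directly yields
\[
\rv Z \in \Usmin{} \iff \rv Z \in \Usuff{} \text{ and } \forall y \in \mathcal{Y}, \forall \rv N \in \decxy{}, \ \DI{\universal{}}{N}{Z_y} = 0.
\]
This already matches the structure of the claim; only the reformulation of the zero-information condition as an independence statement remains. The second step is to invoke the recoverability property of $\V$-information recorded in Section \ref{sec:v_information} (and used identically in Lemma \ref{proposition:recsuff}), which gives $\DI{\universal{}}{N}{Z_y} = \MI{N}{Z_y}$, since $\universal{}$ is the collection of all predictors from $\mathcal{Z}$ to $\mathcal{P}(\mathcal{Y})$ and the optimizer of \cref{eqn:properness} is attained by $P_{\rv N \cond \rv Z_y}$.

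The third step is the standard fact that mutual information over a finite sample space vanishes exactly when the variables are independent: $\MI{N}{Z_y} = 0 \iff \rv N \perp \rv Z_y$. Combining these three equivalences gives
\[
\rv Z \in \Usmin{} \iff \rv Z \in \Usuff{} \text{ and } \forall y \in \mathcal{Y}, \forall \rv N \in \decxy{}, \ \rv N \perp \rv Z_y,
\]
which is the desired characterization (noting that the statement's $\rv Z \perp \rv N$ is shorthand for independence under the conditional distributions defining $\rv Z_y$ and $\rv N$, since $\rv N = t'(\rv X_y)$ by \cref{eqn:dec}).

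I do not expect any substantive obstacle: every ingredient has been assembled earlier in the excerpt. The only point requiring care is purely notational, namely confirming that ``$\rv Z \perp \rv N$'' in the proposition's statement is to be read as independence of $\rv Z_y$ and $\rv N$ under $P_{\rv X, \rv Z \cond y}$, consistent with how $\decxy{}$ was defined; once this reading is fixed the proof is a three-line chain of equivalences.
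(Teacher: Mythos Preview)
Your proposal is correct and follows essentially the same route as the paper: apply Lemma~\ref{lemma:zeroVminimal} with $\V=\universal{}$, use recoverability of $\universal{}$-information to identify $\DI{\universal{}}{N}{Z_y}$ with Shannon mutual information, and conclude independence from $\MI{N}{Z_y}=0$. The paper's proof is the same three-step chain, only more tersely stated.
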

\begin{proof}
Because of \cref{lemma:zeroVminimal}, $\forall y \in \mathcal{Y}, \forall \rv N \in \decxy{}$ we have $\DIF{N}{Z_y}=0$.
As $\universal{}$-information recovers MI that means $\rv N \perp \rv Z_y$. So $\Usmin{} = \{ \rv Z \in \Usuff{} | \ \forall y \in \mathcal{Y},  \forall \rv N \in \mathrm{Dec}(\rv X, y), \ \rv Z \perp \rv N  \  \ \}$ as desired.
\end{proof}

\begin{restatable}[Monotonicity of $\V$-information]{lemma}{monotonicity}\label{lemma:monotonicity}
Let $\V \subseteq \V^+$
be two predictive families, then $\forall \rv Z$ t.v.i $\mathcal{Z}$ and $\forall \rv Y$ t.v.i. $\mathcal{Y}$ we have $\op{I_{\V^+}}{\rv Z \rightarrow \rv Y} \geq \DIF{Y}{Z}$.
\end{restatable}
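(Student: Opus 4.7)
The plan is to unfold both sides using the definition of $\V$-information from \cref{eqn:Vinformation} and reduce the claim to the elementary fact that the infimum of a function over a larger set is at most the infimum over a smaller set.

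Specifically, by definition,
\begin{equation*}
\op{I_{\V^+}}{\rv Z \rightarrow \rv Y} - \DIF{Y}{Z} = \inf_{f \in \V}\E{z,y \sim P_{\rv Z,\rv Y}}{-\log f[z](y)} - \inf_{f \in \V^+}\E{z,y \sim P_{\rv Z,\rv Y}}{-\log f[z](y)},
\end{equation*}
since the $\op{H}{\rv Y}$ terms cancel. The first step is to observe that the objective $f \mapsto \E{}{-\log f[z](y)}$ is the same function on both sides; only the feasible set changes. The second step is to apply the containment $\V \subseteq \V^+$: taking infimum over a superset can only make the value smaller or equal, so $\inf_{f \in \V^+} \leq \inf_{f \in \V}$, which gives the desired non-negativity.

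There is essentially no obstacle here. The one subtlety worth a sentence is that the infima are attained (or at least finite), which follows from the finite sample space assumption in \cref{appx:generic_assumptions} together with the arbitrary constant prediction assumption in \cref{appx:assumptions_V}, so the expression is well-defined in both cases and the comparison is meaningful. I would conclude by noting that equality holds trivially when the minimizer of the $\V^+$ problem happens to lie in $\V$, which is consistent with the recoverability property specializing this lemma to $\V^+ = \universal{}$.
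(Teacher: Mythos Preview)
Your proof is correct and essentially identical to the paper's: both reduce the claim to the monotonicity of the predictive conditional entropy $\op{H_{\V}}{\rv Y \mid \rv Z} \geq \op{H_{\V^+}}{\rv Y \mid \rv Z}$, which is exactly the infimum-over-a-superset observation you spell out. The paper just names that intermediate step explicitly (citing \cite{xu2020theory}) and then subtracts from $\op{H}{\rv Y}$, whereas you cancel $\op{H}{\rv Y}$ up front and compare infima directly; the content is the same.
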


\begin{proof}
Let us start from the monotonicity of the predictive entropy \cite{xu2020theory}, which comes directly from the fact that we are optimizing over a larger functional family:
\begin{align*}
\op{H_{\V}}{\rv Y | \rv Z} &\geq \op{H_{\V^+}}{\rv Y | \rv Z} & \text{Monotonicity $\V$-ent.} \\
\op{H}{\rv Y } - \op{H_{\V}}{\rv Y | \rv Z} &\leq \op{H}{\rv Y } - \op{H_{\V^+}}{\rv Y | \rv Z} \\
 \DIF{Y}{Z} 
&\leq  \op{I_{\V^+}}{\rv Z \rightarrow \rv Y} & \text{\cref{proposition:marginalent}}
\end{align*}
\end{proof}

\Vminsuff*

\begin{proof}$ $
\begin{description}
\item[Existence]
In  \cref{eqn:z_star} we show how to construct such a $\rv Z \in \Qsmin{}$. 
\item[Characterization]
As $\DIFxzCy{}$ is an average over a non negative (positivity of $\V$-information) $\DIF{N}{Z_y}$ it is equal to zero if and only if all the terms are zero: $\DIFxzCy{} = 0  \iff \forall y \in \mathcal{Y}, \ \forall \rv N \in \decxy{}$ we have $\DIF{N}{Z_y}=0$.
We conclude the proof using \cref{lemma:zeroVminimal}.
\item[Monotonicty] Following the same steps as the proof in  ($\implies$) of \cref{lemma:zeroVminimal}, we get that $\V^+$-minimal $\V$-sufficient representation implies $\forall y \in \mathcal{Y}, \forall \rv N \in \decxy{}$ we have $\op{I_{\V^+}}{\rv Z_y \rightarrow \rv N}=0$.
Using the monotonocity of $\V$-information in our setting (\cref{lemma:monotonicity}) we have $0 = \DIF{N}{Z_y} \leq   \op{I_{\V^+}}{\rv Z_y \rightarrow \rv N} $.
As $\V$-information is always positive, we conclude that $\forall y \in \mathcal{Y}, \forall \rv N \in \decxy{}$,  $\DIF{N}{Z_y}=0$.
By \cref{lemma:zeroVminimal}, we conclude that $\rv Z$ is $\V$-minimal $\V$-sufficient as desired.
\item[Recoverability]
Using \cref{proposition:recsuff} we know that $\Usuff{}=\suff{} \cap \mathcal{Z}$ so the domain of optimization for minimality and $\universal{}$-minimality is the same.
Using \cref{proposition:charminsuff} and \cref{proposition:charUminUsuff} we only need to show that  $\forall \rv Z \in \Usuff{}$ we have $\rv Z \perp \rv X | \rv Y \iff  \forall y \in \mathcal{Y}, \ \forall \rv N \in \mathrm{Dec}(\rv X, y)$ we have $\rv N \perp \rv Z_y$.
As a reminder, $\rv Z_y$ and $\rv X_y$ have distribution $P_{\rv X_y} = P_{\rv X | y}$, $P_{\rv Z_y} = P_{\rv Z | y}$ and $P_{\rv X_y, \rv Z_y}=P_{\rv X, \rv Z |y}$ by definition. 

($\implies$) Starting from minimality $\rv Z \perp \rv X | \rv Y $ so $\forall y \in \mathcal{Y}$ we have $P_{\rv X_y, \rv Z_y}=P_{\rv X, \rv Z |y}=P_{\rv X | y}*P_{\rv Z |y}=P_{\rv X_y}*P_{\rv Z_y}$ from which we conclude that $\rv Z_y \perp \rv X_y$.
As $\rv N = t'( \rv X_y)$, for all $\rv N \in \decxy{}$ we have $\rv Z_y \perp \rv N$  as desired.\\

($\impliedby$) Let us prove it by contrapositive. I.e. we show that $\rv Z \not \perp \rv X | \rv Y \implies \exists y \in \mathcal{Y}, \exists \rv N \in \mathrm{Dec}(\rv X, y) \ s.t. \  \rv N \not\perp \rv Z_y $.
As $\rv Z \not \perp \rv X | \rv Y$ then  $\exists \tilde{x} \in \mathcal{X}, \tilde{y} \in \mathcal{Y}$ s.t. $P_{\rv Z |\rv X, \rv Y } (\cdot \cond \tilde{x},\tilde{y})  \neq P_{\rv Z| \rv Y}(\cdot \cond \tilde{y})$.
Let us define  $ \tilde{\rv N} = \mathds{1}[\rv X_{\tilde{y}}  = \tilde{x}_{\tilde{y}}] $,
as $\mathds{1}[\rv X_{\tilde{y}}  = \cdot ]$ is a deterministic function from $\mathcal{X} \to \mathcal{Y}$, we have $\tilde{\rv N} \in \mathrm{Dec}(\rv X, \tilde{y})$.
Then:
\begin{align*}
P_{\rv Z_{\tilde{y}}| \tilde{\rv N}}(\cdot \cond 1) 
&= P_{\rv Z_{\tilde{y}}| \rv X_{\tilde{y}}}(\cdot \cond \tilde{x}_{\tilde{y}})  & \text{Def. }  \tilde{\rv N} \\
&= P_{\rv Z |\rv X, \rv Y } (\cdot \cond \tilde{x},\tilde{y})  & \text{Def. } \rv Z_{\tilde{y}} \\
&\neq P_{\rv Z| \rv Y}(\cdot \cond \tilde{y}) & \text{Def. }  \tilde{x},\tilde{y} \\
&= P_{\rv Z_{\tilde{y}}} & \text{Def. }  \rv Z_{\tilde{y}}
\end{align*}
As $P_{\rv Z_{\tilde{y}} | \tilde{\rv N}}(\cdot \cond 1)  \neq P_{\rv Z_{\tilde{y}}}$ we conclude that $\exists \tilde{y} \in \mathcal{Y}, \exists \rv N \in \mathrm{Dec}(\rv X, \tilde{y}) \ s.t. \  \rv N \not\perp \rv Z_{\tilde{y}} $ as desired.
\end{description}
\end{proof}

\begin{corollary}\label{corollary:optimalIB}
Suppose $\rv Y$ is a deterministic labeling $t(\rv X)$. 
Let $\universal{}$ be the universal predictive family.
Under the assumptions stated in \cref{appx:generic_assumptions}, we have that:
if $\rv Z$ is a minimal sufficient representation of $\rv X$ for $\rv Y$ that t.v.i. $\mathcal{Z}$, then any ERM on any dataset will achieve zero risk, i.e.
\[
\rv Z \in \smin{} \cap \mathcal{Z} \implies 
\forall M \geq |\mathcal{Y}|, \
\sampleD{}, \ \forall \hpred{} \in \hat{\universal{}}(\mathcal{D}) \text { we have } \mathrm{R}(\hpred{},\rv Z)=0
\]
\end{corollary}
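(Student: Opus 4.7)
The plan is to derive this corollary directly by composing three previously established results: the Recoverability clause of Proposition \ref{proposition:Vminsuff}, the main Theorem \ref{theo:opt_qmin}, and Proposition \ref{proposition:zero_risk}. The structural observation is that Corollary \ref{corollary:optimalIB} is precisely the instantiation of Theorem \ref{theo:opt_qmin} at $\V = \universal{}$, translated from the $\V$-minimal $\V$-sufficient language back to the classical minimal sufficient language via Recoverability.

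First, I would verify that $\universal{} = \mathcal{P}(\mathcal{Y} \mid \mathcal{Z})$ meets the assumptions in \cref{appx:assumptions_V} needed to invoke Theorem \ref{theo:opt_qmin}. Since $\universal{}$ contains every conditional distribution from $\mathcal{Z}$ to $\mathcal{P}(\mathcal{Y})$, invariance under label permutations, non-empty preimages of labels, arbitrary constant predictions, and closure under monotonic biasing all hold trivially. Second, I would apply the Recoverability property: the hypothesis $\rv Z \in \smin \cap \mathcal{Z}$ immediately yields $\rv Z \in \Usmin{}$, so $\rv Z$ is $\universal{}$-minimal $\universal{}$-sufficient.

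Third, I would apply Theorem \ref{theo:opt_qmin} to the family $\V = \universal{}$ and the representation $\rv Z \in \Usmin{}$. This gives that for every $M \geq |\mathcal{Y}|$, every dataset $\sampleD{}$, and every ERM $\hpred{} \in \hat{\universal{}}(\mathcal{D})$, we have $\mathrm{R}(\hpred{}, \rv Z) = \min_{\rv Z'} \min_{f \in \universal{}} \mathrm{R}(f, \rv Z')$. Finally, since labels are assumed deterministic, Proposition \ref{proposition:zero_risk} identifies this best achievable risk as $0$, yielding $\mathrm{R}(\hpred{}, \rv Z) = 0$ as required.

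There is essentially no technical obstacle here; the corollary is a bookkeeping consequence of results already proven. The only point worth double-checking is the routine but necessary verification that $\universal{}$ satisfies the standing assumptions on predictive families, so that the theorem is legitimately applicable with $\V = \universal{}$.
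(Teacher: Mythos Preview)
Your proposal is correct and follows essentially the same route as the paper: verify that $\universal{}$ satisfies the predictive-family assumptions, use Recoverability from Proposition~\ref{proposition:Vminsuff} to identify $\smin \cap \mathcal{Z}$ with $\Usmin{}$, apply Theorem~\ref{theo:opt_qmin} at $\V=\universal{}$, and finish with Proposition~\ref{proposition:zero_risk} to conclude the best achievable risk is zero. The only cosmetic difference is ordering---the paper invokes the theorem and Proposition~\ref{proposition:zero_risk} first and Recoverability last---but the logical content is identical.
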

\begin{proof}
First notice that $\universal{}$ is unconstrained and thus satisfies the assumptions in \cref{appx:assumptions_V}.
As a result, \cref{theo:opt_qmin} tells us that $\rv Z \in \mathcal{M}_{\mathcal{U}} \implies 
\forall M \geq |\mathcal{Y}|, \
\sampleD{}, \ \forall \hpred{} \in \hat{\universal{}}(\mathcal{D}) \text { we have } \mathrm{R}(\hpred{},\rv Z)=\min_{\rv Z} \min_{f \in \universal{}} \Risk{}$.
Because the labeling is deterministic we have by \cref{proposition:zero_risk} that the best achievable risk is $\min_{\rv Z} \min_{f \in \universal{}} \Risk{}=0$.
From the recoverability property of $\V$-minimality and $\V$-sufficiency (\cref{proposition:Vminsuff}) we have $\smin{} \cap \mathcal{Z} = \mathcal{M}_{\universal{}}$ so $\rv Z \in \mathcal{M}_{\mathcal{U}} \iff \rv Z \in \mathcal{M}_{\mathcal{U}} \implies 
\forall M \geq |\mathcal{Y}|, \
\sampleD{}, \ \forall \hpred{} \in \hat{\universal{}}(\mathcal{D}) \text { we have } \mathrm{R}(\hpred{},\rv Z)=\min_{\rv Z} \min_{f \in \universal{}} \Risk{} = 0$ as desired.
\end{proof}


\subsection{Estimation Bounds}
\label{appx:practical_estimation}

In this section we will prove and formalize \cref{proposition:pac}, namely that $\edib$ estimates $\dib{}$ with PAC-style guarantees.
First, let us formalize $\dib{}$ and $\edib$ described respectively in \cref{eqn:dib} and \cref{alg:pseudo_dib}.
For simplicity, in the following we will use $\tdecxy{} \defeq \{ t \cond \rv N = t(\rv X), \ \forall \rv N \in \decxy{} \}$ to denote the labeling that gave rise to $\decxy{}$. 
For notational convenience we use the following shorthands throughout this section:
\begin{align}
 \CHFZ{\mathcal{T}(\rv X_y)} &\defeq \frac{1}{|\decxy{}|} \sum_{t \in \tdecxy{}}  \CHFt{} \label{eqn:CHFZT_y} \\
\eCHFZ{\rv Y} &\defeq \inf_{f \in \V} \frac{1}{|\mathcal{D}|} \sum_{x,y \in \mathcal{D}} - \log f[z \sim P_{\rv Z|x} ](y) \label{eqn:eCHFZY} \\
\eCHFZ{t(\rv X_y)} &\defeq \inf_{f \in \V}
\frac{1}{|\mathcal{D}_y|} \sum_{x \in \mathcal{D}_y} - \log f[z \sim P_{\rv Z|x} ](t(x)) \label{eqn:eCHFZt_y} \\   
\eCHFZ{\mathcal{T}(\rv X_y)} &\defeq \frac{1}{K} \sum_{t \in \tdecxyk{}} \eCHFZ{t(\rv X_y)} \label{eqn:eCHFZT_y}
\end{align}

\begin{definition}[DIB]\label{def:dib}
Let $\beta \in \mathbb{R}_{>0}$ be a hyper-parameter controlling the importance of $\V$-minimality.
The $\beta$-DIB criterion for the encoder $P_{\rv Z| \rv X}$ and predictions of $\rv Y$ from $\rv X$ is:
\begin{align*}
\dibAll{} &\defeq - \DIF{Y}{Z} + \beta * \DIFxzCy{} \\
&= -\op{H}{\rv Y} + \CHF{Y}{Z} \notag \\
&+ \sum_{y \in \mathcal{Y}} \frac{\beta}{|\mathcal{Y}|} \sum_{t \in \tdecxy{}} \left( \op{H}{t(\rv X_y)} -  \CHFt{} \right) \\
&= (const) + \CHF{Y}{Z} - \sum_{y \in \mathcal{Y}} \frac{\beta}{|\mathcal{Y}|} \sum_{t \in \tdecxy{}}  \CHFt{} \\
&= (const) + \inf_{f \in \V} \E{x,y \sim P_{\rv X, \rv Y} } {\E{z \sim P_{\rv Z \cond x}}{- \log f[z](y) }} \notag \\
&- \left( \sum_{y \in \mathcal{Y}} \frac{\beta}{ \text{\tiny{$|\decxy{}| |\mathcal{Y}|$}}} \sum_{t \in \tdecxy{}} \inf_{f \in \V} \E{x \sim P_{\rv X| y} } {\E{z \sim P_{\rv Z \cond x}}{- \log f[z](t(x)) }} \right) \\
&= (const) +  \CHF{Y}{Z} - \frac{\beta}{|\mathcal{Y}|} \CHFZ{\mathcal{T}(\rv X_y)} 
\end{align*}
Where $(const)$ does not depend on $P_{\rv Z | X}$.
\end{definition}

The empirical DIB is very similar but:
\begin{inlinelist}
\item uses $\mathcal{D}$ to estimate all expectations over $P_{\rv X, \rv Y}$;
\item uses a single sample from Bob's encoder $z \sim P_{\rv Z \cond x}$;
\item estimates the average over $t \in \tdecxy{}$ using $K$ samples $\tdecxyk{} \defeq \{ t_i \}_{i=1}^{K}$ where  $\forall i=1,\dots,K, t_i \sim \mathrm{Unif}(\tdecxy{})$ . 
\end{inlinelist}

\begin{definition}[Empirical DIB]
Let $\beta \in \mathbb{R}_{>0}$ be a hyper-parameter controlling the importance of $\V$-minimality, $\sampleD{}$ be a training set of $M$ i.i.d. input-output pairs $(x,y)$, and $K \in \mathbb{N}_{>0}$ denote the number of r.v. to sample from each $y$ decomposition of $\rv X$.
The empirical (under $\mathcal{D},K$) $\beta$-DIB criterion for the encoder $P_{\rv Z| \rv X}$ and predictions of $\rv Y$ from $\rv X$ is:
\begin{align*}
\edibAll{} &\defeq (const) + 
\inf_{f \in \V} \frac{1}{M} \sum_{x,y \in \mathcal{D}} - \log f[z \sim P_{\rv Z|x} ](y)   \notag \\
&- \left( \sum_{y \in \mathcal{Y}} \frac{\beta}{K |\mathcal{Y}| |\mathcal{D}_y|} \sum_{t \in \tdecxyk{}}
\inf_{f \in \V}
\sum_{x \in \mathcal{D}_y}
- \log f[z \sim P_{\rv Z|x} ](t(x))  \right) \\
&= (const) +  \eCHFZ{\rv Y} - \frac{\beta}{|\mathcal{Y}|} \eCHFZ{\mathcal{T}(\rv X_y)} 
\end{align*}

Where we use $z \sim P_{\rv Z|x}$ to denote that z is one sample from $P_{\rv Z|x}$,  $(const)$ is the same constant as in \cref{def:dib} ,  and $\mathcal{D}_y \defeq \{ x \cond (x,y) \in \mathcal{D} \}$ is the subset of input examples labeled $y$.
\end{definition}

\cref{proposition:pac} says that despite the previous approximations, $\edibAll{}$ still inherits $\V$-information's PAC estimation guarantees. More formally:

\begin{proposition}[PAC Estimation Guarantees]\label{proposition:pac_formal}
Let $\mathfrak{R}_{M}(\log \circ \V)$ denote the $M$-samples Rademacher complexity of $\log \circ \V \defeq \{ g \cond g(z,y) = \log f[z](y), \forall f \in \V \}$, 
$\rv X, \rv Y, \rv Z$ be r.v.s, 
$\sampleD{}$ be a dataset of $M$ i.i.d. input-output pairs $(x,y)$,
$\beta \in \mathbb{R}_{>0}$, and $K \in \mathbb{N}_{>0}$.
Assume that $\forall f \in \V, \ \forall z \in \mathcal{Z}, \ \forall y \in \mathcal{Y}$ we have $|\log f[z](y) | \leq C$, then for any $\delta \in ]0,1[$, with probability at least $ 1 - \delta $ we have that the estimation error\footnote{Up to terms that are constant in $P_{\rv Z | \rv X}$. We can provide similar guarantees when incorporating these constants due to Lemma 4 of \citepos{xu2020theory} but there is no reason to estimate these constants in our framework.} 
$\text{err} \defeq | \dibAll{} - \edibAll{}|$ is bounded by:
\begin{align}
\text{err}  \leq 2 \mathfrak{R}_M(\log \circ \V) + \beta \log |\mathcal{Y}| + C \sqrt{  \frac{2 \log \frac{1}{\delta}}{M}}
\end{align}
\end{proposition}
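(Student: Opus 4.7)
The plan is to decompose the estimation error via the triangle inequality into a \vsufficiency piece and a \vminimality piece, then treat the two with asymmetric tools: a Rademacher-based concentration bound for the sufficiency part and a trivial boundedness argument for the minimality part. Since the $(const)$ in $\dibAll{}$ and $\edibAll{}$ are identical, one immediately has
\begin{align*}
\text{err} \leq \left|\CHF{Y}{Z} - \eCHFZ{\rv Y}\right| + \frac{\beta}{|\mathcal{Y}|}\sum_{y \in \mathcal{Y}}\left|\CHFZ{\mathcal{T}(\rv X_y)} - \eCHFZ{\mathcal{T}(\rv X_y)}\right|,
\end{align*}
so it suffices to bound each summand.

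For the first summand, I would invoke the PAC bound for $\V$-information from \citet{xu2020theory} (their Lemma 5). Its proof proceeds in two standard steps. (i) Symmetrization on the bounded loss class $\log \circ \V$ controls the expected supremum by $2\mathfrak{R}_M(\log \circ \V)$. (ii) McDiarmid's bounded differences inequality — which applies because replacing any single $(x,y)$ pair in $\mathcal{D}$ perturbs $\eCHFZ{\rv Y}$ by at most $2C/M$ under the hypothesis $|\log f[z](y)| \leq C$ — upgrades this to the high-probability tail $C\sqrt{2\log(1/\delta)/M}$. Together these give $\left|\CHF{Y}{Z} - \eCHFZ{\rv Y}\right| \leq 2\mathfrak{R}_M(\log \circ \V) + C\sqrt{2\log(1/\delta)/M}$ with probability at least $1-\delta$.

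For the second summand I would use a deterministic pointwise bound that exploits the ``arbitrary constant prediction'' assumption from \cref{appx:assumptions_V}. The key observation is that every conditional $\V$-entropy in sight lies in $[0,\log|\mathcal{Y}|]$: non-negativity follows from $-\log f[z](y) \geq 0$, and the upper bound from plugging the uniform constant predictor $f[z] \equiv \mathrm{Unif}(\mathcal{Y}) \in \V$ into the infimum defining $\V$-entropy, which yields $\CHF{\rv N}{\rv Z} \leq \log|\mathcal{Y}|$ for any $\rv N$ taking values in $\mathcal{Y}$. Both $\CHFZ{\mathcal{T}(\rv X_y)}$ and $\eCHFZ{\mathcal{T}(\rv X_y)}$ are averages of such entropies and hence also lie in $[0,\log|\mathcal{Y}|]$, so each of the $|\mathcal{Y}|$ absolute differences is at most $\log|\mathcal{Y}|$. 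The prefactor $\beta/|\mathcal{Y}|$ collapses the sum to exactly $\beta\log|\mathcal{Y}|$. No union bound is needed because this contribution is deterministic, so the stated estimation error follows by simply adding the two pieces.

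The main obstacle is conceptual rather than technical: the bound is markedly loose on the \vminimality side, with no dependence on $K$, on $|\mathcal{D}_y|$, or on the Rademacher complexity of $\V$. A sharper analysis would have to separately control (a) the Monte Carlo average over the $K$ sampled labelings via Hoeffding, and (b) the inner finite-sample estimation of each $\CHFt{}$ via a Rademacher bound at sample size $|\mathcal{D}_y|$, and then combine them by a union bound over $K|\mathcal{Y}|$ events. This is delicate because the infimum over $f \in \V$ is nested inside both averages and must be exchanged with the concentration step; the statement as given deliberately pays $\beta\log|\mathcal{Y}|$ upfront to avoid these subtleties.
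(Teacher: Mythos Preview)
Your proposal is correct and follows essentially the same route as the paper: triangle-inequality decomposition, the Rademacher/McDiarmid bound from \citet{xu2020theory} for the \vsufficiency term, and a deterministic $[0,\log|\mathcal{Y}|]$ sandwich for the \vminimality term (with the paper likewise remarking that the latter is loose and could be sharpened via per-term sub-Gaussian concentration). The only cosmetic difference is that you bound $\CHF{\rv N}{\rv Z}\le\log|\mathcal{Y}|$ directly via the uniform constant predictor, whereas the paper routes through the marginal $P_{t(\rv X_y)}$ and then applies Jensen's inequality---your argument is the cleaner of the two.
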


In order to prove \cref{proposition:pac_formal}, we need two key lemmas, one for PAC-estimation guarantees of the $\V$-sufficiency term and the other for estimation bounds of the $\V$-minimality term.

\subsubsection{Lemmas for Estimation Bounds}

The estimation guarantees that we will use for the $\V$-sufficiency term essentially comes from Lemma 3 of \citet{xu2020theory}, which we state here with a slight modification to incorporate the sampling from an encoder. 

\begin{lemma}[Estimation error $\V$-sufficiency; Lemma 3 of \citet{xu2020theory}]\label{lemma:pacVsuff}
Let $\sampleD{}$ be a dataset of $M$ input-output pairs.
Assume that $\forall f \in \V, \ \forall z \in \mathcal{Z}, \ \forall y \in \mathcal{Y}$ we have $|\log f[z](y) | \leq C$, then for any $\delta \in ]0,1[$, with probability at least $ 1 - \delta $, we have:
\begin{equation}
\Bigl \lvert \CHF{Y}{Z}  - \eCHFZ{\rv Y} \Bigr \rvert \leq \mathfrak{R}_M(\log \circ \V ) + C \sqrt{\frac{2 \log \frac{1}{\delta}}{M}} \label{eqn:pacVsuff}
\end{equation}
\end{lemma}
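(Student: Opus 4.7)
The plan is to recognize this as a textbook empirical-process/uniform-convergence argument, lightly adapted to handle the stochastic encoder. First, I would observe that even though $z$ is drawn conditionally on $x$ from $P_{\rv Z \mid \rv X}$, the triples $(x_i, y_i, z_i)$ generated by drawing $(x_i, y_i) \sim P_{\rv X, \rv Y}$ i.i.d.\ and then $z_i \sim P_{\rv Z \mid x_i}$ are themselves i.i.d.\ samples from the joint distribution $P_{\rv X, \rv Y, \rv Z}$. Under this observation both $\CHF{Y}{Z}$ and $\eCHFZ{\rv Y}$ become, respectively, the population and empirical infima of the same risk functional over the loss class $\log \circ \V$ evaluated on i.i.d.\ data. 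This is precisely the setting of Lemma 3 of \cite{xu2020theory}, with the only subtlety being that the ``input'' to each $f \in \V$ is now the $z$-coordinate of the triple rather than a deterministic function of $x$.

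Second, I would apply the elementary inequality $|\inf_{f} \ell(f) - \inf_{f} \hat{\ell}(f)| \leq \sup_{f \in \V} |\ell(f) - \hat{\ell}(f)|$, where $\ell(f) = \E{(x,y,z) \sim P_{\rv X,\rv Y,\rv Z}}{-\log f[z](y)}$ and $\hat{\ell}(f)$ is the empirical counterpart on $\mathcal{D}$ together with the sampled $z_i$. This reduces the desired estimation bound to controlling a single uniform deviation of empirical means over $\log \circ \V$.

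Third, I would bound this supremum by combining two standard ingredients: (i) McDiarmid's bounded differences inequality, using the hypothesis $|\log f[z](y)| \leq C$ to show that swapping one triple $(x_i, y_i, z_i)$ changes $\hat{\ell}(f)$ by at most $2C/M$ uniformly in $f$, yielding concentration of the supremum around its expectation at rate $C \sqrt{2 \log(1/\delta) / M}$; and (ii) the classical symmetrization argument bounding $\mathbb{E}[\sup_{f \in \V} |\ell(f) - \hat{\ell}(f)|]$ by the $M$-sample Rademacher complexity $\mathfrak{R}_M(\log \circ \V)$, matching the constants reported in \cite{xu2020theory}.

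The main obstacle is really conceptual rather than technical: one must verify that sampling $z \sim P_{\rv Z \mid x}$ once per training point still yields i.i.d.\ triples, so that neither McDiarmid nor symmetrization needs modification, and in particular no additional variance correction is incurred for the encoder stochasticity. Once this is in place, the result is a direct corollary of \citepos{xu2020theory} Lemma 3 applied to the induced law on $(\rv X, \rv Y, \rv Z)$, which is important downstream because it lets the algorithm use a single Monte Carlo sample per input when computing $\edib{}$ without paying an extra statistical price.
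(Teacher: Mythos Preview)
Your proposal is correct and follows essentially the same approach as the paper: both identify that the crucial point is that drawing $(x_i,y_i)\sim P_{\rv X,\rv Y}$ followed by a single $z_i\sim P_{\rv Z\mid x_i}$ produces i.i.d.\ triples from $P_{\rv X,\rv Y,\rv Z}$, after which Lemma~3 of \cite{xu2020theory} (McDiarmid plus symmetrization over $\log\circ\V$) applies verbatim. If anything, you spell out the Rademacher machinery in more detail than the paper, which simply defers those steps to \cite{xu2020theory} and focuses on the encoder-sampling observation.
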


\begin{proof}
For the bulk of the proof, we refer the reader to the proof in \citet{xu2020theory} which uses the standard Rademacher machinery (McDiarmid's inequality and a symmetrization argument for Rademacher random variables) to prove that with probability at least $1 - \delta$, we have:
\begin{equation*}
\Bigl \lvert \CHF{Y}{Z}  - \inf_{f \in \V} \frac{1}{|\mathcal{D}|} \sum_{z,y \in \mathcal{D}_{z,y}} - \log f[z](y) \Bigr \rvert \leq \mathfrak{R}_M(\log \circ \V ) + C \sqrt{\frac{2 \log \frac{1}{\delta}}{M}}
\end{equation*}
The only difference with \cref{eqn:pacVsuff} is that the dataset $\mathcal{D}_{z,y}$ consist of i.i.d. samples from $P_{\rv Z, \rv Y}$, while our $\mathcal{D}$ consists of samples from $P_{\rv X, \rv Y}$.
Sampling a pair $(x,y) \sim P_{\rv X, \rv Y}$ and then a single from $z \in P_{\rv Z | x} $ is nevertheless equivalent to sampling directly from $(x,y,z) \sim P_{\rv X, \rv Y, \rv Z}$ so $x,y \in \mathcal{D}$ and $z \sim P_{\rv Z|x}$ in \cref{eqn:eCHFZY} can be replaced by $z,y \in \mathcal{D}_{z,y}$ to get the desired \cref{eqn:pacVsuff}.
\end{proof}

We now provide a bound on the error of the $\V$-minimality.

\begin{lemma}[Estimation error $\V$-minimality]\label{lemma:pacVmin}
$\sampleD{}$ be a dataset of $M$ i.i.d. input-output pairs $(x,y)$,
$\beta \in \mathbb{R}_{>0}$, and $K \in \mathbb{N}_{>0}$.
We have:
\begin{equation}
\frac{\beta}{|\mathcal{Y}|} \sum_{y \in \mathcal{Y}} |\CHFZ{\mathcal{T}(\rv X_y)} - \eCHFZ{\mathcal{T}(\rv X_y)} | \leq \beta \log |\mathcal{Y}|
\end{equation}
\end{lemma}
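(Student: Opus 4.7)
The plan is to establish the bound through a worst-case argument using only the boundedness of the log loss achievable by the best constant predictor, rather than through any concentration inequality --- which would be both unnecessary here and ill-suited, since the labelings $t$ in $\tdecxyk{}$ are sampled uniformly and carry no distributional information that concentration could exploit.

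First, I would show that for every labeling $t \in \tdecxy{}$, both $\CHFt{}$ and $\eCHFZ{t(\rv X_y)}$ lie in $[0, \log|\mathcal{Y}|]$. The lower bound is immediate from the non-negativity of $-\log p$ for $p \in [0,1]$. The upper bound uses the ``arbitrary constant prediction'' assumption on $\V$ from \cref{appx:assumptions_V}: there exists $f_u \in \V$ with $f_u[z] = \mathrm{Unif}(\mathcal{Y})$ for every $z \in \mathcal{Z}$, so $-\log f_u[z](t(x)) = \log|\mathcal{Y}|$ identically, and taking infimum over $\V$ yields
\begin{equation*}
\CHFt{} \leq \E{x \sim P_{\rv X|y}}{\E{z \sim P_{\rv Z \cond x}}{\log|\mathcal{Y}|}} = \log|\mathcal{Y}|,
\end{equation*}
and the analogous inequality for $\eCHFZ{t(\rv X_y)}$ follows by replacing the expectation with the empirical average over $\mathcal{D}_y$.

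Next, since $\CHFZ{\mathcal{T}(\rv X_y)}$ and $\eCHFZ{\mathcal{T}(\rv X_y)}$ are (uniform) averages of the quantities above over $\tdecxy{}$ and $\tdecxyk{}$ respectively, both averages also lie in $[0, \log|\mathcal{Y}|]$. Therefore
\begin{equation*}
\left|\CHFZ{\mathcal{T}(\rv X_y)} - \eCHFZ{\mathcal{T}(\rv X_y)}\right| \leq \log|\mathcal{Y}|
\end{equation*}
for each $y \in \mathcal{Y}$. Summing over the $|\mathcal{Y}|$ labels and multiplying by $\beta/|\mathcal{Y}|$ yields the claimed bound $\beta \log|\mathcal{Y}|$.

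There is no real obstacle: the argument is a direct boundedness estimate. The only subtle point to get right is invoking the correct assumption (arbitrary constant prediction) to justify the $\log|\mathcal{Y}|$ ceiling on the $\V$-entropy, and to note that this same constant predictor witnesses the bound simultaneously for the population and empirical versions, so no union bound or probabilistic language is needed --- the inequality holds deterministically.
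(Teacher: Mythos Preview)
Your proposal is correct and follows essentially the same strategy as the paper: bound both the population and empirical $\V$-entropy terms in $[0,\log|\mathcal{Y}|]$ via non-negativity and a constant predictor in $\V$, then conclude the absolute difference is at most $\log|\mathcal{Y}|$.

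Your execution is in fact slightly cleaner than the paper's. The paper chooses the constant predictor $f[\cdot]=P_{t(\rv X_y)}$ (the marginal of the random labeling), which gives the entropy $\op{H}{t(\rv X_y)}$ and then requires Jensen's inequality to reach $\log|\mathcal{Y}|$; it also performs a case split on the sign of the difference. By instead taking the uniform predictor $f_u[\cdot]=\mathrm{Unif}(\mathcal{Y})$, you hit $\log|\mathcal{Y}|$ directly for both the population and empirical quantities, so no Jensen step and no case split are needed. Both routes rely on the same ``arbitrary constant prediction'' assumption and yield the same deterministic bound.
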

\begin{proof}

Suppose that at $y = \arg \max_{y \in \mathcal{Y}} (\CHFZ{\mathcal{T}(\rv X_y)} - \eCHFZ{\mathcal{T}(\rv X_y)})$ we have $\CHFZ{\mathcal{T}(\rv X_y)} > \eCHFZ{\mathcal{T}(\rv X_y)}$ then:

\begin{align*}
errMin &\defeq \frac{\beta}{|\mathcal{Y}|} \sum_{y \in \mathcal{Y}} |\CHFZ{\mathcal{T}(\rv X_y)} - \eCHFZ{\mathcal{T}(\rv X_y)} | \\
&\leq \beta \max_{y \in \mathcal{Y}}  | \CHFZ{\mathcal{T}(\rv X_y)} - \eCHFZ{\mathcal{T}(\rv X_y)}|  &  \text{Max > Mean}  \\
&= \beta \max_{y \in \mathcal{Y}} (\CHFZ{\mathcal{T}(\rv X_y)} - \eCHFZ{\mathcal{T}(\rv X_y)})  & \text{Assumption} \\ 
&\leq \beta \max_{y \in \mathcal{Y}} \CHFZ{\mathcal{T}(\rv X_y)}   & \text{Non negativity} \\ 
&= \beta \max_{y \in \mathcal{Y}} \frac{1}{|\decxy{}|} \sum_{t \in \tdecxy{}}  \CHFt{} & \text{\cref{eqn:CHFZT_y}} \\ 
&\leq \beta \max_{y \in \mathcal{Y}} \max_{t \in \tdecxy{}}  \CHFt{} & \text{Max > Mean} \\ 
&=\beta \max_{y \in \mathcal{Y}} \max_{t \in \tdecxy{}} \inf_{f \in \V} \E{x \sim P_{\rv X| y} } {\E{z \sim P_{\rv Z \cond x}}{- \log f[z](t(x)) }}   & \text{Def.} \\ 
&\leq \beta \max_{y \in \mathcal{Y}} \max_{t \in \tdecxy{}} \E{x \sim P_{\rv X| y} } {\E{z \sim P_{\rv Z \cond x}}{- \log P_{t(\rv X_y)}(t(x))}}   & f[\cdot]=P_{t(\rv X_y)} \\ 
&= \beta \max_{y \in \mathcal{Y}} \max_{t \in \tdecxy{}} \E{x \sim P_{\rv X|y} } {- \log P_{t(\rv X_y)}(t(x))}   & \op{E}{const} \\ 
&\leq \beta \max_{y \in \mathcal{Y}} \max_{t \in \tdecxy{}} \log \E{x \sim P_{\rv X|y} } { \frac{1}{P_{t(\rv X_y)}(t(x))}} & \text{Jensen's Ineq.}  \\ &\leq \beta \max_{y \in \mathcal{Y}} \max_{t \in \tdecxy{}} \log \sum_{n=1}^{|\mathcal{Y}|} P_{\rv N}(n) \frac{1}{P_{\rv N}(n)} & t(\rv X_y)=\rv N  \\
&= \beta \max_{y \in \mathcal{Y}} \max_{t \in \tdecxy{}} \log |\mathcal{Y}|  \\ 
&= \beta \log |\mathcal{Y}|  
\end{align*}

The fourth line uses the non-negativity of $\V$-entropy in the finite sample setting which can be shown using the non-negativity of entropy and the monotonicity of $\V$-entropy.
The fact that $\exists f \in \V, \ s.t. \ \forall z \in \mathcal{Z}$ we have $f[z]=P_{t(\rv X_y)}$ comes from the arbitrary biasing assumption of $\V$. 
The third to last line uses the fact that $t(\rv X_y)$ t.v.i. the co-domain of $t$ which is $\mathcal{Y}$. 

 In the above we assumed that $\CHFZ{\mathcal{T}(\rv X_y)} > \eCHFZ{\mathcal{T}(\rv X_y)}$ at the arg max $y$. When $\CHFZ{\mathcal{T}(\rv X_y)} \leq \eCHFZ{\mathcal{T}(\rv X_y)}$, we have:
\begin{align*}
errMin &\leq \beta \max_{y \in \mathcal{Y}}  | \CHFZ{\mathcal{T}(\rv X_y)} - \eCHFZ{\mathcal{T}(\rv X_y)}|  &  \text{Max > Mean}  \\
&= \beta \max_{y \in \mathcal{Y}} (\CHFZ{\mathcal{T}(\rv X_y)} - \eCHFZ{\mathcal{T}(\rv X_y)})  & \text{Assumption} \\ 
&\leq \beta \max_{y \in \mathcal{Y}} \eCHFZ{\mathcal{T}(\rv X_y)}  & \text{Non negativity} \\ 
&= \beta \log |\mathcal{Y}|  
\end{align*}

Where we get the last line by applying the same steps as before to bound $\eCHFZ{\mathcal{T}(\rv X_y)}$ instead of $\CHFZ{\mathcal{T}(\rv X_y)}$.

\end{proof}

Note that the latter bound is loose and is not a PAC-style bound. 
To derive a tighter PAC-style bound one can use the fact that each $\CHFt{}$ term in $\CHFZ{\mathcal{T}(\rv X_y)}$ is $\frac{C}{\sqrt{M}}$-sub-Gaussian due to \cref{lemma:pacVsuff}.
We do not provide such bounds as the current looser bounds are more succinct and sufficient to show that $\dib{}$ is easier to estimate than $\ib{}$ with finite samples. 

\subsubsection{Proof for Estimation Bounds}

We are now ready to prove \cref{proposition:pac_formal}

\begin{proof}

Due to the triangular inequality the error is:

\begin{align}
\text{err} &\defeq \Bigl| \dibAll{} - \edibAll{}\Bigr\rvert  \notag \\
&= \Bigl\lvert (const) - (const) + \CHF{Y}{Z} - \eCHFZ{\rv Y}  - \frac{\beta}{|\mathcal{Y}|} \sum_{y \in \mathcal{Y}}  \left(  \CHFZ{\mathcal{T}(\rv X_y)} - \eCHFZ{\mathcal{T}(\rv X_y)} \right) \Bigr\rvert \notag \\
&\leq |0| + \Bigl\lvert \CHF{Y}{Z} - \eCHFZ{\rv Y} \Bigr\rvert  + \frac{\beta}{|\mathcal{Y}|} \sum_{y \in \mathcal{Y}}  \Bigl\lvert   \CHFZ{\mathcal{T}(\rv X_y)} - \eCHFZ{\mathcal{T}(\rv X_y)}  \Bigr\rvert \label{eqn:trianglular}
\end{align}

We can now compute the probability of not being approximately correct:

\begin{align*}
\overline{\mathrm{PAC}} &\defeq 
\mathbb{P}\left( \text{err} > 2 \mathfrak{R}_M(\log \circ \V) + \beta \log |\mathcal{Y}| + C \sqrt{\frac{2 \log \frac{1}{\delta} }{M}} \right) \notag \\
&\leq \mathbb{P} \Biggl( \Bigl\lvert \CHF{Y}{Z} - \eCHFZ{\rv Y} \Bigr\rvert  + \frac{\beta}{|\mathcal{Y}|} \sum_{y \in \mathcal{Y}}  \Bigl\lvert   \CHFZ{\mathcal{T}(\rv X_y)} - \eCHFZ{\mathcal{T}(\rv X_y)}  \Bigr\rvert  \notag \\
&\ \ \ \ >  2 \mathfrak{R}_M(\log \circ \V) + \beta \log |\mathcal{Y}| + C \sqrt{\frac{2 \log \frac{1}{\delta} }{M}} \Biggl) & \cref{eqn:trianglular} \notag \\
&\leq \mathbb{P} \Biggl( \left( \Bigl\lvert \CHF{Y}{Z} - \eCHFZ{\rv Y} \Bigr\rvert 
> 2 \mathfrak{R}_M(\log \circ \V) + C \sqrt{ \frac{2 \log \frac{1}{\delta}}{M}}   \right) \notag\\
&\ \ \ \ \lor \left(  \frac{\beta}{|\mathcal{Y}|} \sum_{y \in \mathcal{Y}} \Bigl\lvert   \CHFZ{\mathcal{T}(\rv X_y)} - \eCHFZ{\mathcal{T}(\rv X_y)}  \Bigr\rvert 
> \beta \log |\mathcal{Y}|  \right) \Biggr)\notag \\
&\leq \mathbb{P} \Biggl( \Bigl\lvert \CHF{Y}{Z} - \eCHFZ{\rv Y} \Bigr\rvert 
> 2 \mathfrak{R}_M(\log \circ \V) + C \sqrt{ \frac{2 \log \frac{1}{\delta}}{M}}   \Biggl) \notag \\
&\ \ \ \ + \mathbb{P} \Biggl( \frac{\beta}{|\mathcal{Y}|} \sum_{y \in \mathcal{Y}} \Bigl\lvert   \CHFZ{\mathcal{T}(\rv X_y)} - \eCHFZ{\mathcal{T}(\rv X_y)}  \Bigr\rvert 
>  \beta \log |\mathcal{Y}|   \Biggl) & \text{Union Bound} \notag \\
&\leq \delta + 0 & \text{\cref{lemma:pacVsuff} and \cref{lemma:pacVmin}} \label{eqn:Npac} 
\end{align*}

So, as desired, the probability of being approximately correct is:

\begin{equation*}
\mathbb{P}\left( \text{err} \leq 2 \mathfrak{R}_M(\log \circ \V) +  \beta \log |\mathcal{Y}| + C \sqrt{ \frac{2 \log \frac{1}{\delta}}{M}}   \right)  
= 1- \overline{\mathrm{PAC}} = 1 - \delta 
\end{equation*}
\end{proof}

\section{Reproducibility}
\label{sec:appx_reproducibility}

In this section we provide further details of the hyperparameters
chosen for the various experiments in the main
text. 
Unless stated otherwise, all the models are trained for 300 epochs,  using Adam \cite{KingmaB14} as the optimizer, a learning rate of $5e-5$,  at every epoch we decay all learning rates by $(1/100)^{(1/300)}$ (so that the learning rate is decayed by $100$ during the entire training), a batch-size of 256, without data augmentation, and using 5 and 3 random seeds respectively for experiments in the main text and appendices. We checkpoint and use the model which achieves the smallest \textit{training} loss for evaluation.\footnote{Notice that we use a small learning rate, a large number of epochs, and checkpoint based on training loss because we are interested in studying the generalization ability of a model depending solely on the criterion being optimized over.}
Activation functions are $\mathrm{LeakyReLU}(x)=\max(x,0.01 * x)$ while other unspecified parameters are PyTorch \cite{PaszkeGMLBCKLGA19} defaults.
The code can also be found at \url{github.com/YannDubs/Mini_Decodable_Information_Bottleneck}.

\subsection{$\mathfam{V}$-Minimality $\mathfam{V}$-Sufficiency}
\label{sec:hyperparam_qsmin}

Bob's encoder is a neural network which maps the input $\rv X$ to a mean $\mu_z$ and standard deviation $\sigma_z$ used to parametrize a multivariate normal distribution with diagonal Gaussian: $P_{\rv Z | \rv X} = \mathcal{N}(\rv Z; \mu_z, \mathrm{softplus}(\sigma_z - 5))$, where $\mathrm{softplus}(\cdot)=\log (1+\exp(\cdot))$.
Note that we use $-5$ as done in VIB  \cite{alemi2016deep} to make the methods more comparable. 
During training we sample a single $z \sim P_{\rv Z | \rv X}$, while we sample 12 during evaluation (as done in VIB \cite{alemi2016deep}).
\footnote{Contrary to VIB, DIB does not require the use of an encoder that parameterizes a Gaussian distribution. We use a Gaussian to make it more comparable to VIB.}
The representation then goes through a batch normalization layer without trainable parameters (setting the mean to 0 and standard deviation to 1), which ensures that the representation cannot diverge as discussed in \cref{sec:appx_exploding_norm}.

The encoder is trained using two losses which are weighted by a hyperparameter $\beta$,  $\edib{}=\mathcal{L}_{\V \text{suff}} - \beta \mathcal{L}_{\V \text{min}}$:
\begin{itemize}
\item \textbf{$\V$-sufficiency} $\mathcal{L}_{\V \text{suff}}$. The representation $z$ goes through a head of architecture $\V$. The last layer of this head goes through a softmax to parametrize a distribution over of labels, i.e., $f[z](y)$ corresponds to the $y^{th}$ neuron in that layer. 
The resulting loss $\mathcal{L}_{\V \text{suff}}$ is the standard cross entropy.
We then back-propagate to jointly minimize the loss with respect to the head and the encoder.
\item \textbf{$\V$-minimality} $\mathcal{L}_{\V \text{min}}$. 
In addition to being used for $\mathcal{L}_{\V \text{suff}}$, the representation $z$ is used as input to $\V$-minimality heads that each predict a different $\rv N \in \decxy{}$ in the same way as how the $\V$-sufficiency head predicts the label $\rv Y$.
We get each $\rv N$ using \cref{algo:baseb}, i.e., assigning each example $x \in \mathcal{X}$ some index and then performing base $|\mathcal{Y}|$ expansion (see \cref{sec:appx_base}).
For the case of CIFAR10 this corresponds to: 
\begin{inlinelist}
\item assigning each image some index between $0$ and the number of examples ($\sim 6000$);
\item having 4 nuisance labels $\rv N$ corresponding to each digit of the new index, e.g., the cat number 627 will have $\rv N_1=0$, $\rv N_2=6$, $\rv N_3=2$, $\rv N_4=7$
\end{inlinelist}.

Each $\V$-minimality head predicts the corresponding $\rv N$.
Having to treat every example differently based on their underlying label $y$ (``for loop'' in \cref{alg:pseudo_dib}) is not amenable to batch GPU training, which assumes that every example in a batch is treated the same way.
We thus use the same predictor for a set $\{ \decxy{} \}_{y \in \mathcal{Y}}$ (see \cref{sec:appx_cdib}), i.e., instead of having one predictor for $\mathrm{Dec}(\rv X, \text{cat})$ and another for $\mathrm{Dec}(\rv X, \text{dog})$ where representations are $z \sim \rv Z_y$ (as shown in \cref{sec:appx_reproducibility}) we use a \textit{single} head to predict both $\mathrm{Dec}(\rv X, \text{cat}),\mathrm{Dec}(\rv X, \text{dog})$ using representations $z \in \rv Z$ from cats or dogs as inputs.
By taking an average over the loss of each head we get the $\mathcal{L}_{\V \text{min}}$ term of \cref{alg:pseudo_dib}.
Throughout the paper we unroll the optimization of $\V$-minimality heads for 5 steps, i.e., for every batch $\mathcal{L}_{\V \text{min}})$ is \textit{minimized} by $\V$-minimality heads while the encoder \textit{maximizes} it.
We show in \cref{sec:appx_minimax} that, as seen in \cref{fig:dib_neural_net}, DIB can perform similarly well with joint gradient ascent descent --- by reversing gradients which is more efficient and easier to implement.
\end{itemize}

Once the encoder is trained, we can train Alice's classifier by:
\begin{itemize}
\item \textbf{Standard (Avg, ERM)}. We freeze the trained encoder and use the representations as inputs to Alices head of architecture $\V$. Alice then trains her classifier by minimizing the usual cross-entropy.
\item \textbf{Worst ERM}. In some cases we want to explicitly find a Classifier from Alice that will perform well on train but bad on test. 
To do so, we optimize $\arg \min_{f \in V} \eRisk{}  - 0.1 *  \Risk{}$   (see \cref{sec:appx_antireg}), which corresponds to minimizing the training cross-entropy while directly \textit{maximizing} the \textit{test} cross-entropy.
\end{itemize}

Finally, Alice's classifier is then evaluated by its test log loss (risk).

\subsubsection{$\mathfam{V}$-Sufficiency}
\label{sec:hyperparam_qsuff}

For the $\V$-Sufficiency experiments (\cref{sec:qsuf}), we use a ResNet18 for Bob's encoder and a single-hidden layer MLP for Alice with varying width (see \cref{sec:sweep} for details and justification).
For \cref{fig:2d_qsuf} we use a 2 dimensional $\rv Z$ and odd-even classification of CIFAR100.
For \cref{fig:scaled_qsuf} we use a 8 dimensional $\rv Z$ and full CIFAR100.
The encoder is trained to be $\V_{Bob}$-sufficient and so we do do not use $\mathcal{L}_{\V \text{min}}$.
Alice uses an architecture $\V_{Alice}$ and is trained using standard cross-entropy.
To support \cref{proposition:opt_qsuf} we want to show that $\V$-sufficient representations are optimal when Bob and Alice \textit{have access to the entire underlying distribution}.
As a result, we evaluate Alice's classifier on the \textit{training} set. 

\subsubsection{$\mathfam{V}$-Minimality $\mathfam{V}$-Sufficiency}
\label{sec:hyperparam_qmin}

For the $\V$-minimality experiments (\cref{sec:qmin}), our goal is to show that if Bob trains $\V$-minimal $\V$-sufficient representations, any ERM trained by Alice will perform well on test (supporting \cref{theo:opt_qmin}).

Since our theory does not impose any limitation on the possible representations $\rv Z$, we need an encoder that is very flexible and as close as possible to a universal function approximator.
Thus, we use a large MLP with three hidden layers each with 2048 hidden units, for a total of around 21M parameters.
Furthermore, we use a 1024 dimensional $\rv Z$ in order to avoid constraints arising from a dimensionality bottleneck rather than from the criterion that Bob uses to train $\rv Z$.
Alice's predictive family  $\V$ is a single hidden layer MLP with 128 hidden units.
As the encoder is much larger than $\V$ we increase the learning rate of $\V$-minimality heads by a factor of $50$ to make sure that they can ``keep up'' with the changing encoder.

For \cref{fig:qminimality_gap_loglike} and \cref{fig:qminimality_Vbits} (Effect of DIB on generalization), we use the CIFAR10 dataset, and train Alice's classifier in the ``Worst ERM'' setting.
The same holds for \cref{fig:qminimality_cifar10mnist}, but uses the CIFAR10+MNIST dataset (see \cref{sec:cifar10mnist}).
In this case, the Bob's encoder is still trained using only CIFAR10.
Once the encoder is trained and frozen, we evaluate how well Alice's (worst) ERM can predict the CIFAR10 labels (as before).
In addition, we also train another classifier in $\V$ to predict the MNIST labels, using the same encoder.

For \cref{table:worstcase} (performance of Avg. and worst ERM for different regularizers) we train the encoder in different ways, and Alice's classifier in the ``Worst ERM'' (top row) and ``Avg. ERM'' (bottom row) settings.
Importantly, each regularizer is used only during Bob's training, as we are interested to know how DIB performs compared to other regularizers for representation learning, when the Alice's downstream classifier is an empirical risk minimizer as in our problem formulation (\cref{sec:formal_prob_state}).
The regularizers are as follows (we tuned all models):
\begin{inlinelist}
\item ``No Reg.'' does not use any regularizer and the encoder directly outputs the representation $z$ rather than a distribution from which to sample (this is the only such deterministic encoder in these results);
\item ``Stoch Rep.'' does not use any regularizer but the encoder is the same as the one used in DIB, i.e., it parametrizes a Gaussian distribution from which 12 $z$ are sampled and the predictions are marginalized over these samples;
\item ``Dropout'' uses 50\% dropout after every layer in the encoder and is kept when training Alice's encoder;
\item ``Wt. Dec.'' uses 1e-4 weight decay during Bob's training;
\item ``VIB'' uses a KL-divergence ``regularizer'' to force the parametrized to be closer to a standard normal distribution (as described in \cite{alemi2016deep}), the weight of the regularizer $\beta=1e-1$.
\item ``$\V^-$-DIB'' uses a one hidden layer MLP with 2 hidden units (instead of 128) and $\beta=100$.
\item ``$\V^+$-DIB'' uses a one hidden layer MLP with 8192 hidden units (instead of 8192) and $\beta=0.01$.
\item ``$\V$-DIB'' uses the correct one hidden layer MLP with 128 hidden units and $\beta=10$.
\end{inlinelist}

\subsection{CIFAR10+MNIST Dataset}
\label{sec:cifar10mnist}

We follow \citet{AchilleS18dropout} and overlay MNIST digits on top of CIFAR10 images to create the CIFAR10+MNIST dataset. 
Concretely, we pick a CIFAR10 image and on top of it overlay an MNIST image selected uniformly at random. 
The code used to generate the dataset as well as some samples can be found in \url{https://github.com/YannDubs/Overlayed-Datasets}.

\subsection{Correlation}
\label{sec:hyperparam_corr}

For the correlation experiments in \cref{sec:beyondworst}, we largely follow previous work
by \citet{jiang2019fantastic} in the sweeps over 
hyperparameters to get an initial set of models with potentially
different generalization errors. 

Let \texttt{Conv(kernel\_size, stride)} denote a convolutional layer.
The basic block of the convolutional networks consist of (in order): \texttt{Conv(3, 2, padding=1), BatchNorm, Relu, Conv(1, 1), BatchNorm, Relu, Conv(1,1), Relu, dropout}.
The final networks consist of one \texttt{Conv(1,1),Relu} used to set the correct number of channels, followed by ``depth'' number of blocks, followed by \texttt{Conv(1,1)} that set the number of channels to the dimensionality of the representation, followed by an average pooling over the spatial dimensions.
The resulting output is a (deterministic) representation which will go through an MLP with 2 hidden layers of width 128 to perform classification.

In order to be comparable to \citet{jiang2019fantastic} we sweep over the following hyperparameters:
\begin{inlinelist}
\item the learning rate (1e-3, 3e-4, 1e-4);
\item the batch size (32, 64, 128);
\item the dropout rate (0, 0.25, 0.5);
\item the width/channel size (192, 384, 768);
\item the depth/number of blocks (2, 4, 8);
\item the dimensionality of the representation (32, 128, 512).
\end{inlinelist}
We train every models with combination of these parameters on CIFAR10 and stop once the train log likelihood is better than 0.01 (with a hard stop at 300 epochs if the model did not reach that threshold by then).
The resulting subset of 562 models thus all (approximately) perform equally well on training, which enables to study the effect of hyperparameters on generalization in isolation without
the influence of performance on training as an indicator for
generalization. 
For each resulting model we compute the difference between performance on train and test (generalization gap), both in terms of accuracy and log likelihood.
We then compare the rank correlation between the desired measure and the observed generalization gap. 

The methods that we compare to are the best performing in each section of \citet{jiang2019fantastic}, namely:
\begin{inlinelist}
\item ``Entropy'' is the average entropy of the predicted probabilities;
\item ``Path Norm'' takes an input of all ones and passes it through the network where all the parameters are squared and returns the square root of the sumed logits;
\item ``Var. Grad.'' computes the average gradients at the end of training;
\item ``Sharp. Mag'' essentially finds the maximum (relative) perturbation that can be applied to the weights to get less than 0.1 log likelihood difference. We use $\frac{1}{\alpha'}$ version of sharpness magnitude as described in \citet{jiang2019fantastic}.
\end{inlinelist}

The code of \citet{jiang2019fantastic} is not (currently) public but we did our best to follow their work on all but the following three points:
\begin{inlinelist}
\item We use an MLP after the CNN, which was used to evaluate $\V^+$,$\V^-$ minimality as in the rest of the paper,
\item We do not sweep over the weight decay and optimizer but instead we vary the size of the representation, to try to incorporate a representation-specific parameter,
\item our implementation of the sharpness magnitude measures differences in log-likelihood instead of accuracy. 
\footnote{We initially only wanted to consider generalization in terms of log-likelihood since our theory only talks about log-likelihood.
For this reason, the correlation of sharpness magnitude in \cref{table:correlation} is lower than in Jiang \citet{jiang2019fantastic}, which is why we also transcribe their results.}
\end{inlinelist}

\section{Additional Experiments}
\label{sec:appx_additional}

In the rest of the appendices we provide additional experiments and results.
Specifically we investigate:
\begin{inlinelist}
\item How to obtain meaningful nested predictive families  $\mathcal{V}^- \subset \mathcal{V} \subset \mathcal{V}^+$  which we use throughout our paper (\cref{sec:sweep});
\item Different methods to deal with the min-max optimization in $\dib{}$ (\cref{sec:appx_minimax});
\item An important ``trick'' that helps the min-max optimization of $\dib{}$ (\cref{sec:appx_exploding_norm});
\item The effect of using Monte Carlo estimation of $\DIFxzCy{}$ (\cref{sec:appx_reindexing});
\item How to improve $\dib{}$ by sampling approximately independent nuisance r.v.s from $\decxy{}$ (\cref{sec:appx_base});
\item How to efficiently implement $\dib{}$ for standard GPU batch training (\cref{sec:appx_cdib});
\item How to obtain an ERM which performs well on train but bad on test (``Worst ERM'';\cref{sec:appx_antireg});
\item Whether our $\V-sufficiency$ results in \cref{sec:qsuf} hold across various settings (\cref{sec:QsufAll});
\item Why $\V$-sufficiency is not as important in large networks trained with SGD (\cref{sec:QsufXL});
\item The effect of $\beta$ on different  $\V$-minimality terms (\cref{sec:appx_Qs});
\item The performance of DIB as a standard regularizer (\cref{sec:appx_quant_2player});
\item Whether the degree of $\V$-minimality is correlated with generalization across different neural networks and datasets (\cref{sec:appx_correlation}).
\end{inlinelist}

\subsection{Sweeping over Predictive Families}
\label{sec:sweep}

\begin{figure}
\centering
 \begin{subfigure}{0.45\textwidth}
\includegraphics[width=\textwidth]{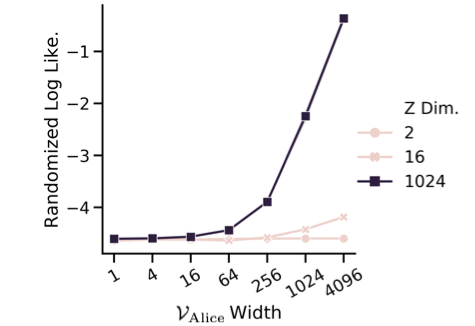} 
  \caption{Width}
 \end{subfigure}
  \begin{subfigure}{0.45\textwidth}
\includegraphics[width=\textwidth]{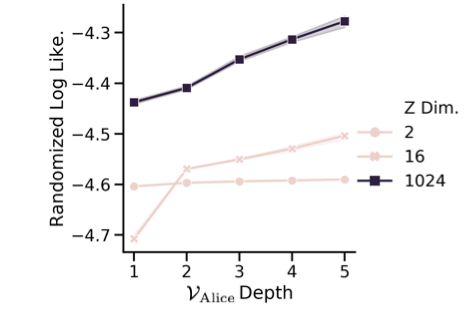} 
  \caption{Depth}
 \end{subfigure}
  \begin{subfigure}{0.45\textwidth}
\includegraphics[width=\textwidth]{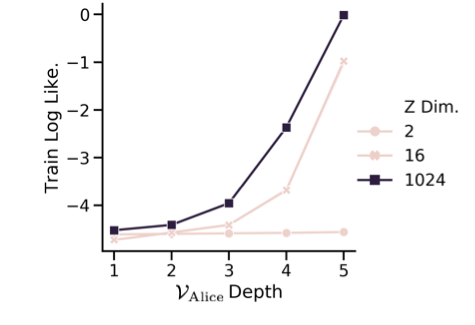} 
  \caption{Width and Depth}
 \end{subfigure}
 \begin{subfigure}{0.45\textwidth}
\includegraphics[width=\textwidth]{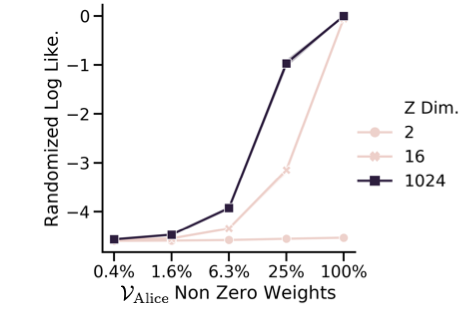} 
  \caption{Weight Pruning}
 \end{subfigure}
\caption{Sweeping over functional families. 
Each plot shows how the complexity of a functional family (measured by its ability to learn arbitrary CIFAR10 labels from a $2,16,1024$ dimensional representation) increases by sweeping over the following properties of an MLP: (a) Width, (b) Depth, (c) Width and Depth ($width=32*2^{depth}$), (d) Weight Pruning. The results in all panels are averaged over 3 runs with 95\% bootstrap confidence interval.
}
\label{fig:sweep}
\end{figure}

A core set of our experiments involve using nested predictive families $\V^- \subset \V \subset \V^+$.
In this appendix, we study different ways of ``sweeping'' over functional families, i.e. finding some parameter s.t. increasing the value it can take $k<k'$ means increasing the family $\V_{k} \subset \V_{k'}$.
Using neural networks with varying architectures, we investigate the following possibilities:

\begin{description}
\item[Width] Sweeping over the width ($w=4^k$) of a single layer MLP ($d=1$).
\item[Depth] Sweeping over the depth ($d=k$) of an MLP ($w=128$) \footnote{We also tried with a width of 32 but the differences due to depth was surprisingly less pronounced there.}.
\item[Width and Depth] Simultaneously sweeping over the depth ($d=k$) and width ($w=32*2^k$) of an MLP.
\item[Weight Pruning] Sweeping over the percentage of non-pruned weights ($\%_{\neq 0}=\frac{2^k}{2^8}$) of an MLP ($d=3$,$w=2048$). 
\footnote{To implement that, we start with a usual MLP and then prune recursively 50\% of the weights. The recursiveness ensures that every weight which were previously pruned will also be pruned in the next round. }
\end{description}

To see whether the aforementioned methods are effective ways of sweeping over functional families, we analyze their respective complexity by looking at how well they can fit arbitrary labelling, which was proposed by \cite{zhang2016understanding} as a measure of complexity intuitively similar to Rademacher complexity.
Specifically, we train a $\mathcal{X}-1024-1024-\mathcal{Z}-64-\mathcal{Y}$ MLP on CIFAR10, then freeze the encoder $\mathcal{X}-1024-1024-\mathcal{Z}$, shuffle the labels $P_{\rv Z \times \rv Y}=P_{\rv Z }\times P_{\rv Y}$, and compute the training log likelihood achieved by the predictive family $\min_{f\in \V} \eRisk$. 
We do so for increasing dimensionality (2,16,1024) of the representations $\rv Z$.

\Cref{fig:sweep} shows that each of the four aforementioned sweeping methods increases their respective complexities (besides for two dimensional representations which appears constant).
Sweeping only over depth does not appear to significantly increase the complexity of the functional family \footnote{This is likely because the sweeping interval $[1,\dots,5]$ is quite small. When using a larger sweeping interval, the optimization was harder, often yielding smaller randomized log likelihood.}.
Sweeping over weight pruning fraction is a very effective method to increase the functional family.
Sweeping over the width and depth together is also very effective to increase the complexity of the family, but we found that in some experiments the deepest MLPs were too difficult to optimize.
Sweeping over the width of the MLP increases the complexity significantly.
This last method is simple and effective, so we decided to use it as the sweeping method in the main text.
\subsection{Min-Max Optimization}
\label{sec:appx_minimax}
\begin{figure}
\centering
\includegraphics[width=\textwidth]{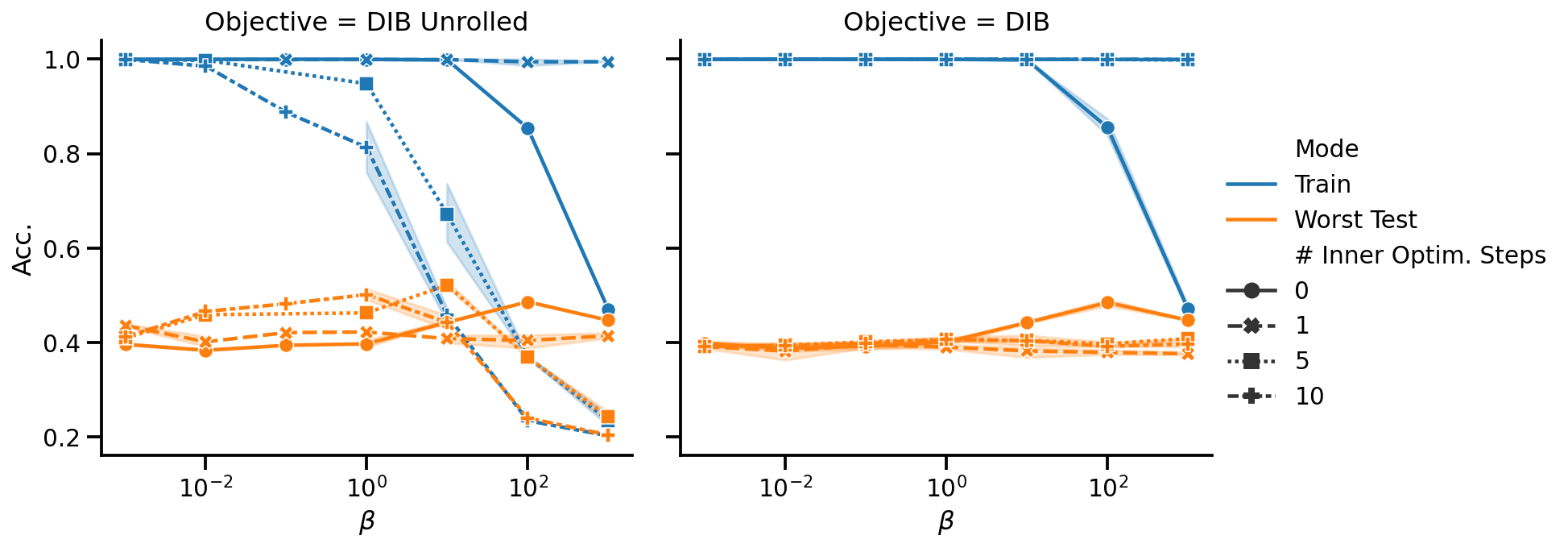}
\caption{Effect of taking multiple inner optimization steps (over $\V$) on Alice's generalization gap during the min-max optimization in DIB. 
The left figure shows the gap when higher order gradients are computed through the unrolled internal optimization.
The right figure is a baseline showing the result of taking the same number of internal optimization steps but not tracking gradients through the internal optimization.
An inner optimization of 0 means joint optimization using gradient reversing.}
\label{fig:minimax}
\end{figure}

As mentioned in \cref{sec:practical_optim}, optimizing the DIB involves a min-max procedure that is hard to optimize. 
In all our experiments, we optimize over $\rv Z$ by using variants of stochastic gradient descent (SGD) to learn the parameters of the encoder and thus require the gradient of the DIB objective \cref{eqn:dib} with respect to the encoding model's parameters.
To see where the issues arise, we show  how to compute the gradients $\frac{\partial }{\partial \rv Z} \dib{}$\footnote{We use the notation $\frac{\partial }{\partial \rv Z} \dib{}$ to denote $\frac{\partial }{\partial \theta} \dib{}$ where $\rv Z = \mathrm{encoder}_{\theta}(\rv X, \epsilon), \ \epsilon \sim \mathcal{U}(0,1)$.
In other words the encoder will take some noise $\epsilon$ and some input $x$ and will output a representation $z$, we are interested in the parameters of that encoder.
}:

\begin{align}
\frac{\partial }{\partial \rv Z} \dib{}&=  - \frac{\partial }{\partial \rv Z} \DIF{Y}{Z} + \frac{\partial }{\partial \rv Z} \beta *  \DIFxzCy{} \notag \\
&= - \frac{\partial }{\partial \rv Z} \DIF{Y}{Z} \notag \\
&+    \frac{\partial }{\partial \rv Z}  \sum_{y \in \mathcal{Y}}  \frac{\beta}{|\mathcal{Y}|*|\decxy{}|} \sum_{\rv N \in \decxy{}} \DIF{N}{Z_y}  \notag \\
&=  - \frac{\partial }{\partial \rv Z} \op{H}{\rv Y} + \frac{\partial }{\partial \rv Z} \CHF{Y}{Z} \notag \\
&+  \sum_{y \in \mathcal{Y}} \beta' \sum_{\rv N \in \decxy{}} (\frac{\partial }{\partial \rv Z} \op{H}{\rv Y} - \frac{\partial }{\partial \rv Z} \CHF{N}{Z_y})  \notag \\
&= \frac{\partial }{\partial \rv Z} \min_{f\in \V} \mathrm{R}^{(\rv Y)}(f,\rv Z) -   \sum_{y \in \mathcal{Y}} \beta' \sum_{\rv N \in \decxy{}} \frac{\partial }{\partial \rv Z} \min_{f\in \V} \mathrm{R}^{(\rv N)}(f,\rv Z) &  \text{\cref{lemma:Vent_risk}}  \notag \\
&= \left( \frac{\partial }{\partial \rv Z} \min_{f\in \V} \mathrm{R}^{(\rv Y)}(f,\rv Z) +  \sum_{y \in \mathcal{Y}} \beta' \sum_{\rv N \in \decxy{}} \frac{\partial }{\partial \rv Z} \max_{f\in \V} \mathrm{R}^{(\rv N)}(f,\rv Z) \right) \label{eqn:grad} 
\end{align}

Where we used $\mathrm{R}^{(\rv Y)}$ and $\mathrm{R}^{(\rv N)}$ to make it explicit that the risk terms are for different predictions.
For the first term in \cref{eqn:grad}, we follow the \textit{de facto} method of computing gradients, i.e., to treat the problem simply as joint optimization over $\rv Z$ and $\V$. 
Complications arise, however, because the $\V$-minimality term involves a maximization, thus giving rise to a min (over the encoding) - max (over classifiers) optimization.
There exist at least three ways of estimating such gradients:
\begin{description}
\item[Exact] Assuming that we can perform the inner optimization exactly $f^*_{\rv Y} = \arg \min_{f\in \V} \mathrm{R}^{(\rv Y)}(f,\rv Z)$ and $f^*_{\rv N} = \arg \max_{f\in \V} \mathrm{R}^{(\rv N)}(f,\rv Z) $, then we we know by the Envelop theorem \cite{milgrom2002envelope} that the gradients are simply:
\begin{equation*}
 \frac{\partial }{\partial \rv Z}  \mathrm{R}^{(\rv Y)}(f^*_{\rv Y},\rv Z) +   \beta' \sum_{y \in \mathcal{Y}} \sum_{\rv N \in \decxy{}}  \frac{\partial }{\partial \rv Z} \mathrm{R}^{(\rv Y)}(f^*_{\rv N},\rv Z) 
\end{equation*}
This exact method is very restrictive, as we can essentially only find the optimal functions if we add strong restrictions on $\V$ (e.g. linear classifiers).
\item[Joint Optimization] One could disregard the issues that arise from min-max optimization and optimize everything jointly.
This can easily be implemented by reversing the sign (sometimes referred to as a gradient reversal layer \cite{ganin2014unsupervised}). 
This is what we show in \cref{fig:dib_neural_net}.
Note that there are no guarantees of convergence, even to a local minimum.  
\item[Unrolling Optimization] A third possibility consists in ``unrolling'' the inner optimization \cite{pearlmutter2008reverse,maclaurin2015gradient,metz2016unrolled,grefenstette2019generalized} by taking a few SGD steps in the internal optimization loop  (over the functions $f$) and computing the gradients with respect to the $\rv Z$.
Note that there are again no guarantees of converging even to a local minimum.
Nevertheless, the gradients are better estimates of the true gradients than in the joint case.
A key hyper-parameter then becomes the number of inner optimization steps to perform for each $Z$ update.
\end{description}

We experimented with the three aforementioned approaches to estimating the gradients.
While preliminary results suggested that the ``Exact'' method is, unsurprisingly, better than the two other methods, we did not want to restrict the function families and thus opted for the other two approaches.
In the following, we compare two performance of the other methods that do not necessitate any restriction on $\V$.
For all of experiments, we employ an additional trick that arises due to the inner optimization not being run until convergence (see \cref{sec:appx_exploding_norm}).

\Cref{fig:minimax} shows the effect of the number of inner optimization steps on Alice's generalization.
We see that she achieves best performance by either joint optimization (which is noted as 0 inner optimization steps) or unrolling optimization with multiple inner optimization steps.
Although it is not clear from \cref{fig:minimax} using $5$ inner optimization steps is significantly better than performing joint optimization.
Indeed, at the best $\beta$ ($\beta=10$ for 5 inner steps, and $\beta=1000$ for joint optimization) taking 5 inner steps gives an average test log likelihood of $-1.56 \pm 0.03$ against $-1.65 \pm 0.00$ for joint optimization.
We also see that increasing the number of steps results in an objective which appears more robust to the choice of $\beta$.
This comes, however, at the cost of increased computational complexity.
Throughout the paper we use five inner optimization steps, but note that for larger problems, it would be advisable to use joint optimization in order to decrease the computational complexity.

\subsection{Diverging Representation from Min-Max}
\label{sec:appx_exploding_norm}

\begin{figure}
\centering
 \begin{subfigure}{0.49\textwidth}
\includegraphics[width=\textwidth]{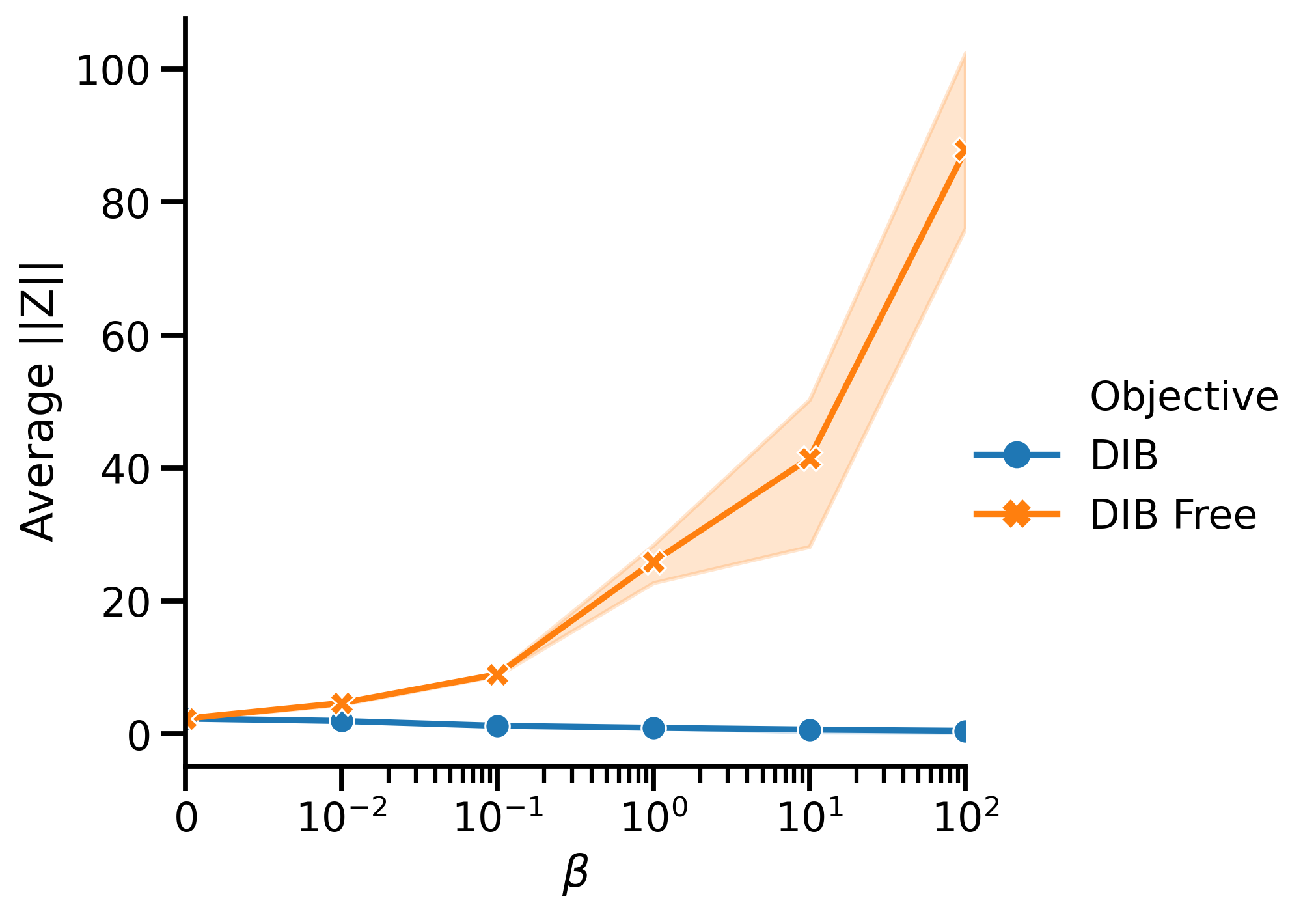} 
  \caption{Absolute Value of Mean of $\rv Z$}
  \label{fig:zmean}
 \end{subfigure}
  \begin{subfigure}{0.49\textwidth}
\includegraphics[width=\textwidth]{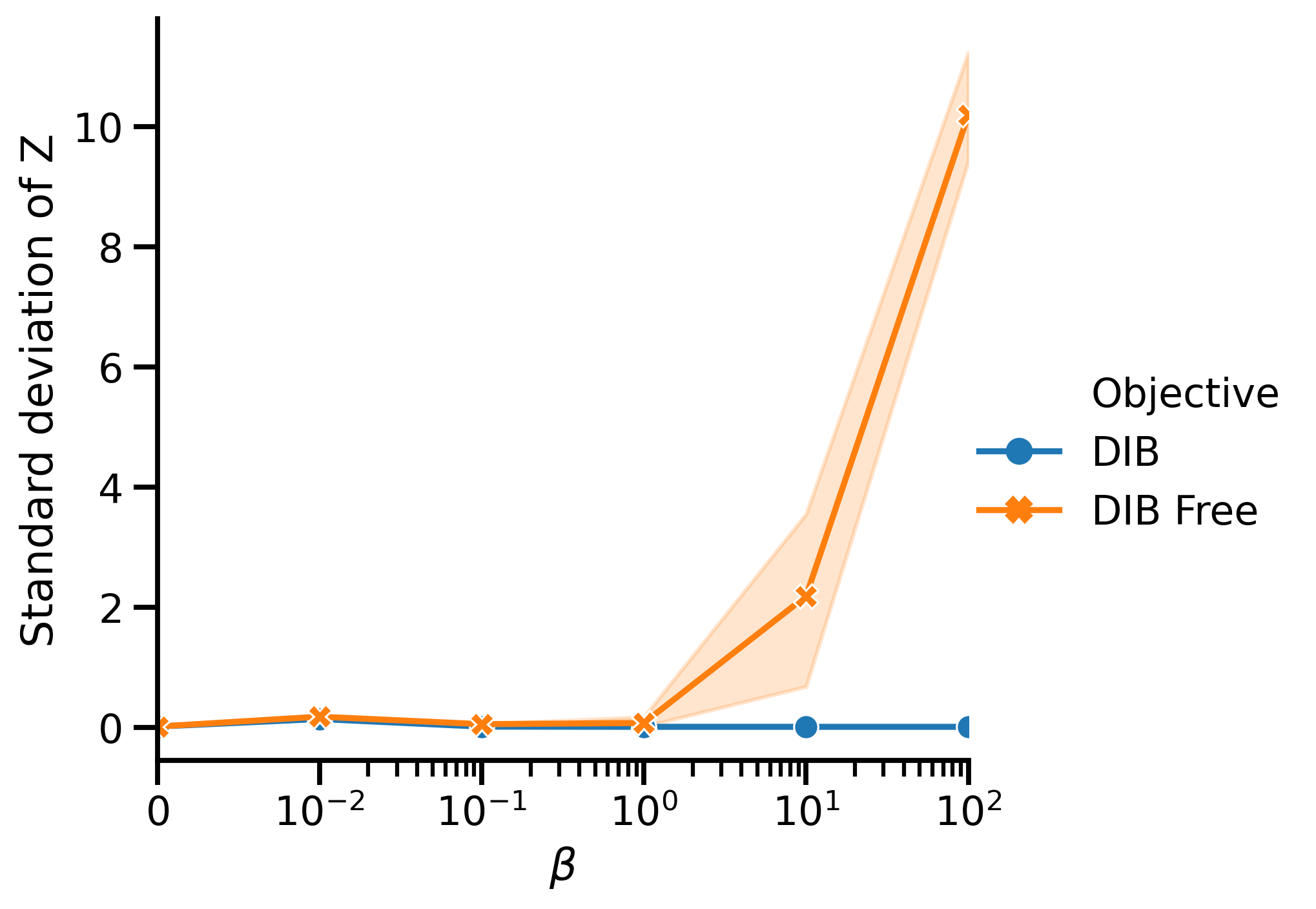} 
  \caption{Standard Deviation of $\rv Z$}
  \label{fig:zstd}
 \end{subfigure}
\caption{
Consequences of not performing the internal optimization to convergence on (a) the average (across batches) absolute value of the mean of $\rv z$; (b) the average (across batches) of the standard deviation of $\rv Z$.
In both cases we plot ``DIB Free'', which consists of the naive DIB, and ``DIB'' which uses our batch normalization solution.
}
\label{fig:znorm}
\end{figure}

As discussed in \cref{sec:appx_minimax}, the DIB objective requires a minimax optimization, which we solve using 5 steps of inner optimization.
A major issue that arises from with this approach is that the encoder can ``cheat'' because the inner optimization is not done until convergence.
As a result the encoder can learn representations $\rv Z$ that are highly variable such that the decoder $f \in \V$, which tries to predict $\rv N$, cannot adapt quickly enough.

To solve this issue we pass the sampled representations through a batch normalization layer \cite{ioffe2015bn} but without trainable hyper-parameters, i.e. we normalize each batch of representations to have a mean of zero and a variance of one.
As this is simply a rescaling, it could easily be learned by any $f \in \V$ if the inner optimization were performed until convergence (it does not modify $\V$).
Nevertheless, it does give much better results since it ensures that the encoder learns a meaningful representation, rather than taking advantage of the limited number of steps in the internal maximization.
Note that the encoder has many more parameters than the classifier, allowing it to alter the representation such that the classifier cannot ``keep up''.
\Cref{fig:znorm} shows that without this ``trick'' the mean and standard deviation in fact diverges as $\beta$ increases (labeled DIB Free).
This is solved by the normalization trick (labeled DIB), which we use throughout the paper.

\ydnote{also talk about the fact that you compute the upper bound and then set to it if larger. Maybe only arxive ?}
\subsection{Monte Carlo Estimation of $\DIFxzCy{}$}
\label{sec:appx_reindexing}
\begin{figure}
\centering
\begin{subfigure}{\textwidth}
\includegraphics[width=\textwidth]{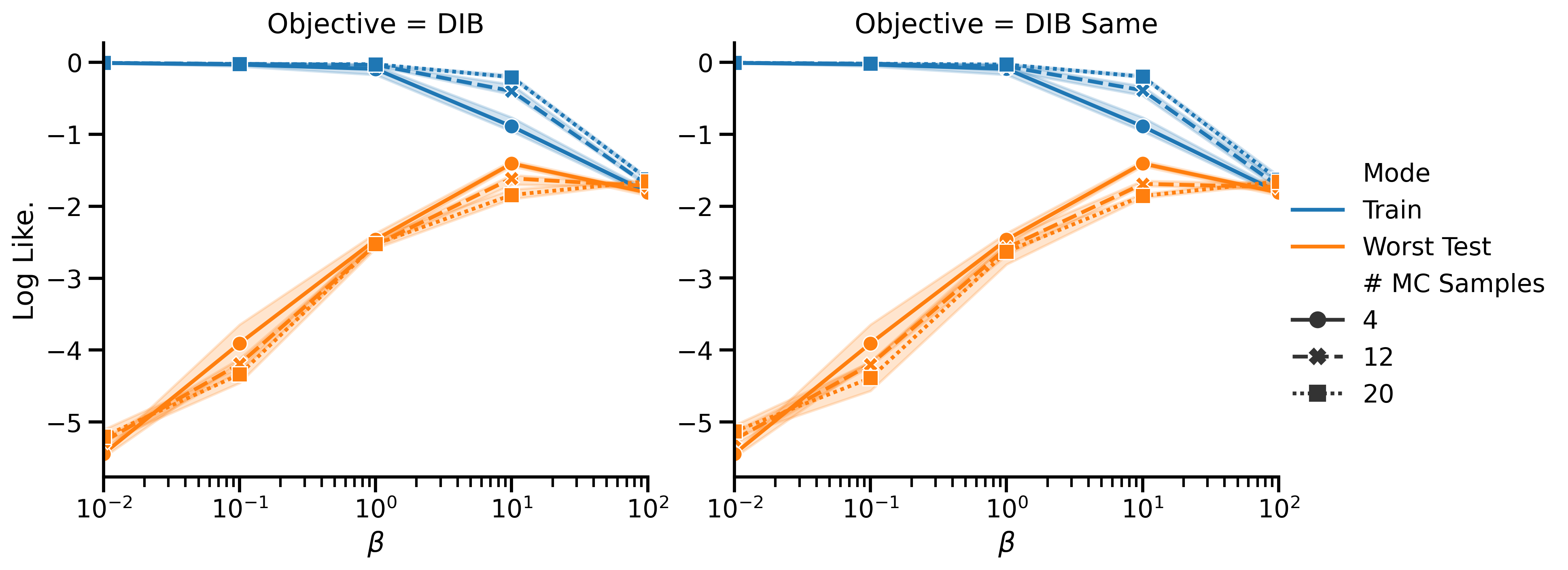} 
  \caption{Generalization Gap}
  \label{fig:montecarlo_gap_loglike}
 \end{subfigure}
  \hfill
 \begin{subfigure}{\textwidth}
\includegraphics[width=\textwidth]{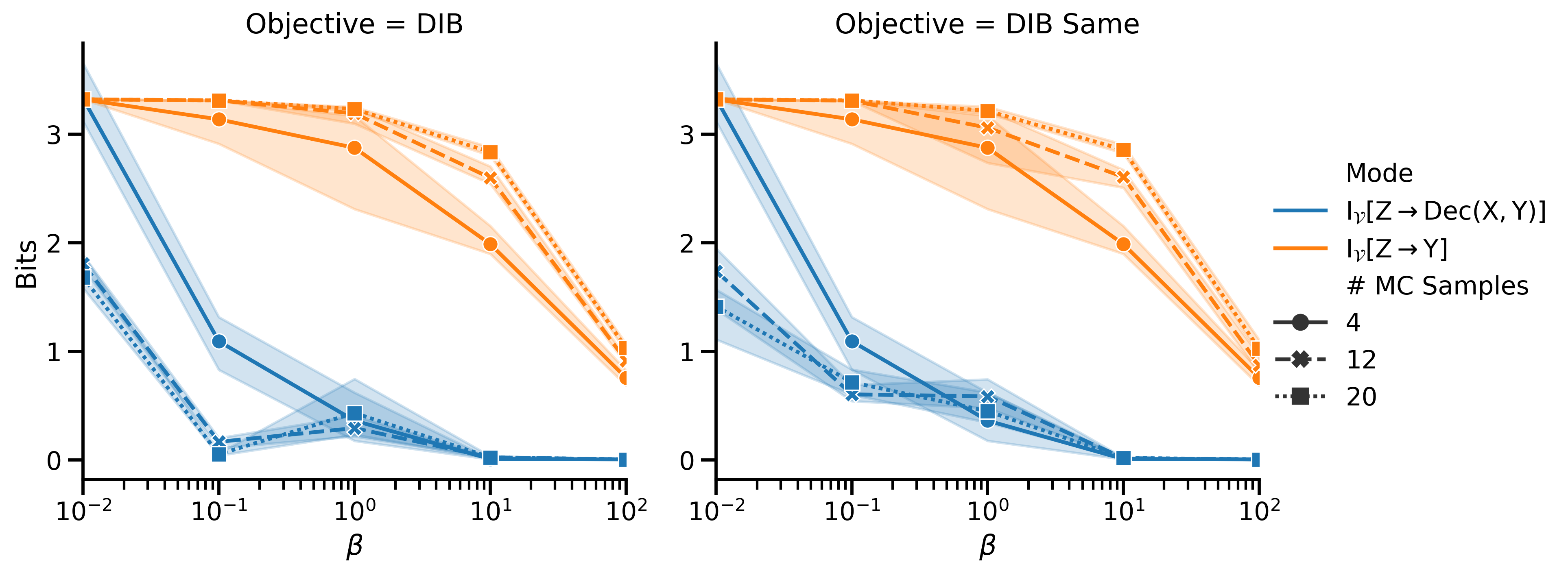} 
  \caption{Terms in DIB}
  \label{fig:montecarlo_Vbits_all}
 \end{subfigure}
\caption{
Effect of number of Monte Carlo Samples on (a) the worst case generalization gap; (b) the terms estimated by DIB. 
In both (a) and (b) the left plot show [4, 12, 20] Monte Carlo samples per label.
The right plot (labeled DIB Same) is a baseline that always has four Monte Carlo samples, but uses [1, 3, 5] predictors with different decoders with different initializations each predicting the \textit{same} $\rv N$ (for a total of [4, 12, 20] predictors as in the left plots).
All other hyperparameters are the same as for \cref{fig:qminimality}.
}
\label{fig:montecarlo}
\end{figure}

Optimizing DIB involves the task of minimizing $\DIFxzCy{}$, which requires (\cref{eqn:DIFxzCy}) computing an average over all $\rv y$ decompositions of $\rv X$ --- of which there are $|\mathcal{Y}|^{\frac{|\mathcal{X}|}{|\mathcal{Y}|}+1 }$.
As a result, even though the $\V$-information terms are sample efficient (due to the estimation bounds given in \cite{xu2020theory}), estimating it directly is not computationally efficient.
To estimate the $\DIFxzCy{}$ in a computationally efficient manner, we thus perform a Monte Carlo estimation of the average (corresponding to \texttt{random\_choice} in \cref{alg:pseudo_dib}).
In this section we show that in practice, we only require a very small number of Monte Carlo samples, allowing DIB to be implemented in a computationally efficient manner.

\Cref{fig:montecarlo} shows the result of using a different number of Monte Carlo samples (4, 12, or 20  r.v.s $\rv N$ per label $y$).
All other hyperparameters are identical to those used in \cref{fig:qminimality}.
In order to ensure that the gains come from sampling different $\rv N$s rather than from using a larger number of predictors, we also trained a model (labeled ``DIB Same'') which always uses four $\rv N$ labelings multiple predictors per labeling to match the total number of different predictors.
For example, ``DIB Same'' with 20 predictors corresponds to sampling four $\rv N$ and then having 5 predictors (different initializations) per $\rv N$ that each try to predict the same arbitrary labels $\rv N$.
Indeed, increasing the number of predictors (even with the same $\rv N$) might help as each will converge to a different local minimum.
We see that increasing the number of predictors does seem to have an effect on DIB, but the number of Monte Carlo estimates does not seem to change much compared with using more predictors.
Interestingly, the best test log likelihood comes from using the \textit{fewest} number of predictors.

This finding that the number of Monte Carlo samples has little effect on DIB might seem surprising as we only use four instead of the $|\mathcal{Y}|^{\frac{|\mathcal{X}|}{|\mathcal{Y}|}+1} \approx 10000$ different $\rv N$.
But it is important to notice that many of these $\rv Y$ decompositions of $\rv X$ are redundant (i.e. they contain the same $\V$-information).
For example, due to the invariance of $\V$ to permutations, minimizing $\DIF{N}{Z}$ also minimizes $\DIF{\pi{N}}{Z}$, for all permutations $\pi$ on $\mathcal{Y}$.
Generally speaking, the larger the functional family $\V$, the more that $\rv N \in \decxy{}$ will be redundant in that minimizing the $\V$-information with respect to some subset of N will also minimize the $\V$-information of a different subset of $\rv N \in \decxy{}$.

\subsection{$y$ Decomposition of $\rv X$ Through Base Expansion}
\label{sec:appx_base}

\begin{algorithm}[t]

\SetAlgoLined
\DontPrintSemicolon
\SetKwInOut{Input}{Input}
\SetKwInOut{Output}{Output}

\Input{All possible inputs $\mathcal{X}$, labels $\rv Y$ associated with each $\mathcal{X}$, all possible labels $\mathcal{Y}$ }
\Output{A matrix $\rv N$, where the $i^{th}$ column is the value of $\rv N_i$ for the corresponding $\mathcal{X}$}
indices $\leftarrow$ zeros($|\mathcal{X}|$) \;
Ns $\leftarrow$ zeros($|\mathcal{X}|$, $\ceil{\log_{|\mathcal{Y}|}(|\mathcal{X}|)} -2$) \;
\For{$y \in \mathcal{Y}$}{
idcs[$\rv Y == y$] $\leftarrow$  range(0,\ len($\rv Y == y$)) \;
}
\For{i $\leftarrow 0$ \KwTo $|\mathcal{X}|$}{
Ns[i,:] $\leftarrow$ base $|\mathcal{Y}|$ expansion of idcs[i] \;
}
\caption{$\rv Y$ decomposition of $\rv X$ through base expansion}\label{algo:baseb}
\end{algorithm}

In the main paper and \cref{sec:appx_reindexing}, we discussed how estimate to estimate $\DIFxzCy{}$ by uniformly sampling $\rv N \in \decxy{}$.
As previously mentioned, many $\rv N \in \decxy{}$ will actually be redundant and have the same $\V$-information.
It thus makes sense to only using $\rv N$s which are (approximately) mutually independent so as to minimize redundancies.
We do so by assigning to each $\mathcal{X}_y$ a certain index and then computing the base $|\mathcal{Y}|$ expansion of that index.
For example, in the case of binary cat-dog classification, we would assign some index to all cats and have $\rv N$ be the binary expansion of that index. 
Using base $|\mathcal{Y}|$ indexing gives a set $\{ \rv N_i \}_i$ of $\ceil{\log_{|\mathcal{Y}|} |\mathcal{X}| - 2}$ elements, which ensures that
\begin{inlinelist}
\item each of the $\rv N$ is a deterministic function from $\mathcal{X}_y \to \mathcal{Y}$ and thus part of $\decxy{}$;
\item each of the $\rv N$ are (approximately) uncorrelated and thus will not be redundant.
\end{inlinelist}
The algorithm to compute the set of $\rv N$ from which we estimate $\DIFxzCy{}$ is described in \cref{algo:baseb}.

\begin{figure}
\centering
  \begin{subfigure}{0.49\textwidth}
\includegraphics[width=\textwidth]{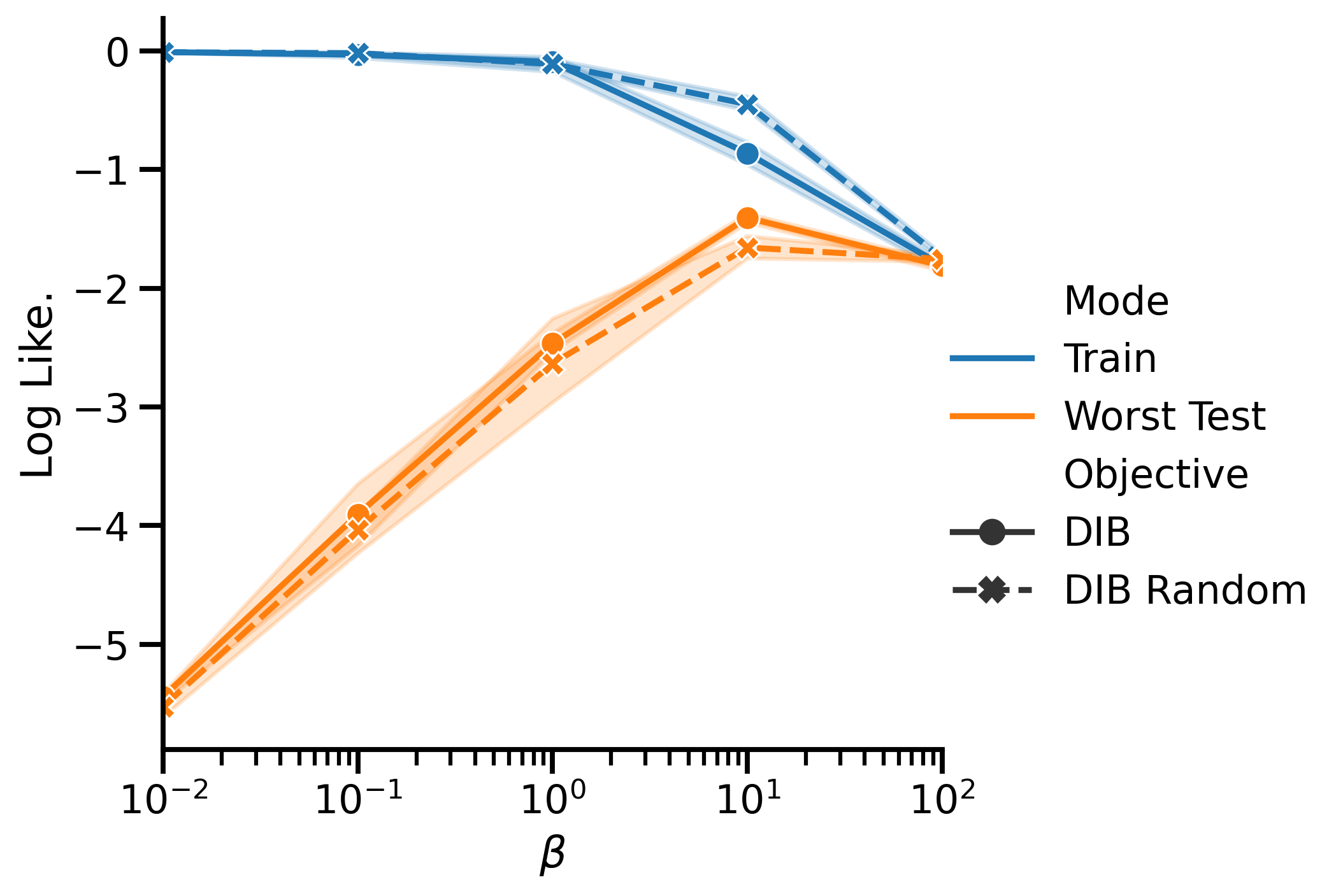} 
  \caption{Generalization Gap}
  \label{fig:randlab_gap_loglike}
 \end{subfigure}
  \begin{subfigure}{0.49\textwidth}
\includegraphics[width=\textwidth]{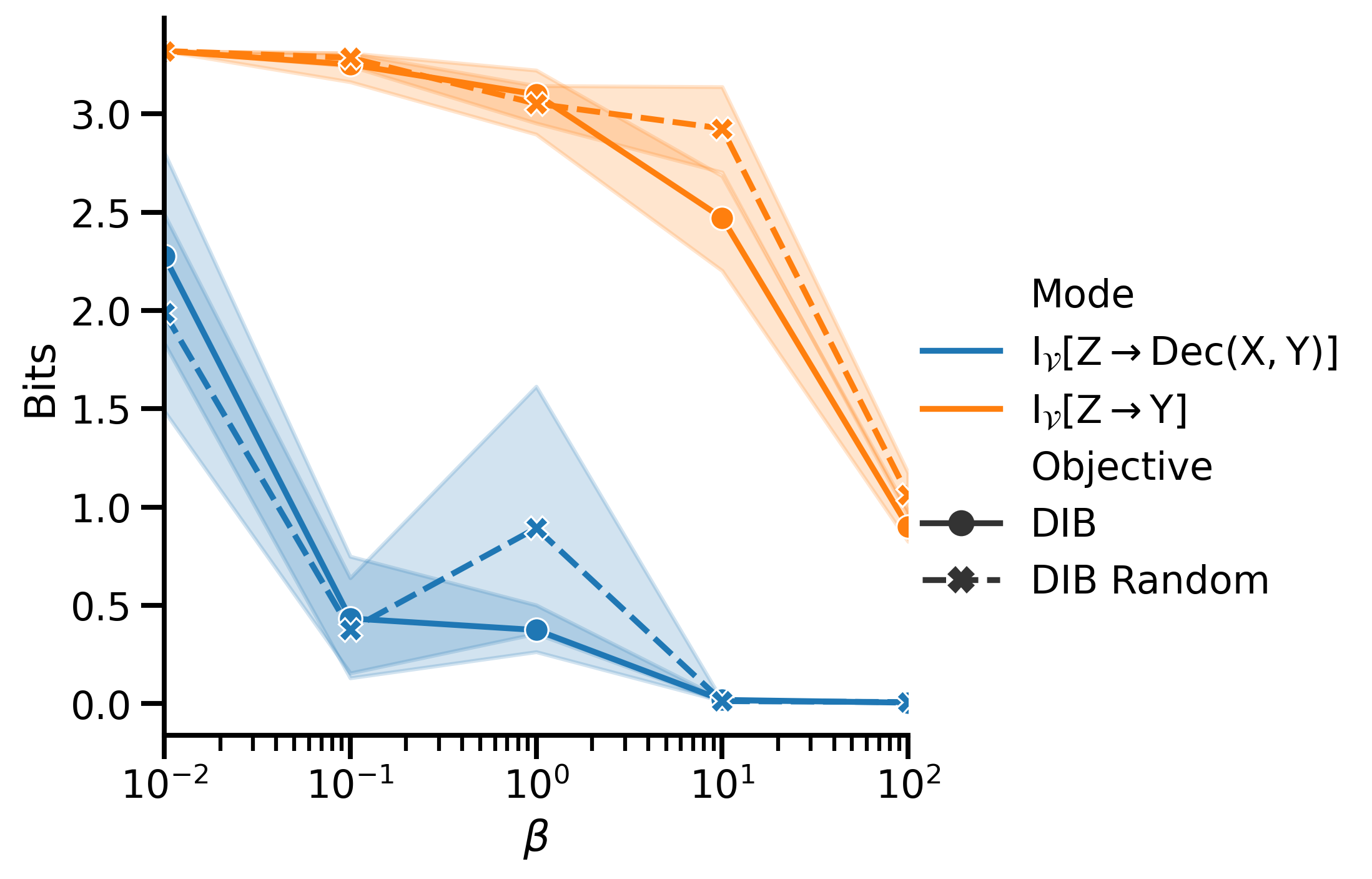} 
  \caption{Terms in DIB}
  \label{fig:randlab_Vbits_all}
 \end{subfigure}
\caption{
Effect of using Base $|\mathcal{Y}|$ expansion (labeled ``DIB'') vs. randomly selecting $\rv N$ from the set of $\rv Y$ decompositions of $\rv X$ (labeled ``DIB Random'')  on
(a) the worst case generalization gap; (b) the terms estimated by DIB. 
All other hyper-parameters are the same as for \cref{fig:qminimality}.
}
\label{fig:randlab}
\end{figure}

\Cref{fig:randlab} shows the effect of using the $y$ decomposition of $\rv X$ through base $|\mathcal{Y}|$ expansion, instead of randomly sampling labels $\rv N \in \decxy{}$.
We see that although differences are not large, the base expansion is better.
At the optimal $\beta=10$, using base $|\mathcal{Y}|$ expansion gives a test log likelihood of $-1.41 \pm 0.05$ vs. $-1.66 \pm 0.09$.
Note that the base $|\mathcal{Y}|$ expansion does not incur any additional computational costs nor does it have any other drawbacks that we know about.

\subsection{Sharing Predictors of $\{ \decxy{} \}_{y \in \mathcal{Y}}$ for Batch Training}
\label{sec:appx_cdib}

\begin{figure}
\centering
\begin{subfigure}{0.49\textwidth}
\includegraphics[width=\textwidth]{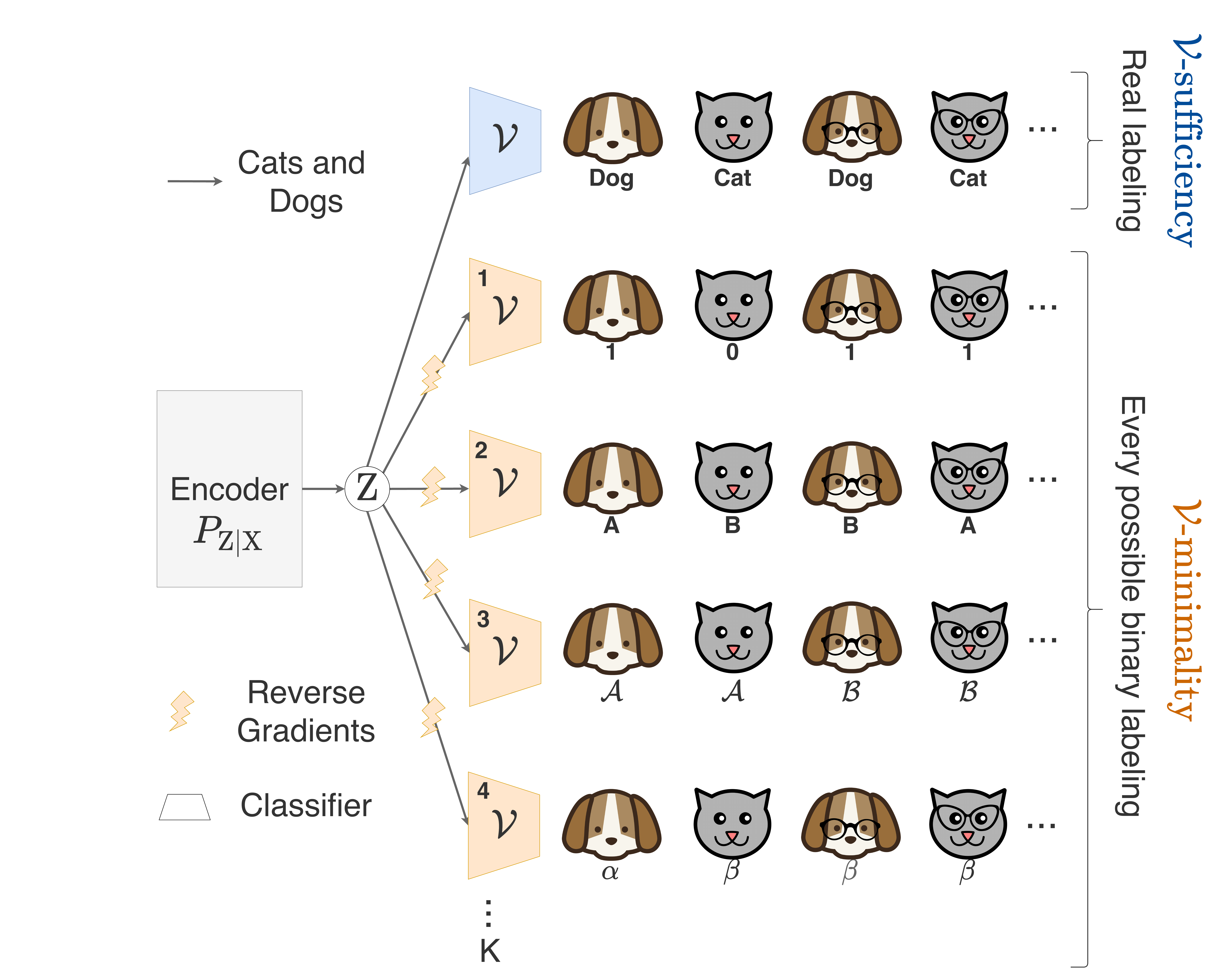} 
  \caption{DIB for Batch Training}
  \label{fig:dib_batch}
 \end{subfigure}
  \begin{subfigure}{0.49\textwidth}
\includegraphics[width=\textwidth]{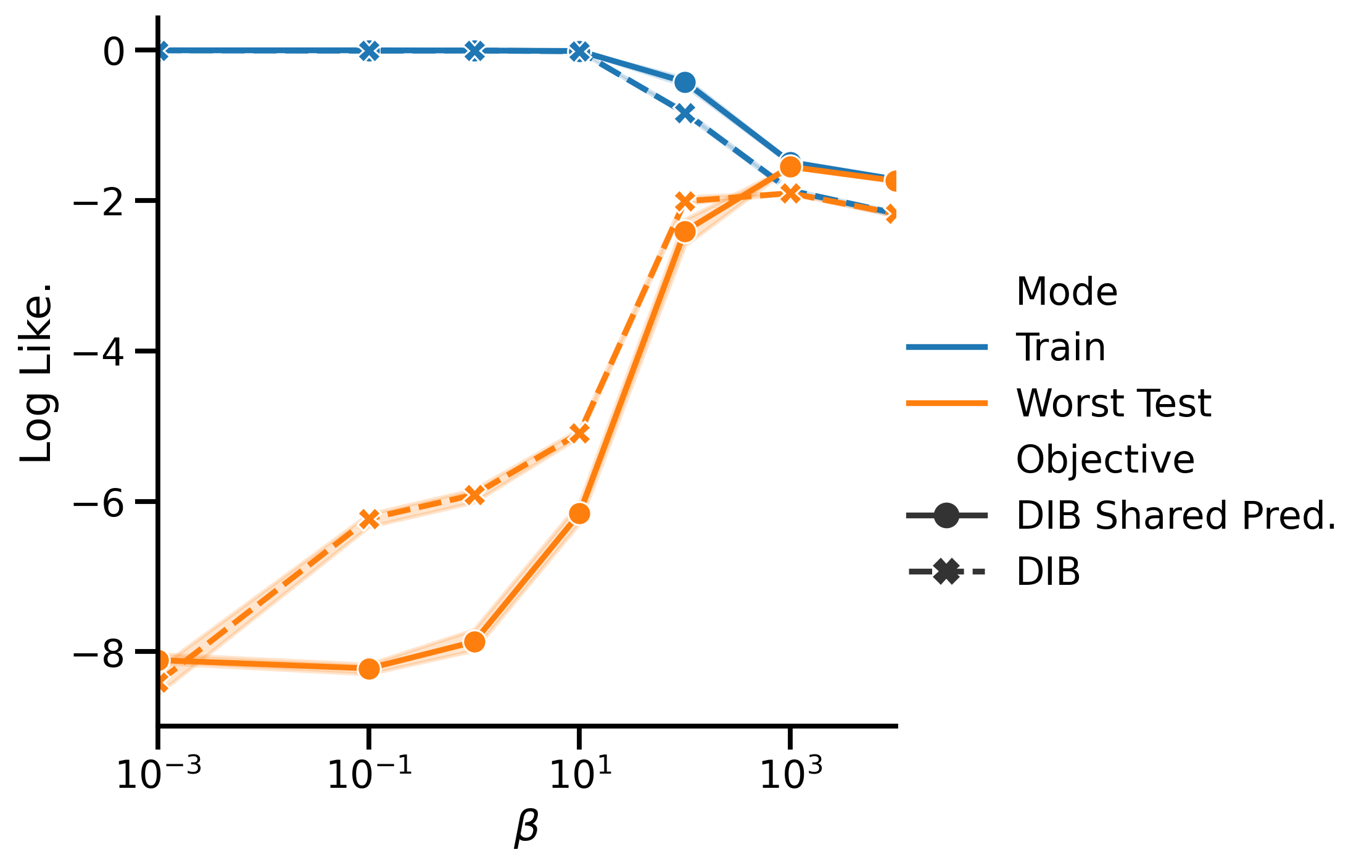} 
  \caption{Generalization Gap}
  \label{fig:conditional_dib}
 \end{subfigure}
\caption{(a) Schematic illustration of using the predictor / $\V$-minimality head for a set $\{ \decxy{} \}_{y \in \mathcal{Y}}$ which is more amenable to batch training than the standard way of one predictor / $\V$-minimality head for each $\decxy{}$ shown in \cref{fig:dib_neural_net}.
(b) Effect on Alice's log likelihood when sharing the predictors (labeled ``DIB Shared Pred.'') compared to no sharing (labeled ``DIB.'').}
\label{fig:cdib}
\end{figure}

In \cref{eqn:DIFxzCy} we see that every example has to be treated differently depending on its underlying label $y \in \mathcal{Y}$.
Indeed, $\decxy{}$ depends on the underlying label $y$.
In practice this means having a ``for loop'' over $y \in \mathcal{Y}$ (see \cref{fig:dib_practice}) and using a different $\V$-minimality head for each $\decxy{}$.
This makes DIB hard to take advantage of the standard batch GPU training, where all examples in a batch are assumed to go through the same predictor regardless of their underlying label.
Here we investigate whether DIB can be modified to take advantage of batch training by having a \textit{single} predictor for a set of nuisance r.v. $\{ \decxy{} \}_{y \in \mathcal{Y}}$ as seen in \cref{fig:dib_batch}, i.e. treating all representations $z \sim \rv Z$ the same way instead of having to distinguish them based on their underlying label $z \sim \rv Z_y$.
This has the same under underlying computational complexity, but it has the advantage of being trainable in batches.
Interestingly, \cref{fig:conditional_dib} shows that sharing the predicors (``DIB shared Pred.'') reaches a better test performance in practice. 
This is probably an artefact of the values $\beta$ we are sweeping over, but it nevertless shows that one can perform well by sharing the predictors and this take advantage of batch training. 
\subsection{Searching for an ERM That Does Not Generalize}
\label{sec:appx_antireg}

In \cref{sec:qmin} we briefly outlined a method to test \cref{theo:opt_qmin}, which states that \textit{all} ERMs should generalize well when trained from $\V$-minimal $\V$-sufficient representations.
In other words, no ERM should have a non-zero generalization gap.
Since we can only approximate $\V$-minimal $\V$-sufficient representations, our aim is to show that no ERM predicting from such a representation will incur a large generalization gap. Of course, it is infeasible to train all possible ERMs and then check that each generalizes well. So instead we directly search for the ERM with the largest generalization gap (worst case). We expect from \cref{theo:opt_qmin} that even this ERM will have a small gap.
Specifically, we want to maximize the test loss under the empirical risk minimization constraint:

\begin{align}
\begin{split} \label{eq:bad_erm}
\arg \max_{f \in V} & \quad  \Risk{}
 \\
\text{s.t.}
&  \quad  \eRisk{} = \min_{f \in V} \eRisk{}
\end{split}
\end{align}

Using a Lagrangian relaxation of \cref{eq:bad_erm} and flipping the sign, our objective is then:

\begin{equation}
\label{tomtrick}
\arg \min_{f \in V} \eRisk{}  - \gamma  \Risk{} 
\end{equation}

We thus minimize the training loss as usual while maximizing the test loss times a factor $\gamma$. This can easily be optimized by training on training \textit{and} test examples, but multiplying all the gradients of test examples by $- \gamma$.
Note that this is the same loss used by \cite{tom2019trick}.

\begin{figure}
\centering
\includegraphics[width=\textwidth]{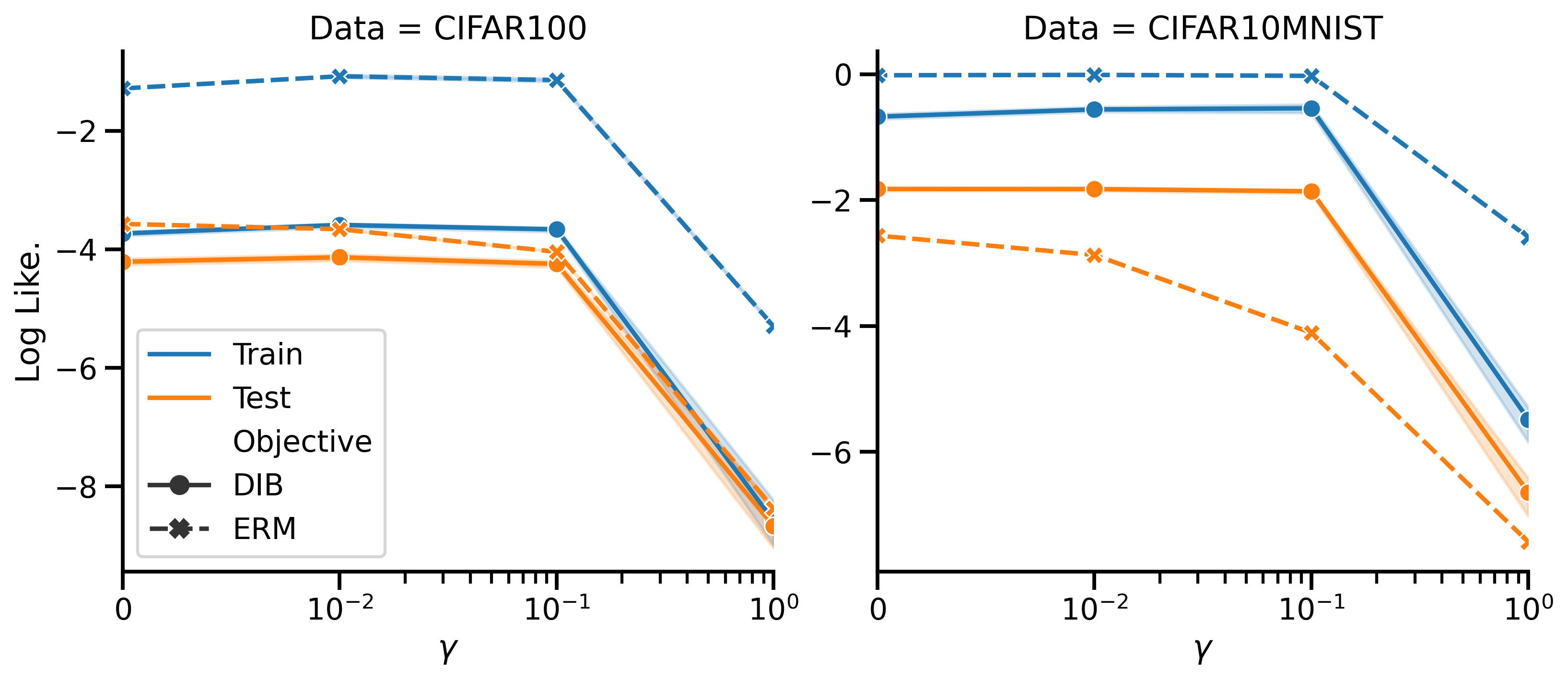}
\caption{Sweeping over $\gamma$ to find a poorly generalizing ERM. 
As $\gamma$ increases, the test performance decreases without having much effect on the training performance, until approximately $\gamma=0.1$. In these experiments, Bob learns representations with either joint ERM (labeled ERM) or DIB. Alice then trains a decoder from Bob's representation using Eq. \ref{tomtrick}. Left plot shows results on CIFAR100, right plot is for our CIFAR10+MNIST dataset.
}
\label{fig:gammas}
\end{figure}

In order to find an $f$ that is a poorly generalizing ERM, we sweep over values of $\gamma$ and select the largest such that $f$ is (approximately) an ERM.
\Cref{fig:gammas} shows that $\gamma=0.1$ seems to be a good value for both datasets, to ensure that $f$ is approximately an ERM but performs as poorly as possible on test.
We thus use this value for all ``worst case'' experiments in the paper.
\ydnote{Add more sweeps of gamma for arxive}
\subsection{Optimality of $\V$-Sufficiency in Various Settings}
\label{sec:QsufAll}

\begin{figure}
\centering
\begin{subfigure}{0.77\textwidth}
\includegraphics[width=\textwidth]{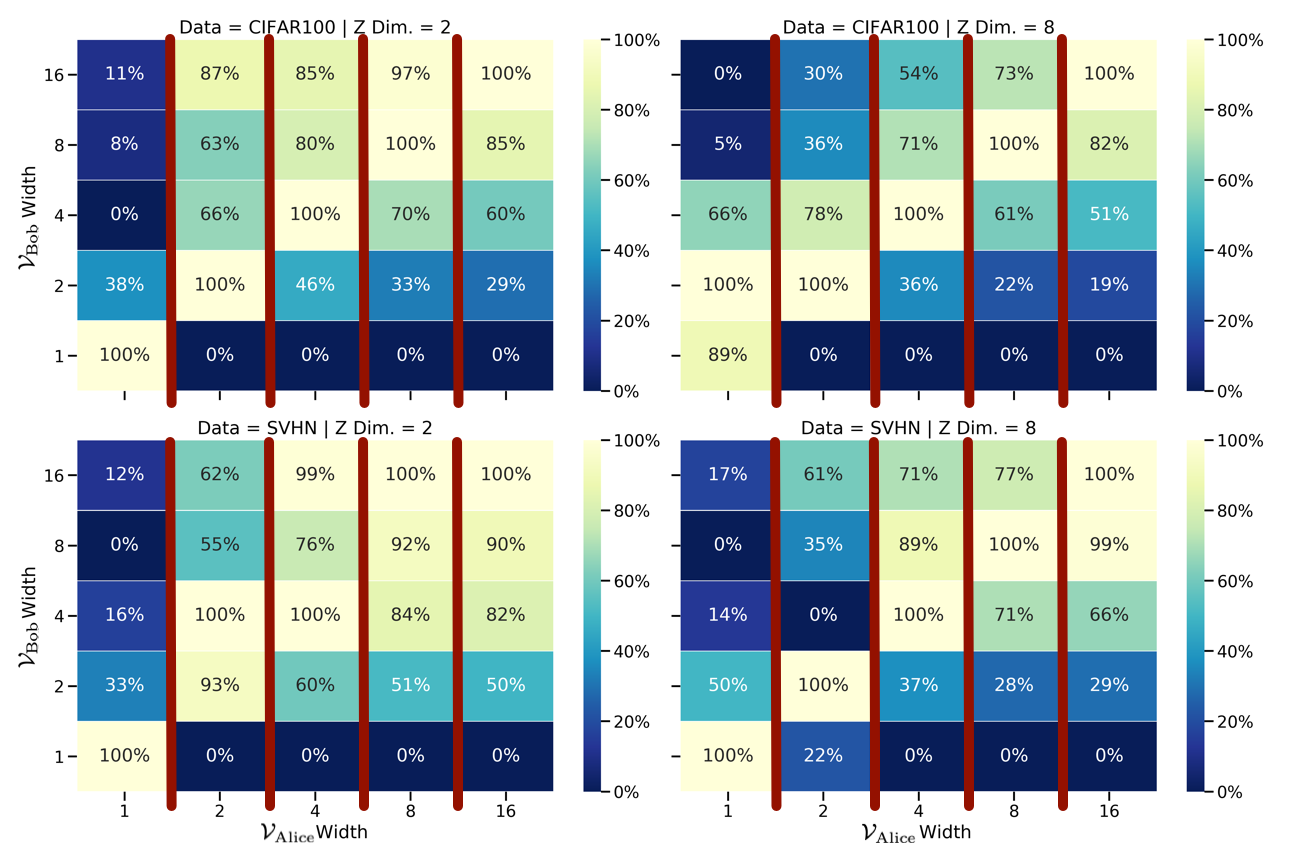} 
  \caption{MLP}
  \label{fig:qsuff_all_mlp}
 \end{subfigure}
  \begin{subfigure}{0.77\textwidth}
\includegraphics[width=\textwidth]{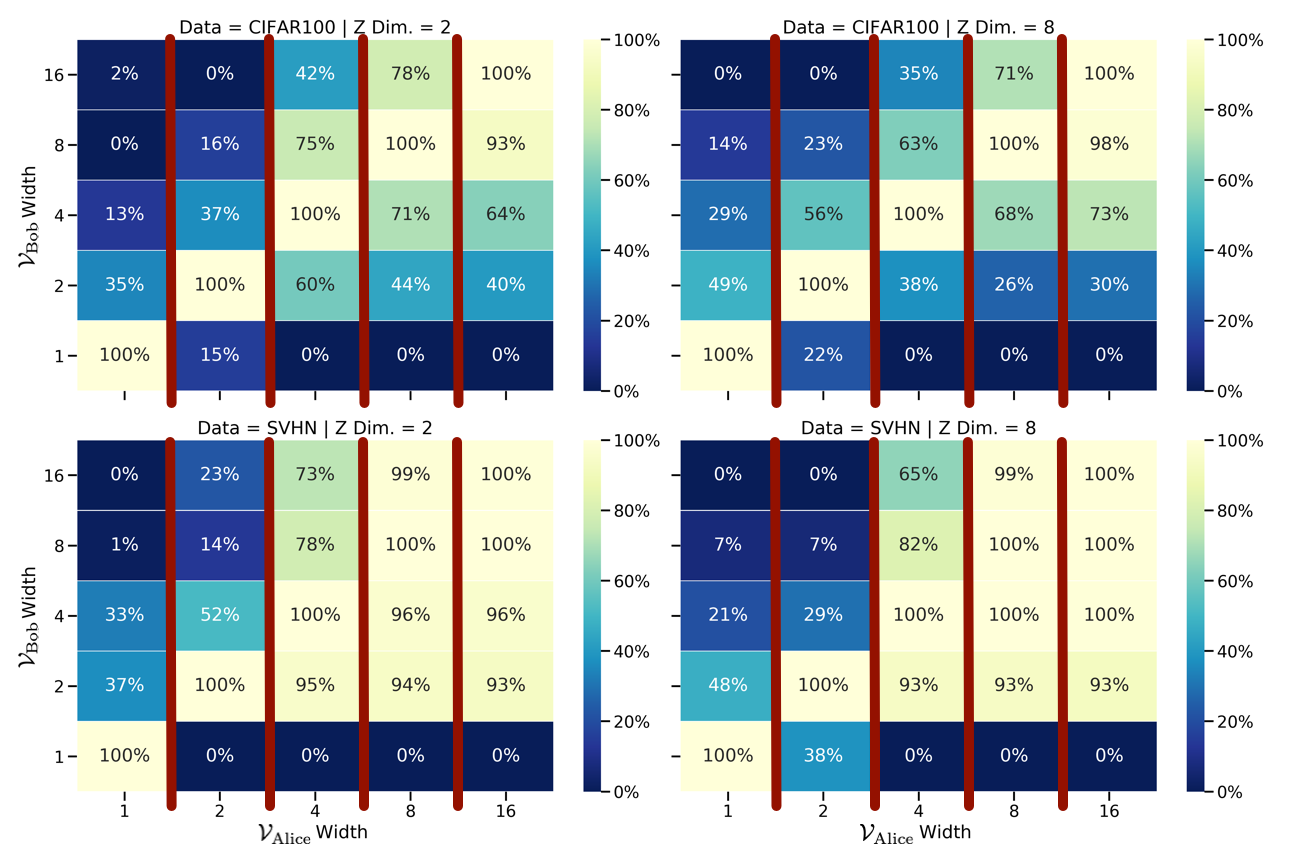} 
  \caption{ResNet18}
  \label{fig:qsuff_all_resnet18}
 \end{subfigure}
\caption{Optimality of $\V$-sufficiency for different hyperparameters. 
Both plots show the comparative training performance of $\V_{Bob}$-sufficient representations for classifiers in $\V_{Alice}$. As in the main text, the log likelihood is scaled to lie in the range $[0,\dots,100]$ for each column.
The predictive families $\V_{*}$ are single MLPs with varying width. (a) shows \MLP{} encoders, where the left and right columns use 2 and 8 dimensional Z, respectively, and the rows are CIFAR100 and SVHN. (b) is the same as in (a) but with a ResNet18 encoder.
}
\label{fig:qsufAll}
\end{figure}

In \cref{fig:scaled_qsuf} we have provided experimental evidence for the optimality of $\V$-sufficient representations for the considered setting (CIFAR100, 8-dimensional representations, ResNet18).
Here we show that similar conclusions hold for CIFAR100 and SVHN, with 2- or 8-dimensional representations, and with ResNet18 or \MLP{} encoders.

\cref{fig:qsufAll} summarizes all the results under various settings. 
Similarly to \cref{fig:scaled_qsuf} we see that for most $\V_{Alice}$ the empirical optimal representation is recovered by maximizing $\DIF{Y}{Z}$. The 3 exceptions (e.g. SVHN, MLP, 2 dimenional representation, width 2 $\V_{Alice}$) out of the 40 possible $\V_{Alice}$ in each setting are likely due to optimization issues.
Notice that the results for SVHN with a ResNet18 encoder show that when the width of $\V_{Alice}$ and $\V_{Bob}$ are both larger, performance becomes less dependent on $\V_{Bob}$.
We investigate this phenomenon in the following subsection \cref{sec:QsufXL}.

\subsection{The Surprising Effect of Large Neural Families Trained with SGD}
\label{sec:QsufXL}

\begin{figure}
\centering
\begin{subfigure}{0.47\textwidth}
\includegraphics[width=\textwidth]{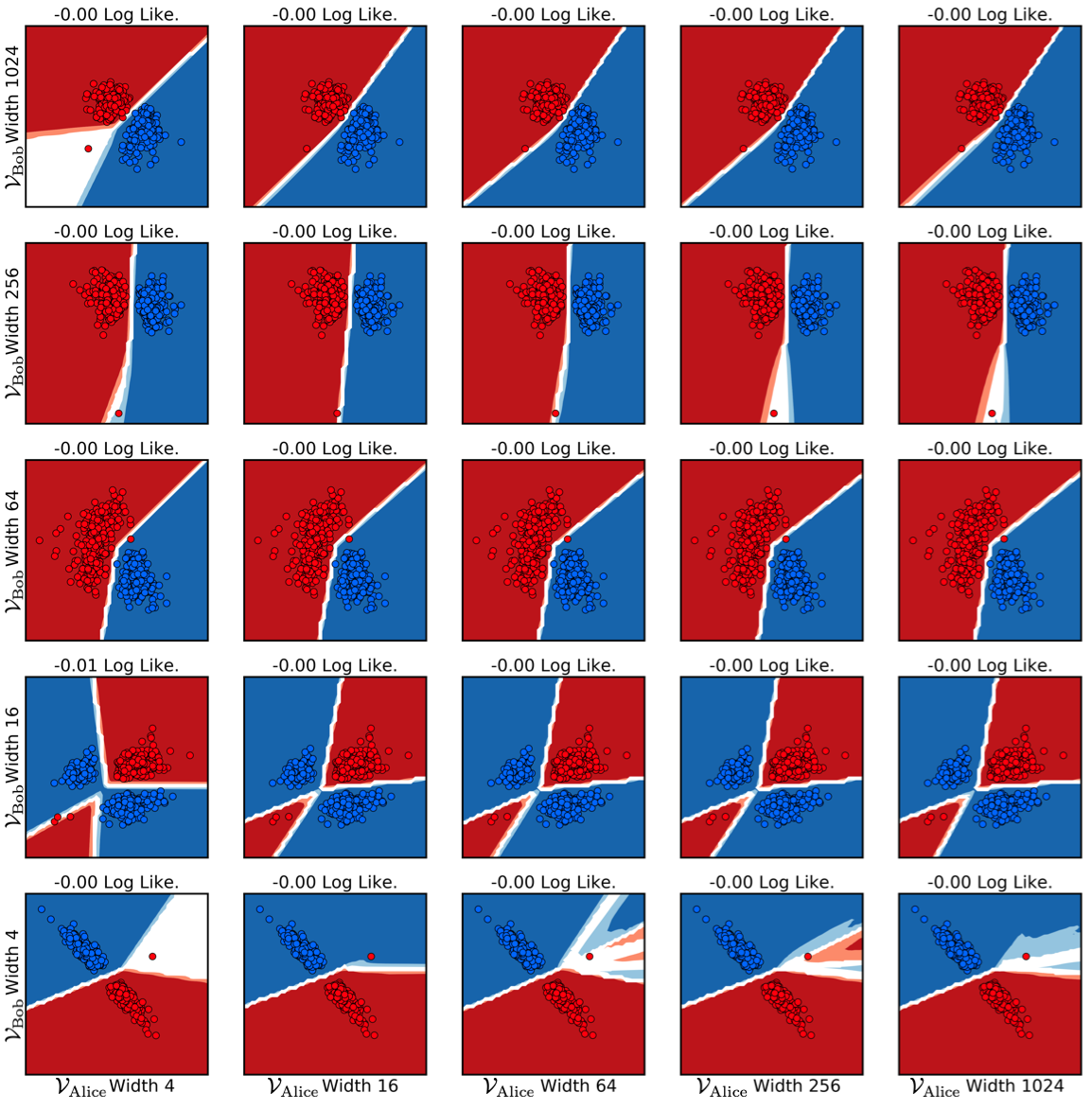} 
  \caption{2D Visualization}
  \label{fig:2d_qsufXL}
 \end{subfigure}
  \begin{subfigure}{0.47\textwidth}
\includegraphics[width=\textwidth]{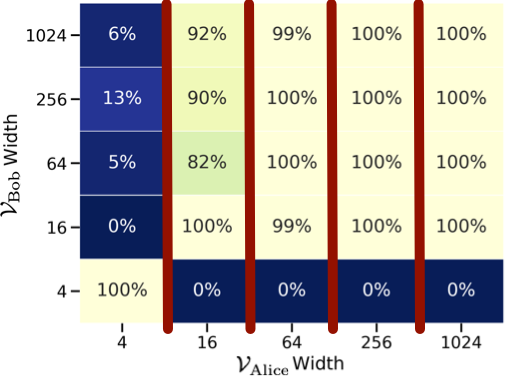} 
  \caption{Scaling Up}
  \label{fig:scaled_qsufXL}
 \end{subfigure}
\caption{Optimality of $\V$-sufficiency in larger functional families.\david{Log like in (a) isn't printed properly}
Plots are the same as in \cref{fig:qsuf} but for widths $[4,16,64,256,2014]$ instead of $[1,2,4,8,16]$. Notably, the representations learned by Bob become nearly linearly decodable for the largest $\V_{Bob}$.
}
\label{fig:qsufXL}
\end{figure}

\Cref{fig:qsufXL} shows the same results as \cref{fig:qsuf} for much larger widths of $[4,16,64,256,1024]$ instead of $[1,2,4,8,16]$.
\Cref{fig:scaled_qsufXL} shows that $\V_{Bob}=\V_{Alice}$ is still optimal but  the difference for using $\V_{Alice}$ larger than $\V_{Bob}$ is much less pronounced and mostly disappears at the largest widths.
For example, the difference in performance when $\V_{Bob}$ has width 4 and $\V_{Alice}$ has width 16 is much larger than the difference in performance when $\V_{Bob}$ has width 256 and $\V_{Alice}$ has width 1024.

This seems to imply that the larger the functional families, the more similar they become. However, \Cref{fig:qsufXL}a suggests a simpler explanation. 
Notice that that when $\V_{Bob}$ is very large, the representation that is learned is more linearly decodable.
Recall that from \cref{fig:sweep} larger functional families are indeed more powerful, however it seems that once networks are wide enough, SGD favors the learning of classifiers that are very simple, and thus the functional family does not need to be matched.
This notion echoes 
the fact that SGD is known to learn simple classifiers first \cite{KalimerisKNEYBZ19}.

We emphasize that this behavior does not contradict our theoretical prediction that the optimal setting for Bob is to set   $\V_{Bob}=\V_{Alice}$.
Instead, it simply illustrates that if both Alice and Bob use large neural networks with standard initialization and train with SGD, the difference in the full expressivity of the models is not accessed. It is still provably better to use the same family, and we never observe performance to be less than optimal when the families are chosen to match.

\subsection{Effect of $\beta$ on Different $*$-Minimality Terms}
\label{sec:appx_Qs}

\begin{figure}
\centering
\begin{subfigure}{0.49\textwidth}
\includegraphics[width=\textwidth]{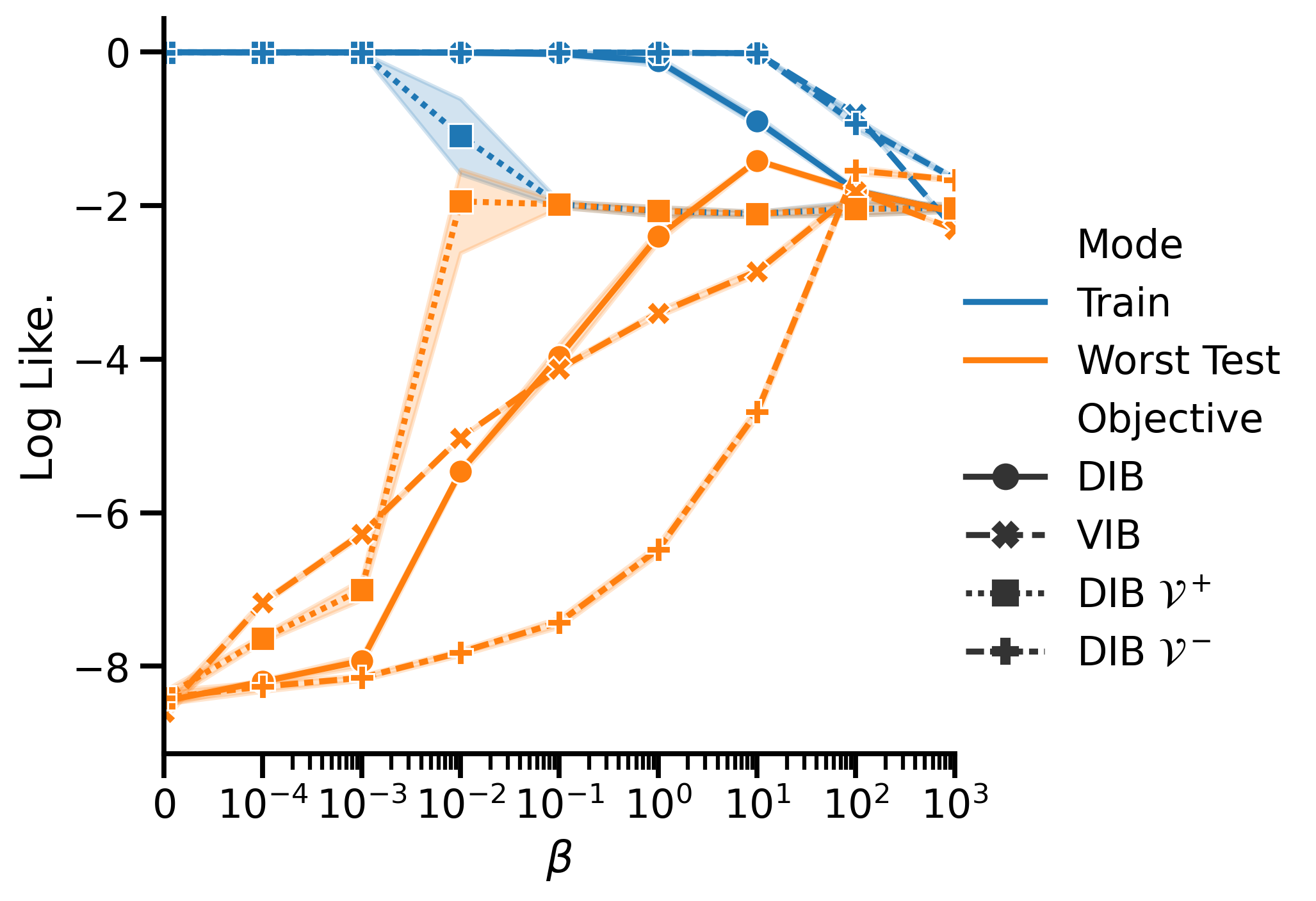} 
  \caption{Generalization Gap}
  \label{fig:qminimality_gap_loglike_all}
 \end{subfigure}
  \newline
  \begin{subfigure}{0.49\textwidth}
\includegraphics[width=\textwidth]{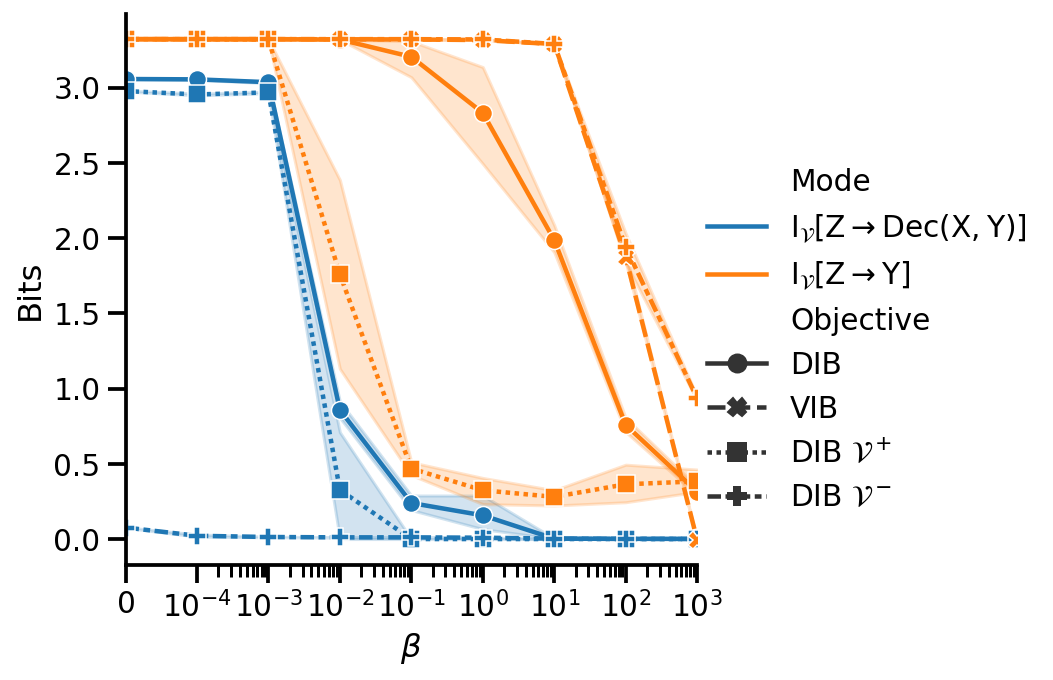} 
  \caption{Terms in DIB}
  \label{fig:qminimality_Vbits_all}
 \end{subfigure}
 \begin{subfigure}{0.45\textwidth}
\includegraphics[width=\textwidth]{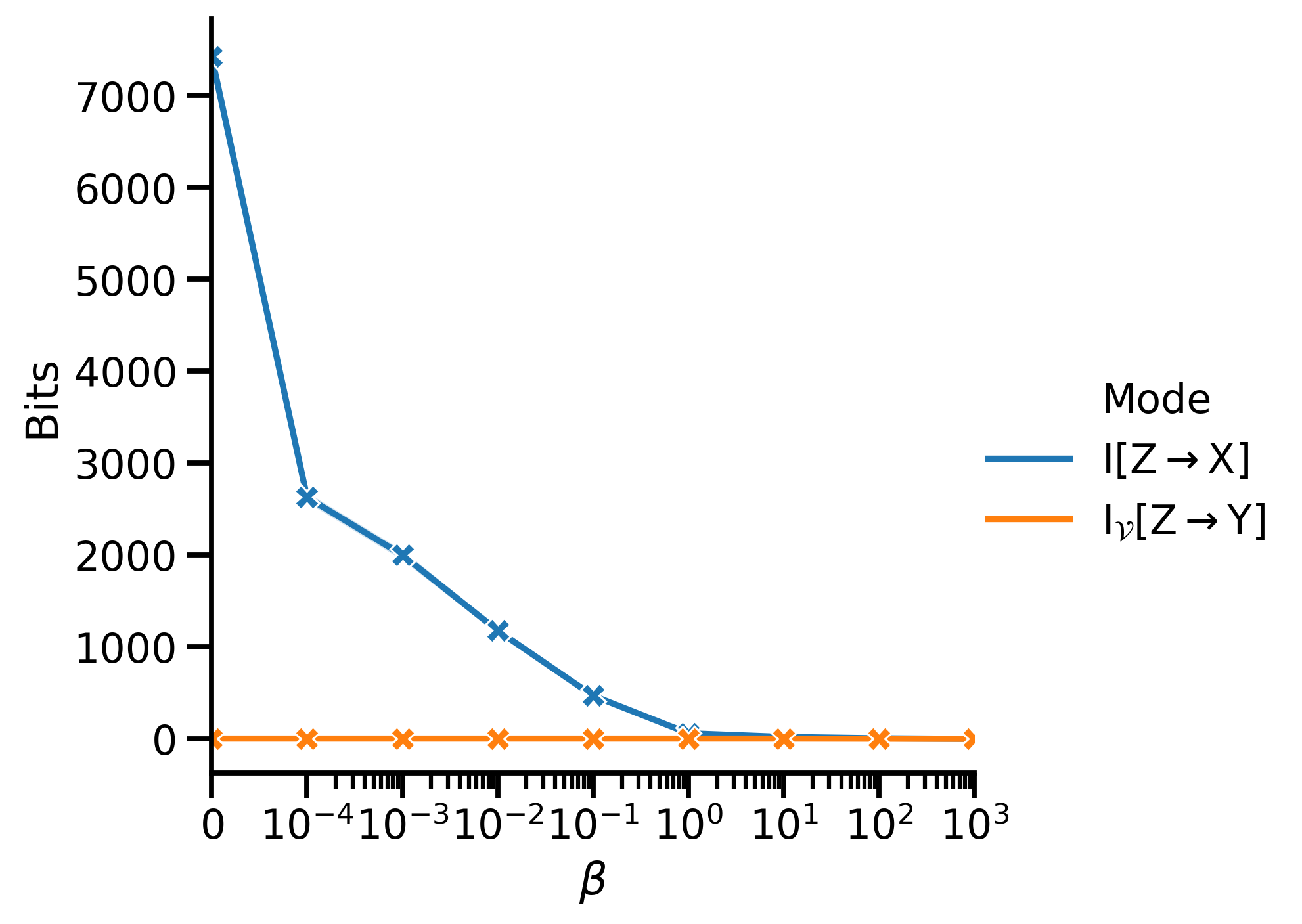} 
  \caption{Terms in VIB}
  \label{fig:qminimality_Vbits_vib}
 \end{subfigure}
\caption{
Effect of $\beta$ on the worst case generalization gap, as well as the terms estimated by DIB (using different $\V$s for the minimality term) and VIB. 
(a) Train log likelihood and worst case test log likelihood of the different predictors; 
(b) Estimated $\DIF{Y}{Z}$ and $\DIFxzCy{}$, $\DIFpxzCy{}$, and $\DIFmxzCy{}$;
(c) Estimated $\DIF{Y}{Z}$ and (a variational estimate of) $\MI{Z}{X}$ for VIB \cite{alemi2016deep}.
Similarly to \cref{fig:qminimality} all experiments ran on cifar10 and the results show the average over 5 runs as well as $95\%$ bootstrap confidence interval.
}
\label{fig:qminimality_all}
\end{figure}

In \cref{fig:qminimality_gap_loglike} and \cref{fig:qminimality_Vbits}, we showed the effect of varying the $\beta$ of Bob's DIB objective on Alice's worst-case performance. We also plotted the estimated value of the individual $\V$-sufficiency and $\V$-minimality terms.
\Cref{fig:qminimality_all} shows the same plot but for different $*$-minimality terms in Bobs objective, in particular: $\V$-minimality (single-layer MLP with 128 hidden units), $\V^+$-minimality (single-layer MLP with 8192 hidden units), $\V^-$-minimality (single-layer MLP with 2 hidden units), and
variational minimality (using VIB's bound \cite{alemi2016deep}).
For each objective, the  best (over $\beta$) test performance is transcribed in \cref{table:worstcase}.

\Cref{fig:qminimality_gap_loglike_all} shows that $\V^+$-minimality gives rise to an objective that may be more robust to the choice of $\beta$ but also exhibits higher variance (in line with the estimation bounds from \cite{xu2020theory}).
\Cref{fig:qminimality_Vbits_all} shows that the representation can be $\V^-$-minimal (that is, $\DIFmxzCy{}$ is close to zero) but have little effect on the $\V$-sufficiency term (large $\DIF{Y}{Z}$).
\yann{is that last sentence of any use ?}\david{This is mainly saying VIB is harder to compare with V-minimality? I took out..}

\subsection{$\V$-Minimality as a Regularizer}
\label{sec:appx_quant_2player}

In the main text, we have seen that minimizing $\DIFxzCy{}$ is theoretically optimal (\cref{theo:opt_qmin}) and empirically outperforms other regularizers in our two stage setting (\cref{table:worstcase}).
It is thus natural to ask whether $\DIFxzCy{}$ can also perform well as a regularizer in a standard neural network setting.

\begin{table}[ht!]
\centering
\caption{Evaluation of regularizers for permutation invariant classification (test accuracy).}
\label{table:perminv}
\begin{tabular}{@{}lrrrrrrrr@{}}
\toprule
& No Reg.  %
& Stoch. Rep.    %
& Dropout  %
& Wt. Dec.   %
& VIB    
& DIB %
\\ \midrule
MNIST
& $98.29 { \scriptstyle \,\pm\, .05 }$   %
& $98.33 { \scriptstyle \,\pm\, .04 }$   %
& $98.68 { \scriptstyle \,\pm\, .04 }$  %
& $98.49 { \scriptstyle \,\pm\, .04 }$  %
& $98.63 { \scriptstyle \,\pm\, .04 }$  %
& $\textbf{98.69} { \scriptstyle \,\pm\, .03 }$  %
\\
CIFAR10MNIST
& $46.49 { \scriptstyle \,\pm\, .07 }$   %
& $47.23 { \scriptstyle \,\pm\, .13 }$   %
& $\textbf{48.86} { \scriptstyle \,\pm\, .17 }$  %
& $44.86 { \scriptstyle \,\pm\, .06 }$  %
& $46.38 { \scriptstyle \,\pm\, .01 }$  %
& $48.07 { \scriptstyle \,\pm\, .10 }$  %
\\ \bottomrule
\end{tabular}
\end{table}

We investigate this question in the same setting as \citet{alemi2016deep}, where the neural network is an MLP, thus treating the pixels as permutation invariant.
We use the same hyperparameters as \cite{alemi2016deep}: 1e-4 learning rate with exponential decay of factor 0.984, Adam optimizer, 200 epochs, trained on the train and validation set, batch size 100, 256 dimensions for $\rv Z$, $\mathcal{X}-1024-1024-\mathcal{Z}$ MLP encoder, logistic regression classifier $\V$.
The only known difference being that we do not use exponential moving average (we did not test with it). 
We jointly train Bob and Alice (``1 Player, Avg ERM'' in \cref{sec:hyperparam_qsmin}).

We evaluate the model on MNIST (as done in \citet{alemi2016deep}) as well as on our CIFAR10+MNIST dataset.
\Cref{table:perminv} compares the test accuracy of DIB with the same regularizers as in \cref{table:worstcase}, the only difference being that all the regularizers are applied both on Bob and Alice (as it is now a single network trained jointly).
For dropout the rate is droping rate 50\%, for weight decay it uses a factor of 1e-4, VIB uses $\beta=1e-3$, DIB uses $\beta=0.1$.
We see that DIB performs best along with Dropout on MNIST, and performs second best after dropout on CIFAR10MNIST.

Although DIB performs well, it does not stand out as much as in the other settings we investigated in the main paper.
We suggest a few potential explanations:
\begin{inlinelist}
\item As is standard practice, we evaluate on accuracy, while our theory only speaks to log likelihood performance (although recall that \cref{table:correlation} does show a strong correlation with accuracy generalization gap);
\item With standard training methods (large learning rate and avg ERM), neural networks generalize relatively well without the need for regularizers, as seen by the strong performance of the stochastic representation baseline in \cref{table:perminv}.
\item Our representations work well for a downstream ERM, they only regularize the model by the representation. In the single player game setting, other methods (such as dropout) regularize both the representation and the downstream classifier, which is not discussed in our theory.
\end{inlinelist}

\subsection{Additional Correlation Experiments}
\label{sec:appx_correlation}
In this section, we expand on the results in~\cref{sec:beyondworst} in the main paper and
show how our approach compares to our 
implementation of sharpness
(the best generalization measure from
~\citep{jiang2019fantastic}) in a setting
with heterogeneous model and dataset choices.

We hypothesize that $\V$-minimality should
be a fairly model- and dataset- agnostic 
measure of generalization. 
Indeed, $\DIFxzCy{}$ has the advantage of being a measure in $[0,\log |\mathcal{Y}|]$, which is 0 when $\V$-minimal (\cref{proposition:Vminsuff}) and seems to be monotonically decreasing with the generalization capacity of a model (\cref{table:correlation}).

In order to study this hypothesis we sweep over different hyper-parameters across two datasets (CIFAR-10 and SVHN) and 
two models (ResNet18 and a $\mathcal{X}$-2048-2048-2048-$\mathcal{Z}$ MLP) each followed by a $\mathcal{Z}$-128-128-$\mathcal{Y}$ MLP, and 5 seeds.
The difference with the experiments in the main paper is that we do not run all possible combination of hyperparemters (computationally prohibitive as we already have 2*2*5=20 models for each seed,data,architecture) but rather sweep over one hyperparameter at the time and compute then average rank correlation.
Here are the hyperparameters we sweep over:
\begin{inlinelist}
\item learning rates (1e-3,1e-4,1e-5,1e-6);
\item weight decay (1e-6,1e-5,1e-4,1e-3,1e-2,0.1);
\item dropout (0.,0.1,0.2,0.3,0.4,0.5,0.6);
\item $\rv Z$ dimensionality (8,32,128,512,2048).
\end{inlinelist}
We additionally train a set of models using VIB with different $\beta$ values (100,10,1,1e-1,1e-2,1e-3).
This gives a total of $5*2*2*(4 + 6 + 7 + 5 + 6)=560$ models, from which we only keep models that reach a training loss of 0.01.
Similar to the main experiment in the paper, we compute
the correlation between the probes and
the observed generalization gap, with the difference being
that the correlation is now computed across experiments
with both the datasets as well as models while varying only one hyperparameter at a time.

\begin{table}[ht!] 
\footnotesize
\caption{Evaluation of our probe and sharp mag. in settings with different datasets and architectures }
\label{table:correlation_ours}
\begin{center}
\begin{tabular}{@{}llccccc@{}}
\toprule
&        & 
    W. Dec & 
    $|\mathcal{Z}|$     & 
     VIB     & 
    Lr   &
    Dropout   \\ 
    \midrule
    \multirow{2}{*}{$\tau_{log like.}$}   & 
    $\V$                & 
    $\textbf{0.67} $  & 
    $\textbf{0.50} $  & 
    $\textbf{0.45} $  & 
    $\textbf{0.62} $  &
    $ \textbf{0.09} $   \\
    & Sharp Mag.     & 
    $0.05$           & 
    $-0.27$           & 
    $-0.16 $           & 
    $0.09$           & 
    $-0.36$ \\
    \multirow{2}{*}{$\tau_{acc.}$}                           & 
    $\V$                                         & 
    $ \textbf{0.63} $ & 
    $ \textbf{0.52} $ & 
    $ \textbf{0.49}$ & 
    $ \textbf{0.56} $   &       
    $ \textbf{0.06} $          \\
    & Sharp Mag.     & 
    $ -0.03$          & 
    $ -0.09$          & 
    $ -0.03$ & 
    $ 0.29 $ & 
    $ -0.19 $ \\ 
    \bottomrule
\end{tabular}
\end{center}
\end{table}

\Cref{table:correlation_ours} shows the performance of $\DIFxzCy{}$ probe compared to sharpness (the best performing baseline in \cref{table:correlation}).
We observe that our approach is significantly better
correlated than sharpness,
both in terms of generalization in terms of accuracy
as well as log-likelihood. 
This seems to support that $\V$-minimality
gracefully handles different datasets and model architectures,
providing reliable estimates of generalization across the spectrum.
In contrast, we see that the sharpness magnitude cannot be used to predict well generalization when sweeping over datasets and architectures.
This suggests that the intuitive idea behind $\DIFxzCy{}$ (considering how easy it is to decode the training examples from a representation using the correct functional family) is useful and robust to predict generalization.


\end{document}